\documentclass{article}

\PassOptionsToPackage{numbers}{natbib}

\usepackage[preprint]{neurips_2026}

\usepackage[utf8]{inputenc} 
\usepackage[T1]{fontenc}    
\usepackage{hyperref}       
\hypersetup{hidelinks}
\usepackage{url}            
\usepackage{booktabs}       
\usepackage{amsfonts}       
\usepackage{nicefrac}       
\usepackage{microtype}      
\usepackage{xcolor}         
\usepackage[pdftex]{graphicx}
\usepackage{amsthm}
\newtheorem{proposition}{Proposition}

\newtheorem{lemma}{Lemma}
\usepackage{amsmath}
\usepackage{amssymb}
\usepackage{xcolor}
\usepackage{algorithm}
\usepackage{algpseudocode}
\newcommand{\sym}{\operatorname{sym}}
\algrenewcommand\algorithmicrequire{\textbf{Input:}}
\algrenewcommand\algorithmicensure{\textbf{Output:}}

\title{
Training-Free Generative Sampling via Moment-Matched Score Smoothing}

\author{%
  Zhenyu Yao \quad Daniel Paulin \\
  College of Computing and Data Science\\
  Nanyang Technological University\\
  \texttt{YAOZ0012@e.ntu.edu.sg \quad daniel.paulin@ntu.edu.sg} \\
}

\begin{document}

\maketitle

\begin{abstract}
    Diffusion models generate samples by denoising along the score of a perturbed target distribution. In practice, one trains a neural diffusion model, which is computationally expensive. Recent work suggests that score matching implicitly smooths the empirical score, and that this smoothing bias promotes generalization by capturing low-dimensional data geometry. We propose moment-matched score-smoothed overdamped Langevin dynamics (MM-SOLD), a training-free interacting particle sampler that enforces the target moments throughout the sampling trajectory. We prove that, in the large-particle limit, the empirical particle density converges to a deterministic limit whose one-particle stationary marginal is a Gibbs--Boltzmann density obtained by exponentially tilting a naive score-smoothed diffusion target. The mean and covariance of this distribution agree with the empirical moments of the training data. Experiments on 2D distributions and latent-space image generation show that MM-SOLD enables fast, robust, training-free sampling on CPUs, with sample fidelity and diversity competitive with neural diffusion baselines.

\end{abstract}

\section{Introduction}

Diffusion models \cite{sohl2015deep, ho2020denoising, song2019generative, song2020score} have become a dominant paradigm for generative modeling, producing high-fidelity samples across images, audio, molecular structures, graphs, and other domains \cite{ho2022cascaded, dhariwal2021diffusion, rombach2022high, kong2020diffwave, ho2022imagen, xu2022geodiff, niu2020permutation, kratsios2025sharp}. They generate samples by reversing a forward noising process, whose reverse drift depends on the score of the perturbed data distribution \cite{haussmann1986time}. This score is usually learned by score matching \cite{hyvarinen2005estimation}. On a finite dataset, however, the empirical score-matching problem admits an exact closed-form solution \cite{vincent2011connection}: the score of the Gaussian mixture induced by diffusing the empirical measure. Directly using this empirical score does not yield meaningful generation; it drives samples back to training points and therefore memorizes \cite{biroli2024dynamical}.

In practice, neural networks are trained to approximate the empirical score and nevertheless generalize to novel samples. Understanding this generalization has motivated a large body of work \cite{pidstrigach2022score, bonnaire2025diffusion, yoon2023diffusion}, alongside analyses of diffusion sampling targets and convergence \cite{chen2023score, chen2022sampling, de2022convergence, strasman2024analysis}. Yet the distribution actually sampled by a trained model remains difficult to characterize, and both training and sampling are computationally demanding, typically relying on GPUs \cite{salimans2022progressive}. Even with accelerated ODE samplers \cite{song2020denoising} and latent-space training \cite{vahdat2021score}, training cost remains a bottleneck in data-scarce, online, or latency-sensitive settings \cite{zheng2025diffusion}.

Recent work suggests that part of the neural inductive bias may be understood as smoothing of the empirical score \cite{ma2021linear, vardi2023implicit}. Since score matching targets the empirical score, such smoothing can turn a memorizing closed-form solution into a generative one. Theoretical studies show that score smoothing can preserve low-dimensional manifold structure of data and help explain the generalization of diffusion models \cite{farghly2025diffusion, chen2025interpolation, gabriel2025kernel}. In particular, \cite{farghly2025diffusion} shows that smoothing with a Gaussian kernel of suitable bandwidth can adapt to manifold structure, but the effect is highly sensitive to scale: too little smoothing fails to remove memorization, too much collapses the distribution toward barycenters of the training samples. Manifold-adapted kernels reduce this distortion but require geometric information unavailable in practice. \cite{scarvelis2023closed} has developed a training-free sampler with Gaussian-smoothed scores. However, it can also produce only barycenters of training samples.

In this work, we address this distortion by requiring that score smoothing interpolate locally between training samples without altering the global empirical geometry of the data. To this end, we introduce moment-matched score-smoothed overdamped Langevin dynamics (MM-SOLD), a training-free interacting particle sampler that evolves particles under the smoothed score while matching their empirical mean and covariance to those of the training data at every iteration. Rather than sampling independently, MM-SOLD couples particles through a centered and scaled Stiefel-type constraint, so that moment matching is built into the sampling dynamics. This changes the effect of smoothing: local interpolation is retained, while global barycentric collapse is substantially reduced.

Our contributions are: (i) MM-SOLD, a training-free generative sampler combining score-smoothed Langevin dynamics with moment-matched interacting particles, which reduces the barycentric distortion of naive score smoothing; (ii) a characterization of its large-particle limiting target as the solution of a moment-constrained variational problem—a Gibbs--Boltzmann density given by linear and quadratic tilting of the naive score-smoothed target—together with training-set estimators for the tilting parameters; (iii) a local nearest-neighbor score estimator with antithetic perturbations and projected-space Gaussian sampling, making MM-SOLD practical at high-dimensional scale; and (iv) experiments on 2D distributions and image generation showing fast, robust, CPU-based sampling with fidelity and diversity competitive with neural diffusion baselines.

The rest of the paper is organized as follows. Section \ref{sec:Score_smoothing_and_closed-form_diffusion_models} reviews score smoothing, its induced target, and the limitations of closed-form diffusion models. Section \ref{sec:Moment-matched_score_smoothing} introduces MM-SOLD, its constrained particle dynamics, the large-particle limiting target, and the nearest-neighbor score estimator. Section \ref{sec:Experiments} presents the main experiments. Full derivations, proofs, implementation details, and additional experimental results are provided in the appendix.

\section{Score smoothing and closed-form diffusion models}
\label{sec:Score_smoothing_and_closed-form_diffusion_models}




\subsection{Concepts and formulations}
Diffusion models aim to sample from a target distribution $\pi_{\text{data}}$ on $\mathbb{R}^d$, given only a finite dataset of samples $\{x_i\}_{i=1}^N$ independently drawn from it. They achieve this through simulating the time reversal \cite{haussmann1986time} of a forward noising stochastic differential equation (SDE),
\begin{equation}
\label{eq:forward_SDE}
\mathrm{d} X_t=-\alpha X_t \mathrm{d} t+\sqrt{2} \mathrm{d} B_t, \quad X_0 \sim \pi_{\text{data}},
\end{equation}
for some $\alpha \geq 0$, and the corresponding reverse process $(Y_t)_{t \in[0, T]}$ satisfies
\begin{equation}
\label{eq:reverse_SDE}
\mathrm{d} Y_t=(\alpha Y_t +2 \nabla \log p_{T-t}(Y_t)) \mathrm{d} t+\sqrt{2} \mathrm{d} B_t,
\end{equation}
where $B_t$ denotes a standard Brownian motion and $p_t$ is the density of $X_t$. The unknown score function $\nabla\log p_t$ in \eqref{eq:reverse_SDE} is typically learned through score matching \cite{hyvarinen2005estimation}. On a finite dataset, the forward process \eqref{eq:forward_SDE} is initialized from the empirical measure $\hat{\pi}_{\text{data}}=\frac{1}{N} \sum_i \delta_{x_i}$ rather than $\pi_{\text{data}}$. From the Gaussianity of the transition kernel, the density of the forward process starting from $\hat{\pi}_{\text{data}}$ is a Gaussian mixture model (GMM):
\begin{equation}
\hat{p}_{t}(z) = \frac{1}{N} \sum_{i=1}^N \mathcal{N}(z \mid \exp (-\alpha t) x_i,(\frac{1-\exp (-2 \alpha t)}{\alpha}) I_d),
\end{equation}
which gradually perturbs the samples from $\hat{\pi}_{\text{data}}$ to Gaussian noise with increasing time. Thus, for the empirical score matching loss $\hat{\ell}_{\mathrm{sm}}(s):=\int_0^T \mathbb{E}[\|s(t, X_t)-\nabla \log \hat{p}_t(X_t)\|^2] \mathrm{d}t$ used in practice,
its unique minimizer $s^*(t,z)$ is identical to $\nabla \log \hat{p}_t(z)$ almost everywhere, and we can explicitly derive its closed-form expression:
\begin{equation}
\label{eq:closed_form_empirical_score}
\nabla \log \hat p_t(z)=\frac{1}{\beta_t}\bigl(c_t(z)-z\bigr),
\end{equation}
where
\begin{equation}
\label{eq:closed_form_ct}
c_t(z)=\sum_{i=1}^N w_i^t(z)\,e^{-\alpha t}x_i,
\quad
w_i^t(z):=\operatorname{softmax}_i\!\left(
\Big(-\tfrac{\|z-e^{-\alpha t}x_j\|^2}{2\beta_t}\Big)_{j=1}^N
\right),
\quad
\beta_t:=\frac{1-e^{-2\alpha t}}{\alpha}.
\end{equation}
Here $w_i^t(z)$ is the Gaussian softmax weight that sample $x_i$ receives at query $z$ and time $t$, and $c_t(z)$ is the corresponding distance-weighted average of the perturbed training samples $\{e^{-\alpha t}x_i\}_{i=1}^N$. When $t$ is small, the variance $\beta_t$ is also small, so the softmax weights become increasingly
concentrated on the nearest training samples, and the closed-form score \eqref{eq:closed_form_empirical_score} will guide the sampler toward the nearest neighbor of the current location $z$. Thus the closed-form empirical minimizer reproduces training data and fails to generate novel samples.

To address this, we smooth the closed-form empirical score \eqref{eq:closed_form_empirical_score}. For a smoothing probability kernel $k$, the smoothed score $s^k(t,z)$ is
\begin{equation}
\label{eq:smoothed_score}
s^k(t, z)=\int \nabla \log \hat{p}_t(y) k_z(\mathrm{d} y).
\end{equation}
This is the convolution of the empirical score with the smoothing kernel $k$. For an additive kernel with bandwidth $\sigma$, $k_z=\mathrm{Law}(z+\sigma \varepsilon)$ with $\varepsilon \sim \rho_\varepsilon$ a zero-mean distribution, the smoothed score becomes:
\begin{equation}
\label{eq:additive_smoothed_score}
s^{k,\sigma}(t,z)=\mathbb E_{\varepsilon \sim \rho_\varepsilon}\big[\nabla \log \hat p_t(z+\sigma \varepsilon)\big]=\frac{1}{\beta_t}\left(\mathbb E_{\varepsilon \sim \rho_\varepsilon}[c_t(z+\sigma \varepsilon)]-z\right).
\end{equation}
In practice, it is amenable to Monte Carlo estimation by drawing $M$ noise samples,
\begin{equation}
\label{eq:MC_smoothed_score}
s^{k,\sigma}(t,z)\approx\frac{1}{\beta_t}\left(\frac{1}{M}\sum_{m=1}^M c_t(z+\sigma \varepsilon_m)-z\right),\quad\varepsilon_m \stackrel{\mathrm{i.i.d.}}{\sim} \rho_\varepsilon.
\end{equation}
\cite{scarvelis2023closed} couples \eqref{eq:MC_smoothed_score} under an isotropic Gaussian kernel to a straight-flow ODE \cite{liu2022flow} with Euler discretization, with velocity field:
\begin{equation}
\label{eq:CFDM_velocity_field}
v^{k,\sigma}(t,z)=\frac{1}{t}\left(z+(1-t) s^{k,\sigma}(t,z)\right).
\end{equation}
This generates novel samples without model training or GPUs, but is proved to produce only convex combinations of barycenters of the training data, which limits diversity and often yields blurry, unnatural samples. 


For the noise distribution $\rho_\varepsilon$ independent of the position $z$, the order of expectation and differentiation in \eqref{eq:additive_smoothed_score} can be exchanged:
\begin{equation}
\label{eq:exchanged_smoothed_score}
s^{k,\sigma}(t,z)=\mathbb E_{\varepsilon \sim \rho_\varepsilon}\big[\nabla \log \hat p_t(z+\sigma \varepsilon)\big]=\nabla\mathbb E_{\varepsilon \sim \rho_\varepsilon}\big[\log \hat p_t(z+\sigma \varepsilon)\big].
\end{equation}
When diffusion models apply sufficient corrector steps between each sampling iteration \cite{song2020score, karras2022elucidating} and stop early at time $\tau$ before convergence \cite{de2022convergence}, evolving along the reverse diffusion \eqref{eq:reverse_SDE} with the smoothed score \eqref{eq:exchanged_smoothed_score} will reach a Gibbs--Boltzmann density:
\begin{equation}
\label{eq:naive_score-smoothed_sampling_target_in_time}
\hat{p}_\tau^{k,\sigma}(z) \propto \exp \left(\mathbb E_{\varepsilon \sim \rho_\varepsilon}\big[\log \hat p_\tau(z+\sigma \varepsilon)\big]\right).
\end{equation}
When time $\tau \to 0$, $\hat{p}_{\tau}$ is well approximated by an isotropic GMM $\hat{p}^\delta$ centered at the training samples with a small component standard deviation $\delta$. Thus $\hat{p}_\tau^{k,\sigma}$ is approximated by:
\begin{equation}
\label{eq:naive_score-smoothed_sampling_target}
\hat{p}^{\delta,k,\sigma}(z) \propto \exp \left(\mathbb E_{\varepsilon \sim \rho_\varepsilon}\big[\log \hat p^\delta(z+\sigma \varepsilon)\big]\right).
\end{equation}
This density is the sampling target of diffusion models under score smoothing---equivalently, log-domain smoothing of $\hat{p}^\delta$---and \cite{farghly2025diffusion} shows that, with suitable bandwidth $\sigma$, it adapts to the data manifold by smoothing tangentially.

\subsection{Sensitivity to smoothing scale}
Figure~\ref{fig:smoothing_scale_circle} (top row) illustrates the smoothing-scale sensitivity of \eqref{eq:naive_score-smoothed_sampling_target} on a 2D circle: too little smoothing leaves memorization, too much replaces local interpolation by barycentric averaging and collapses the target. The distortion is substantially more severe for sparse, curved manifolds in high-dimensional ambient spaces. Manifold-adapted kernels can alleviate it \cite{farghly2025diffusion} but require geometric information unavailable in practice, so existing training-free closed-form diffusion models treat this sensitivity as fundamental and rely on dataset-specific tuning.
\begin{figure}[t]
    \centering
    {\textit{Naive density \eqref{eq:naive_score-smoothed_sampling_target}}}\\[1pt]
    \begin{minipage}{0.24\textwidth}\centering\includegraphics[width=\textwidth, trim=70 60 40 40, clip]{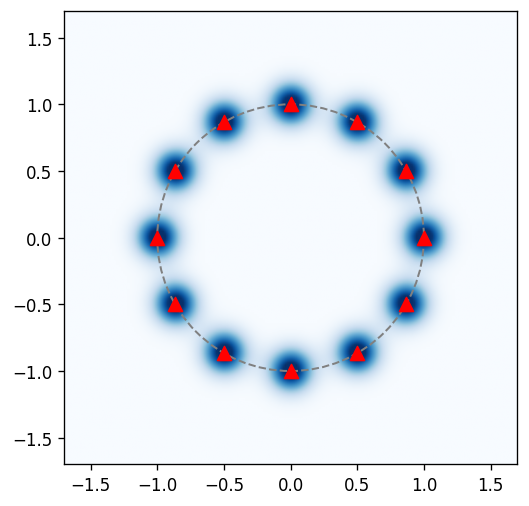}\end{minipage}\hfill
    \begin{minipage}{0.24\textwidth}\centering\includegraphics[width=\textwidth, trim=70 60 40 40, clip]{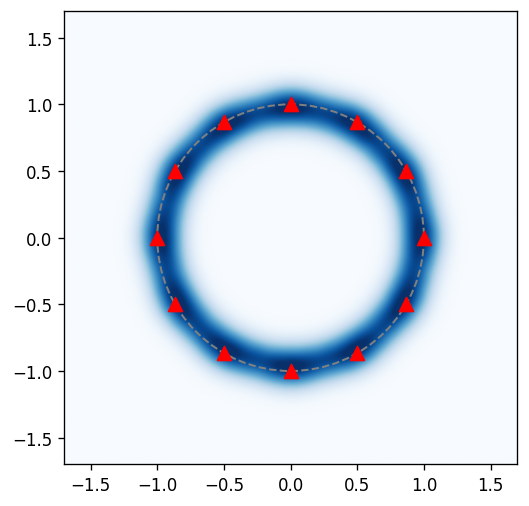}\end{minipage}\hfill
    \begin{minipage}{0.24\textwidth}\centering\includegraphics[width=\textwidth, trim=70 60 40 40, clip]{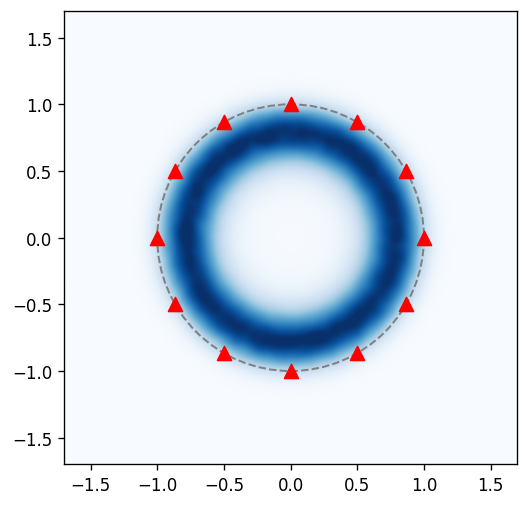}\end{minipage}\hfill
    \begin{minipage}{0.24\textwidth}\centering\includegraphics[width=\textwidth, trim=70 60 40 40, clip]{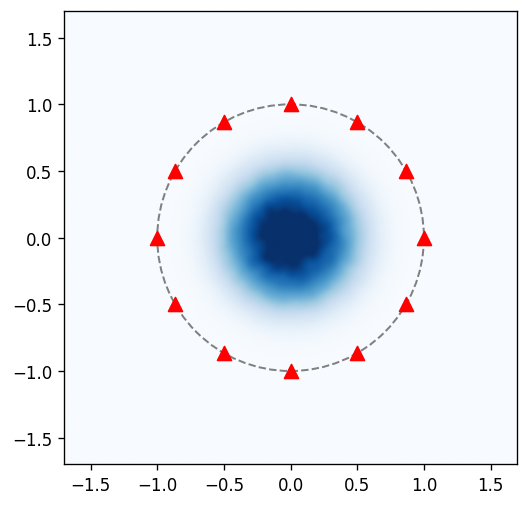}\end{minipage}

    \vspace{2pt}
    {\textit{Moment-matched density \eqref{eq:moment_matched_target}}}\\[1pt]
    \begin{minipage}{0.24\textwidth}\centering\includegraphics[width=\textwidth, trim=70 60 40 40, clip]{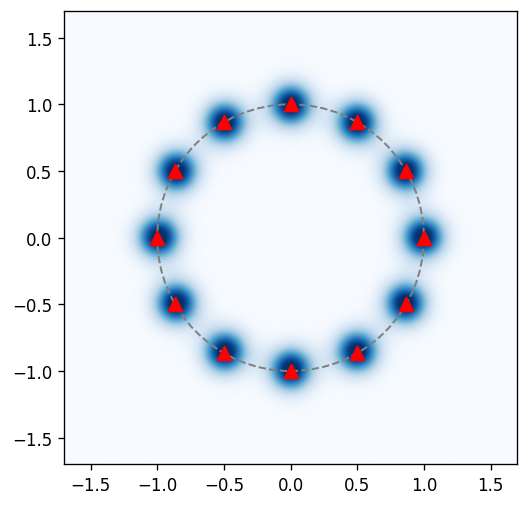}\end{minipage}\hfill
    \begin{minipage}{0.24\textwidth}\centering\includegraphics[width=\textwidth, trim=70 60 40 40, clip]{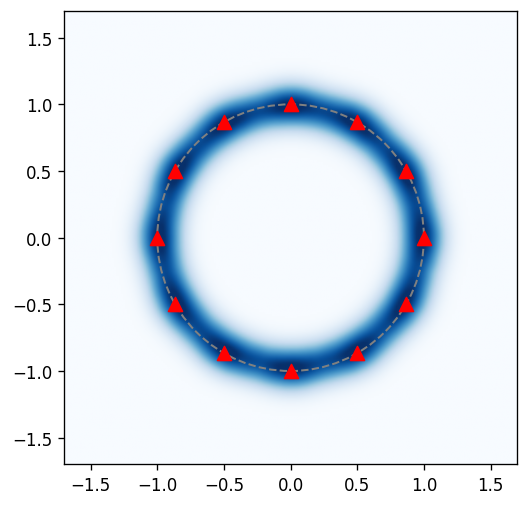}\end{minipage}\hfill
    \begin{minipage}{0.24\textwidth}\centering\includegraphics[width=\textwidth, trim=70 60 40 40, clip]{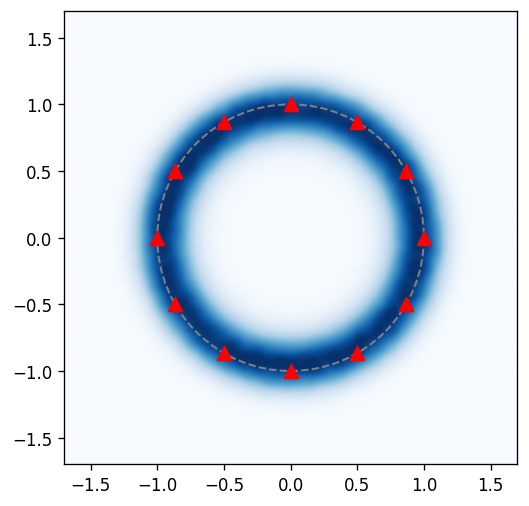}\end{minipage}\hfill
    \begin{minipage}{0.24\textwidth}\centering\includegraphics[width=\textwidth, trim=70 60 40 40, clip]{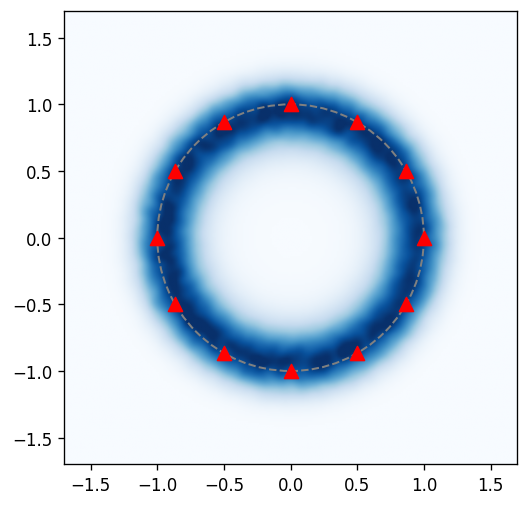}\end{minipage}
    \caption{Naive (top) and moment-matched (bottom) densities under isotropic Gaussian smoothing on a 2D circle, with increasing bandwidth $\sigma$ (left to right: $\sigma=0.05, 0.20, 0.45, 0.65$). Red triangles denote training samples; the dashed curve indicates the underlying unit circle. The moment-matched correction restores the circular geometry across all bandwidths.}
    \label{fig:smoothing_scale_circle}
\end{figure}

\section{Moment-matched score smoothing}
\label{sec:Moment-matched_score_smoothing}
\subsection{Algorithmic description}
Recall that the smoothed potential in density \eqref{eq:naive_score-smoothed_sampling_target} is
\begin{equation}
V(z):=-\mathbb E_{\varepsilon\sim\rho_\varepsilon}\big[\log \hat{p}^\delta(z+\sigma\varepsilon)\big],
\quad
g(z):=\nabla V(z),  
\end{equation}
so the density is the invariant measure of unconstrained overdamped Langevin dynamics:
\begin{equation}
\label{eq:unconstrained_OLD}
\mathrm d Z_t=-g(Z_t)\,\mathrm dt+\sqrt{2}\,\mathrm dB_t.  
\end{equation}
Given a training set $\{x_i\}_{i=1}^N\subset \mathbb R^d$, let $\mu^*\in\mathbb R^d$ and $\Sigma^*\in\mathbb R^{d\times d}$ denote its empirical mean and covariance, and let \(L^*\) be the Cholesky factor of $\Sigma^*$, i.e., $\Sigma^*=L^*(L^*)^\top$.

For $P$ particles $Z^1,\dots,Z^P\in\mathbb R^d$, write $Z\in\mathbb R^{P\times d}$ for the row-stacked particle matrix. We require the empirical mean and covariance of these particles to satisfy
$\hat\mu_P(Z)=\mu^*$ and $\hat\Sigma_P(Z)=\Sigma^*$
at every iteration. To simplify the computation, we reformulate these constraints by mapping to:
\begin{equation}
Y=(Z-\mathbf 1_P{\mu^*}^\top)(L^*)^{-\top},
\quad
Z=\mathbf 1_P{\mu^*}^\top+Y(L^*)^\top,
\end{equation}
where $\mathbf 1_P\in\mathbb R^P$ is the all-ones vector.
Then the moment constraints in $Y$-space become:
\begin{equation}
\mathbf 1_P^\top Y=0,
\quad
Y^\top Y=PI_d,
\end{equation}
so the particle matrix mapped in $Y$-space evolves on a centered and scaled Stiefel-type manifold:
\begin{equation}
\mathcal M_P:=
\left\{
Y\in\mathbb R^{P\times d}:\;
\mathbf 1_P^\top Y=0,\;
Y^\top Y=PI_d
\right\},
\end{equation}
which requires the number of particles $P\ge d+1$ by construction.

Writing $Y=\sqrt P\,UX$ with $U\in\mathbb R^{P\times(P-1)}$ orthonormal columns spanning $\mathbf 1_P^\perp$ and $X\in V_{P-1,d}:=\{X\in\mathbb R^{(P-1)\times d}:X^\top X=I_d\}$, we let $\mathcal{U}_{\mathcal M_P}$ denote the pushforward of the Stiefel Haar measure under $X\mapsto\sqrt P\,UX$---the natural uniform measure on $\mathcal M_P$ \cite{james1954normal,mardia1977uniform,chikuse2003statistics}.
\begin{algorithm}[t]
\caption{MM-SOLD: moment-matched score-smoothed overdamped Langevin dynamics}
\label{alg:mmsold-lm}
\begin{algorithmic}[1]
\Require Training set $\{x_i\}_{i=1}^N$, GMM component standard deviation $\delta$, smoothing bandwidth $\sigma$, particle number $P$, step size $h$, number of iterations $T$.
\State Compute empirical moments $(\mu^*,\Sigma^*)$ of $\{x_i\}_{i=1}^N$ and Cholesky factor $L^*$ with $\Sigma^*=L^*(L^*)^\top$.
\State Sample initial particles $Z_0^1,\dots,Z_0^P$ independently from $\hat p^\delta$, and set
$Y_0\gets (Z_0-\mathbf 1_P{\mu^*}^\top)(L^*)^{-\top}$.
\State Sample $\Xi_{\mathrm{prev}}\in\mathbb R^{P\times d}$ with i.i.d.\ $\mathcal N(0,1)$ entries.
\For{$k=0,1,\dots,T-1$}
    \State $Z_k \gets \mathbf 1_P{\mu^*}^\top+Y_k(L^*)^\top$.
    \State $G_k^Z \gets [\,g(Z_k^1)^\top;\dots;g(Z_k^P)^\top\,]$, $\quad G_k^Y \gets G_k^ZL^*$.
    \State Sample $\Xi_k\in\mathbb R^{P\times d}$ with i.i.d.\ $\mathcal N(0,1)$ entries.
    \State $\widetilde G_k \gets \Pi_{Y_k}(G_k^Y)$, $\widetilde\Xi_{\mathrm{prev}} \gets \Pi_{Y_k}(\Xi_{\mathrm{prev}})$, $\widetilde\Xi_k \gets \Pi_{Y_k}(\Xi_k)$.
    \State $\widetilde Y_{k+1}\gets Y_k-h\widetilde G_k+\sqrt{h/2}\,(\widetilde\Xi_{\mathrm{prev}}+\widetilde\Xi_k)$.
    \State $Y_{k+1}\gets \mathcal R(\widetilde Y_{k+1})$.
    \State $\Xi_{\mathrm{prev}}\gets \Xi_k$.
\EndFor
\State \Return Generated samples: \(Z_T=\mathbf 1_P{\mu^*}^\top+Y_T(L^*)^\top\).
\end{algorithmic}
\end{algorithm}

In $Y$-space, the potential $V$ for one particle becomes 
$\tilde{V}(y):=-\mathbb E_{\varepsilon\sim\rho_\varepsilon}\big[\log \hat{p}^\delta(\mu^*+L^* y+\sigma\varepsilon)\big]$ (treating $y\in\mathbb R^d$ as a column vector), and the potential and the probability measure for $P$ particles is
\begin{equation}\label{eq:finite_P_constrained_gibbs_main}\tilde{V}^{(P)}(y_1,\ldots,y_P)=\sum_{i=1}^{P} \tilde{V}(y_i),\hspace{1cm} \mu^{(P)}_{\mu^*,\Sigma^*}(dy)\propto \exp\left(-\tilde{V}^{(P)}(y_1,\ldots,y_P)\right) \mathcal{U}_{\mathcal{M}_P}(dy).\end{equation}



The steps of MM-SOLD are summarized in Algorithm \ref{alg:mmsold-lm}. Each iteration evaluates the smoothed score in $Z$-space, pulls it back to $Y$-space via $G^Y=G^Z L^*$, and projects drift and noise onto the tangent space of $\mathcal M_P$ via $\Pi_Y(A)=\Pi_Y^{\mathrm{St}}\bigl(\Pi_{\mathrm{ctr}}(A)\bigr)$, where $\Pi_{\mathrm{ctr}}(A)=A-\tfrac1P\mathbf 1_P\mathbf 1_P^\top A$ centers and $\Pi_Y^{\mathrm{St}}(A)=A-Y\,\sym\!\left(\tfrac1P Y^\top A\right)$ with $\sym(B):=\frac12(B+B^\top)$ enforces tangency to $Y^\top Y=PI_d$. The update uses the Leimkuhler--Matthews (LM) scheme \cite{leimkuhler2013rational} for its lower discretization bias (Euler--Maruyama also works), reusing the previous step's noise. Since the projected step may leave $\mathcal M_P$, we retract via centered reduced-QR, $\mathcal R(\widetilde Y_{k+1})=\sqrt{P}Q$, with the standard diagonal sign correction. Derivation details are in Appendix \ref{app:MM-SOLD_details}.

\subsection{Evaluations on 2D}
\label{sec:eval_on_2D}
We show the behavior of MM-SOLD on two standard 2D distributions, named ``Checkerboard'' and ``Two Spirals''. For each distribution, we uniformly draw 500 samples as the training dataset. We use $P=5{,}000$ particles in MM-SOLD, thereby generating 5{,}000 samples. The sampler runs for 3{,}000 iterations with step size $5\times 10^{-4}$. As a baseline we compare with $\sigma$-CFDM \cite{scarvelis2023closed}---to our knowledge, the only existing training-free score-smoothing sampler shown to produce high-quality novel samples. 
Figure \ref{fig:2d_checkerboard_spirals} presents results obtained with isotropic Gaussian smoothing kernels, under varying smoothing bandwidth $\sigma$ and numbers $M$ of Monte Carlo noise samples used to estimate the smoothed score. Since $\sigma$-CFDM generates convex combinations of barycenters of training subsets, it is sensitive to both $\sigma$ and $M$. For small $\sigma$ and $M$, its samples remain close to the training data. As $\sigma$ increases, they spread toward the convex hull of the training set and produce many off-support samples. Increasing $M$ makes samples concentrate around some local centroids, leading to sampling degradation. In contrast, MM-SOLD uses a different sampling paradigm with moment-matched particles, making it more robust to $\sigma$ and $M$ while better preserving sample diversity and distributional fidelity.
\begin{figure}[t]    
\centering    
\setlength{\tabcolsep}{1.5pt}    
\renewcommand{\arraystretch}{0.8}    
\begin{tabular}{cccc}        
\includegraphics[width=0.242\textwidth, trim=29 24 7 20.5, clip]{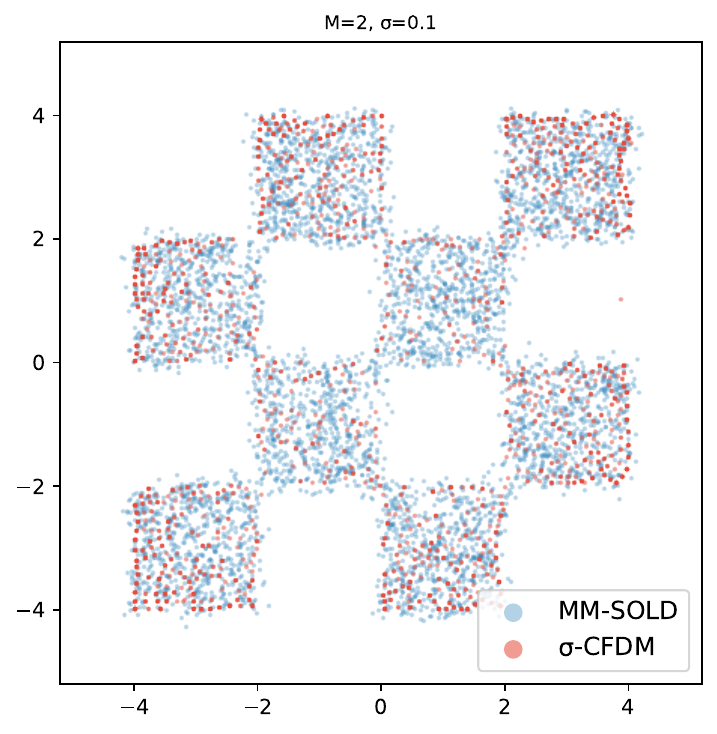} &        
\includegraphics[width=0.242\textwidth, trim=29 24 7 20.5, clip]{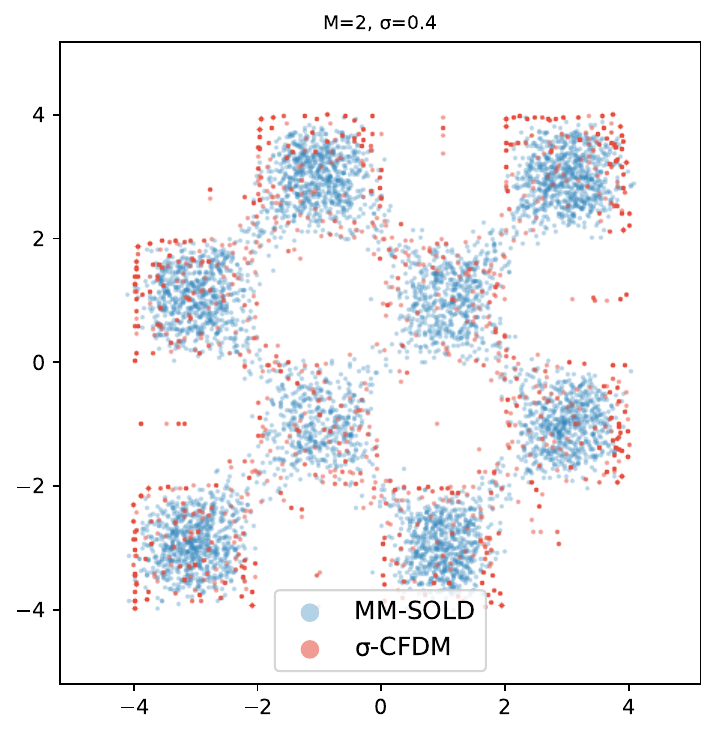} &        
\includegraphics[width=0.242\textwidth, trim=29 24 7 20.5, clip]{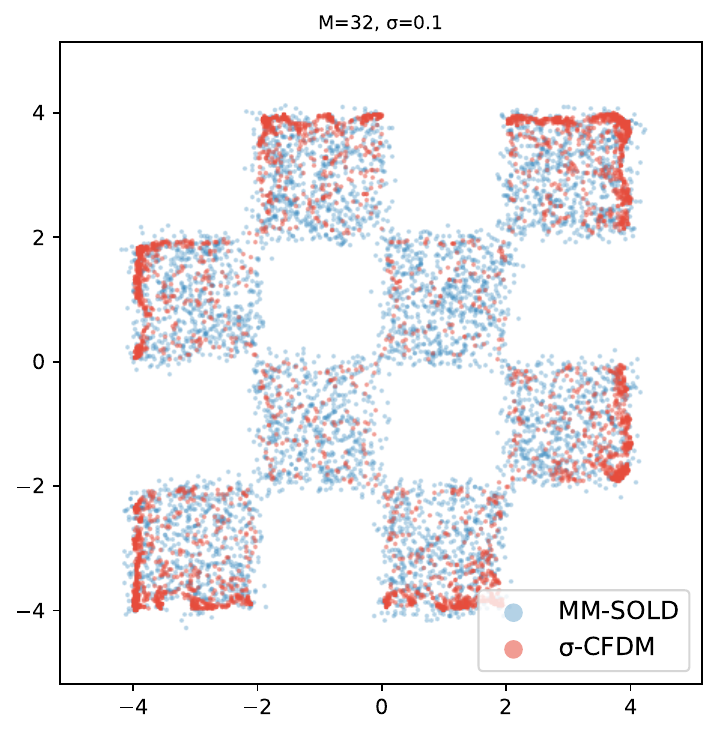} &        
\includegraphics[width=0.242\textwidth, trim=29 24 7 20.5, clip]{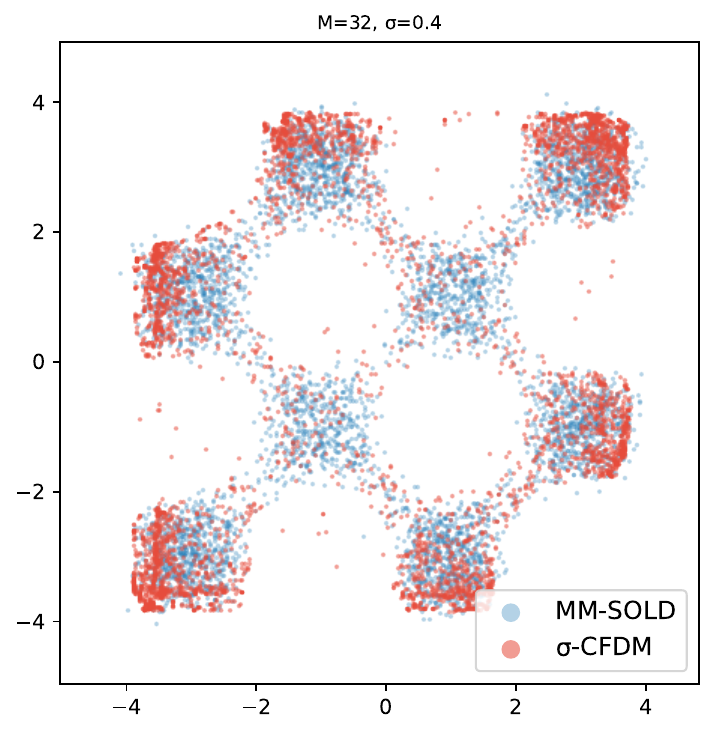} \\[-1mm]
{\scriptsize $M=2,\sigma=0.1$} &
{\scriptsize $M=2,\sigma=0.4$} &
{\scriptsize $M=32,\sigma=0.1$} &
{\scriptsize $M=32,\sigma=0.4$} \\[1mm]
\includegraphics[width=0.242\textwidth, trim=29 24 7 20.5, clip]{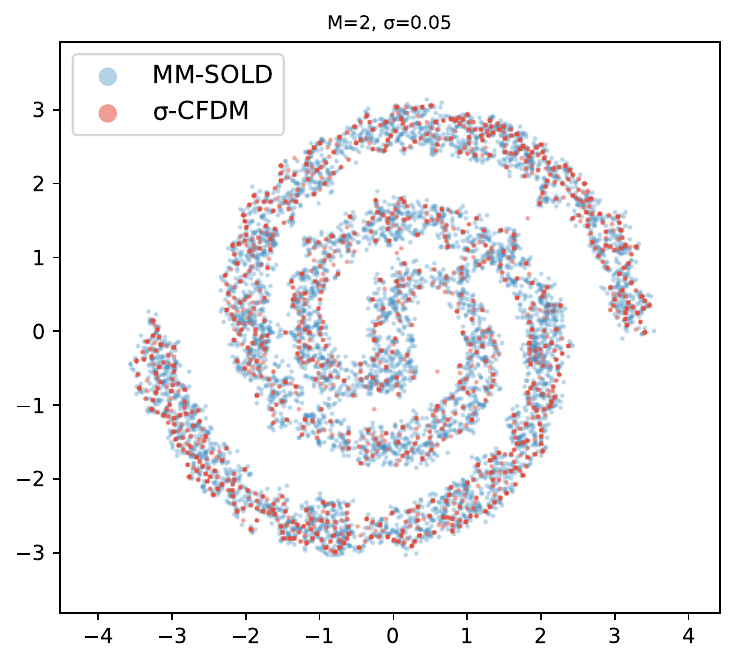} &        
\includegraphics[width=0.242\textwidth, trim=29 24 7 20.5, clip]{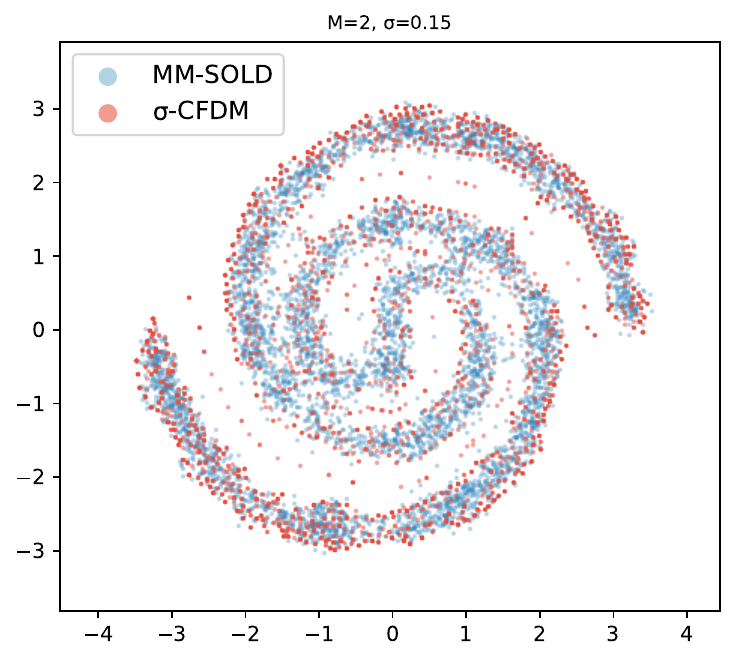} &        
\includegraphics[width=0.242\textwidth, trim=29 24 7 20.5, clip]{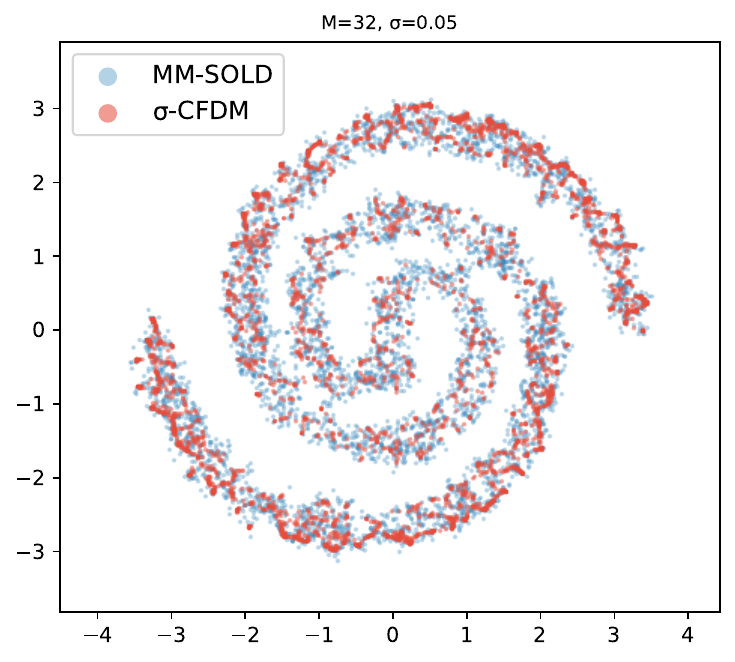} &        
\includegraphics[width=0.242\textwidth, trim=29 24 7 20.5, clip]{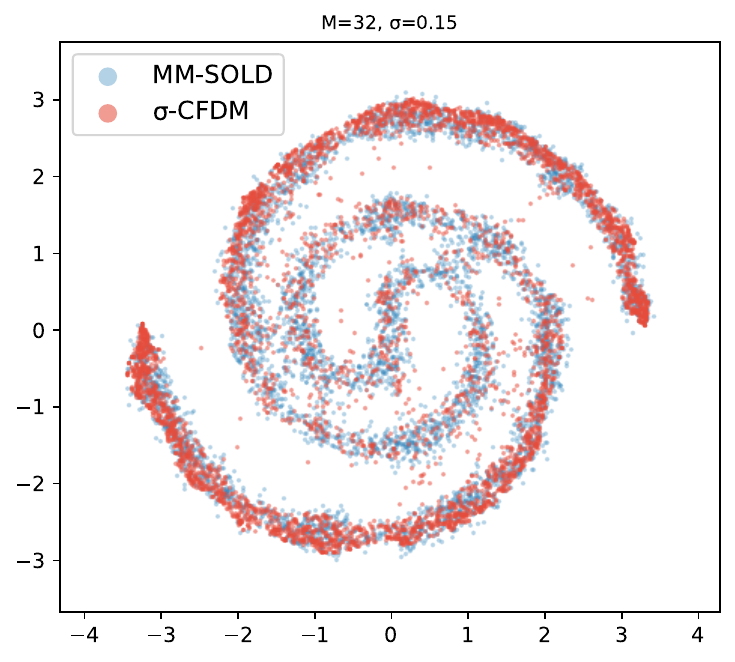} \\[-1mm]
{\scriptsize $M=2,\sigma=0.05$} &
{\scriptsize $M=2,\sigma=0.15$} &
{\scriptsize $M=32,\sigma=0.05$} &
{\scriptsize $M=32,\sigma=0.15$}
\end{tabular}    
\caption{Comparison between MM-SOLD and $\sigma$-CFDM on 2D. Top row: ``Checkerboard''; bottom row: ``Two Spirals''. Blue dots denote MM-SOLD and red dots denote $\sigma$-CFDM.}    
\label{fig:2d_checkerboard_spirals}
\end{figure}

\subsection{Target in the large-particle limit}
A natural question is what distribution MM-SOLD targets at stationarity with sufficient sampling iterations and small step sizes. For unconstrained overdamped Langevin dynamics \eqref{eq:unconstrained_OLD}, its stationary law $\hat{p}^{\delta,k,\sigma}$ can be characterized as the minimizer of the free energy \cite{jordan1998variational,pavliotis2014stochastic}:
\begin{equation}
\label{eq:free_energy}
\mathcal F(\rho)=\int V(z)\rho(z)\mathrm dz+\int \rho(z)\log\rho(z)\mathrm dz.
\end{equation}
Since MM-SOLD adds empirical mean and covariance constraints to this dynamics, its large-particle stationary marginal as $P \to \infty$ is naturally described by the same free energy restricted to the moment-constrained class:
\begin{equation}
\mathcal C(\mu^*,\Sigma^*)
:=
\left\{
\rho:\int \rho(z)\mathrm dz=1,\;
\int z\rho(z)\mathrm dz=\mu^*,\;
\int (z-\mu^*)(z-\mu^*)^\top\rho(z)\mathrm dz=\Sigma^*
\right\}.
\end{equation}

\begin{proposition}[Moment-matched limiting target]
\label{prop:large_particle_target}
Assume that the score smoothed GMM potential is defined with
\(\delta>0\), \(\sigma\ge0\), and Gaussian smoothing noise
\(\varepsilon\sim\mathcal N(0,I_d)\). Assume also that the empirical covariance
satisfies \(\Sigma^*\succ0\). For each \(P\ge d+1\), let \(\mu_P\) denote the
ideal constrained Gibbs equilibrium on \(\mathcal M_P\) defined in
\eqref{eq:finite_P_constrained_gibbs_main}. Let \(\mu_P^{Z,(1)}\) be its
one-particle marginal mapped back to \(Z\)-space by
$z_i=\mu^*+L^*y_i$.
Then, as \(P\to\infty\), $\mu_P^{Z,(1)}\Rightarrow
\hat p^{\delta,k,\sigma}_{\mu^*,\Sigma^*}$,
where
\begin{equation}
\hat p^{\delta,k,\sigma}_{\mu^*,\Sigma^*}
=
\arg\min_{\rho\in\mathcal C(\mu^*,\Sigma^*)}
\mathcal F(\rho).
\label{eq:constrained_free_energy}
\end{equation}
Equivalently,
\begin{equation}
\hat p^{\delta,k,\sigma}_{\mu^*,\Sigma^*}(z)
=
\frac1{Z_{\lambda,\Lambda}}
\exp\left(
-V(z)-\lambda^\top z
-\frac12(z-\mu^*)^\top\Lambda(z-\mu^*)
\right),
\label{eq:moment_matched_target}
\end{equation}
for some \(\lambda\in\mathbb R^d\) and symmetric
\(\Lambda\in\mathrm{Sym}_d\). With \(g(z):=\nabla V(z)\) and
\(\sym(A):=\frac12(A+A^\top)\), the tilting parameters satisfy
\begin{equation}
\lambda
=
-\mathbb E_{\hat p^{\delta,k,\sigma}_{\mu^*,\Sigma^*}}[g(Z)],
\qquad
\Sigma^*\Lambda+\Lambda\Sigma^*
=
2\left(I_d-\sym(C)\right),
\quad
C:=
\mathbb E_{\hat p^{\delta,k,\sigma}_{\mu^*,\Sigma^*}}
\left[(Z-\mu^*)g(Z)^\top\right].
\label{eq:tilting_parameter_identities}
\end{equation}
\end{proposition}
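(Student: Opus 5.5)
The plan is to read this as a Gibbs conditioning principle / equivalence-of-ensembles statement and to work in $Y$-space, where the constraint set is $\mathcal M_P$ and the reference measure is $\mathcal U_{\mathcal M_P}$.

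\textbf{Step 1: represent the uniform measure by Gaussians.} If $G\in\mathbb R^{P\times d}$ has i.i.d.\ $\mathcal N(0,1)$ entries, set $\bar G=\Pi_{\mathrm{ctr}}(G)$ and $Y=\sqrt P\,\bar G(\bar G^\top\bar G)^{-1/2}$. By the polar decomposition, $Y\sim\mathcal U_{\mathcal M_P}$ (the standard Chikuse/James representation of Stiefel Haar measure, transported by $U$). Hence $\mu^{(P)}_{\mu^*,\Sigma^*}$ is the law of this $Y$ under the Gaussian law reweighted by $\exp(-\sum_i\tilde V(y_i))$. Equivalently, disintegrating the product measure $\prod_i \nu(dy_i)$ with $\nu=\mathcal N(0,I_d)$ along the map $y\mapsto(\hat\mu_P,\hat\Sigma_P)$ shows that the fiber over $(0,I_d)$ is $\mathcal U_{\mathcal M_P}$. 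The Gaussian density depends only on $\sum_i|y_i|^2=Pd$, which is constant on the fiber, so the fiber measure is exactly uniform. Therefore $\mu^{(P)}$ is the product measure $\prod_i e^{-\tilde V(y_i)}dy_i$ conditioned on the empirical moments being exactly $(0,I_d)$.

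\textbf{Step 2: Gibbs conditioning and exponential tilting.} For any $(\lambda,\Lambda)$, multiplying the product density by $\exp(-\sum_i[\lambda^\top y_i+\tfrac12 y_i^\top\Lambda y_i])$ changes it only by a constant on the constraint fiber. The conditioned law is therefore unchanged if we replace $e^{-\tilde V}$ with the tilted one-particle density $q_{\lambda,\Lambda}\propto e^{-\tilde V(y)-\lambda^\top y-\frac12 y^\top\Lambda y}$. We choose $(\lambda,\Lambda)$ so that $q$ has mean $0$ and covariance $I_d$; this is the dual of the strictly convex constrained free-energy problem. Existence and uniqueness of the dual solution follow because $\tilde V$ has quadratic growth: the GMM log-density, after Gaussian smoothing, equals $-|z|^2/(2\delta^2)+O(|z|)$. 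The log-partition function is then finite and strictly convex on an open domain containing the relevant point, and it is steep, so the moment map is onto the interior, and $\Sigma^*\succ0$ makes $(0,I_d)$ interior. Under $q$ the constraint is now typical, so a local CLT/Edgeworth argument for the sufficient statistic $T(y)=(y,yy^\top)$ applies. Here $T$ takes values in $\mathbb R^{d+d(d+1)/2}$, has a nondegenerate covariance, and $q$ has Gaussian tails. The local CLT shows that the conditional one-particle (indeed $k$-particle) marginal equals $q\cdot p_{P-1}(\text{shifted value})/p_P(\text{value})\to q$, with local densities of the sum of $T(y_i)$ evaluated at the centre. The ratio tends to $1$ uniformly on compacts, and tightness comes from $\sum|y_i|^2=Pd$ together with exchangeability. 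This is where I expect the main obstacle. The fiber is a lower-dimensional manifold, so one needs a local limit theorem for densities with respect to the surface (coarea) measure, uniform in the removed particle's position. My first attempt would be to handle this by conditioning on an $\eta$-neighbourhood of the constraint and letting $\eta\to0$ after $P\to\infty$, with continuity of conditional densities justifying the interchange. Mapping back by $z=\mu^*+L^*y$ gives weak convergence to the pushforward of $q$.

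\textbf{Step 3: identify the limit and the parameters.} Pushing $q$ forward gives a density of the form \eqref{eq:moment_matched_target}, after reparametrising the linear and quadratic terms. It lies in $\mathcal C(\mu^*,\Sigma^*)$. For any $\rho\in\mathcal C$, the quantity $\mathcal F(\rho)-\mathcal F(\hat p)=\mathrm{KL}(\rho\|\hat p)\ge0$, since the tilt integrates identically against all of $\mathcal C$; this gives \eqref{eq:constrained_free_energy} with uniqueness. For the identities \eqref{eq:tilting_parameter_identities}, we integrate by parts (Stein identity) against $\hat p$, whose Gaussian tails kill boundary terms. Using $\mathbb E[\nabla\log\hat p]=0$ with $\nabla\log\hat p=-g-\lambda-\Lambda(z-\mu^*)$ and $\mathbb E[z-\mu^*]=0$ gives $\lambda=-\mathbb E g$. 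Next, $\mathbb E[(Z-\mu^*)\nabla\log\hat p^\top]=-I_d$ gives $C+\Sigma^*\Lambda=I_d$ (using $\mathbb E[(Z-\mu^*)\lambda^\top]=0$). Symmetrising this and using $\Lambda=\Lambda^\top$ yields $\Sigma^*\Lambda+\Lambda\Sigma^*=2(I_d-\sym(C))$.
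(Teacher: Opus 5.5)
\textbf{Verdict: genuine gap at the local CLT step.} Your overall architecture is the paper's. You identify $\mu^{(P)}$ with the product measure conditioned on the exact fiber, tilt by $(a_*,B_*)$ so that $t_0=(0,I_d)$ becomes the mean of $T$, and write the one-particle marginal as $q(y)\,p_{P-1}(Pt_0-T(y))/p_P(Pt_0)$. You then finish with the KL decomposition and the Stein identities; your Step 3 is correct and matches Step 6 of the paper. Two of your shortcuts are legitimate: obtaining the tilt from steepness of a regular exponential family instead of the paper's coercivity argument, and obtaining tightness from exchangeability and $\mathbb E_{\mu_P}\|Y_1\|^2=d$ instead of an $L^1$ bound on the ratio.

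\textbf{What is actually missing.} You have slightly misdiagnosed the obstacle. The $p_n$ in your ratio are Lebesgue densities on $\mathbb R^m$, $m=d+d(d+1)/2$, of $\sum_{i\le n}T(Y_i)$. The difficulty is not a surface-measure local CLT. It is that $T(Y)=(Y,YY^\top)$ lives on a $d$-dimensional surface in $\mathbb R^m$, so its law is singular. Nondegenerate covariance and Gaussian tails therefore give neither the existence of bounded continuous $p_n$ nor a uniform local expansion at $Pt_0$.

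What is needed is a smoothing hypothesis such as $\phi_W^{n_0}\in L^1(\mathbb R^m)$ for $W=T(Y)$, $Y\sim q$, together with Cramér's condition (which follows from $q>0$ everywhere). The paper proves this in Lemma~\ref{lem:fourier_smoothing_mean_cov}:
\begin{itemize}
\item A quadratic van der Corput estimate, using Schwartz bounds on $q$ that are uniform over rotations, gives $|\phi_W(u,A)|\lesssim\prod_j(1+|\lambda_j|)^{-1/2}\bigl(1+|\xi_j|/(1+|\lambda_j|)\bigr)^{-N}$, where $A=O^\top\mathrm{diag}(\lambda)O$ and $\xi=Ou$.
\item Integrating in eigenvalue coordinates, where the Vandermonde factor is at most $\prod_j(1+|\lambda_j|)^{d-1}$, yields integrability once $n_0>2(d+1)$.
\item Only then does the Bhattacharya--Rao density local CLT give that the ratio tends to $1$.
\end{itemize}

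\textbf{Why the $\eta$-neighbourhood workaround fails.} For fixed $\eta>0$ the tilt $\exp\bigl(\sum_i[a_*^\top y_i+\tfrac12 y_i^\top B_*y_i]\bigr)$ is not constant on $\{\|S_P/P-t_0\|\le\eta\}$; it varies by factors $e^{O(P\eta)}$. The tilting trick is lost, and you are left with the Gibbs conditioning principle for the untilted product, which gives $\lim_{\eta\to0}\lim_{P\to\infty}$. The proposition concerns the exact-fiber law, i.e.\ $\lim_{P\to\infty}\lim_{\eta\to0}$. Exchanging the limits requires control of the conditional laws at scales $\eta\ll P^{-1/2}$, uniformly in $P$. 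That is exactly the density local limit theorem, so appealing to ``continuity of conditional densities'' is circular.

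\textbf{Smaller points.}
\begin{itemize}
\item In Step 1, constancy of the Gaussian density on the fiber does not by itself make the disintegrated measure uniform, because the coarea Jacobian also enters. Uniformity comes from invariance under orthogonal maps fixing $\mathbf 1_P$, which act transitively on $\mathcal M_P$. This is the paper's Step 4, or equivalently independence of the polar factor from $\bar G^\top\bar G$.
\item Openness of the natural parameter domain, which your steepness argument uses, needs divergence of the partition function when $Q-B$ is singular. That follows from convexity of the log-sum-exp part of $\widetilde V$, not from the $O(\|z\|)$ bound alone.
\end{itemize}
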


The proof (an equivalence-of-ensembles argument in $Y$-coordinates) is in Appendix~\ref{app:proof_large_particle_target}; the result characterizes equilibrium, while uniform-in-$P$ discretization bounds remain open.


Since the naive score-smoothed target is $\hat{p}^{\delta,k,\sigma}(z)\propto e^{-V(z)}$, \eqref{eq:constrained_free_energy} equivalently minimizes $\mathrm{KL}(\rho\,\|\,\hat{p}^{\delta,k,\sigma})$ over $\mathcal C(\mu^*,\Sigma^*)$---so $\hat{p}^{\delta,k,\sigma}_{\mu^*,\Sigma^*}$ is the linear-and-quadratic exponential tilting of the naive target closest to it in KL among all moment-matched distributions.

The tilting parameters \(\lambda\) and \(\Lambda\)
in~\eqref{eq:tilting_parameter_identities} are defined by population
expectations under the moment-matched target
\(\widehat p^{\delta,k,\sigma}_{\mu^*,\Sigma^*}\), which is unavailable in
practice. Since the training samples are drawn from the data law, we estimate
$\widehat\lambda$ and $\widehat C$ by empirical averages over the training set:
$\widehat\lambda=-\tfrac{1}{N}\sum_{i=1}^N g(x_i)$, 
$\widehat C=\tfrac{1}{N}\sum_{i=1}^N(x_i-\mu^*)g(x_i)^\top$, and
\(\widehat\Lambda\) by solving the Lyapunov equation
\((\Sigma^*+\zeta I_d)\widehat\Lambda+\widehat\Lambda(\Sigma^*+\zeta I_d)
= 2(I_d-\sym(\widehat C))\)
where \(\zeta>0\) is a small constant added for numerical stability. This estimator is biased in general; Appendix~\ref{app:tilting-parameter-estimation} shows the bias arises only from the non-affine part of $g$---which the moment-matched family cannot capture---and the kinetic-Langevin ablation in Appendix~\ref{app:ablation_studies} confirms it improves with $N$. Employing the same 2D circle example,
we show that this moment-matched
density~\eqref{eq:moment_matched_target} with tilting parameters estimated
from training data substantially corrects the behavior of the naive
score-smoothed target for different \(\sigma\) (see
Figure~\ref{fig:smoothing_scale_circle}, bottom row).


The quadratic tilting matrix $\Lambda$ has $O(d^2)$ degrees of freedom.  
Direct generative sampling from \eqref{eq:moment_matched_target} via, e.g., Langevin dynamics, requires accurate estimation of these parameters, which is difficult in high-dimensional settings. MM-SOLD avoids this difficulty by enforcing moments to be matched at the particle level. However, when estimated reliably, this moment-matched target can faithfully capture the underlying data distribution. In Section \ref{exp:handwritten_digit_classification}, we empirically evaluate this in an image classification experiment on a handwritten digits dataset.

\subsection{Nearest-neighbor score estimation}
\label{sec:nn_score_estimation}
Evaluating the negative smoothed score $g(z)=\nabla V(z)$ requires a Monte Carlo estimate of $\nabla\log\hat p^\delta$ at perturbed queries $z+\sigma\varepsilon$, and a direct sum over all $N$ training samples per perturbation is prohibitive at image-generation scale. We therefore use a local nearest-neighbor estimator: for each query we keep the $K$ nearest training samples and draw $L$ additional samples uniformly without replacement from the remainder to debias the truncation, following the same high-level idea as $\sigma$-CFDM \cite{scarvelis2023closed}. Two design choices specific to MM-SOLD: (i) antithetic perturbations $z\pm\sigma\varepsilon$ to reduce Monte Carlo variance, and (ii) when $K+L<d$, sampling the smoothing noise only through its inner products with the local samples using their Gram matrix, which matches the ambient-space softmax law while avoiding full $d$-dimensional Gaussian draws. Full algorithm and analysis are in Appendix \ref{app:nn_score}.


\section{Experiments}
\label{sec:Experiments}
    \subsection{Handwritten digits: classification}
\label{exp:handwritten_digit_classification}

We test whether the moment-matched target density \eqref{eq:moment_matched_target} faithfully approximates the data distribution in a high-dimensional setting, using a minimum expected cost of misclassification (ECM) \cite{bishop2006pattern} experiment as a benchmark. With equal misclassification costs and a uniform class prior, a minimum ECM classifier picks the class with the largest class-conditional density. We use a $10$-class handwritten digits dataset \cite{beaulac2022introducing} encoded into $100$-dimensional latents by a pretrained Nuclear Norm-Regularized AutoEncoder (NRAE) \cite{scarvelis2024nuclear}, with $1{,}000/58/300$ train/validation/test latents per class.

For each digit class $c$, we fit a moment-matched density with tilting parameters $(\lambda_c,\Lambda_c)$ estimated from the class-$c$ training latents via \eqref{eq:tilting_parameter_identities}, and calibrate a scalar bias $b_c$ on the validation set to absorb the unknown class normalizing constant. The resulting minimum ECM classifier is $\hat y(z)=\arg\min_c \{E_c(z)+b_c\}$ with $E_c(z)=V_c(z)+\lambda_c^\top z+\tfrac12(z-\mu_c^*)^\top\Lambda_c(z-\mu_c^*)$. We compare against two MLP baselines in the same latent space: one trained on the original $1{,}000$ latents per class, and one trained on a $10\times$ augmented set built by appending $9{,}000$ MM-SOLD samples per class. Hyperparameters $(\delta,\sigma,M)$ are selected on the validation set. Full preprocessing, NRAE architecture, and MLP/training details are given in Appendix \ref{app:handwritten_digit_details}.

In Table \ref{tab:handwritten_digit_classification}, the minimum ECM classifier reaches $98.00\%$ test accuracy. Without any model training, it substantially outperforms the neural classifier trained on the original dataset and is slightly better than that trained on the augmented dataset. This suggests that the moment-matched limiting target of MM-SOLD faithfully approximates each class-conditional distribution.
\begin{table}[t]
\centering
\caption{Handwritten digit classification in NRAE latent space. We report per-class and overall test accuracy (\%). MLP (MM-SOLD Aug.) denotes the same MLP trained with MM-SOLD augmented latent samples. MM Target Min. ECM C. denotes the minimum ECM classifier based on \eqref{eq:moment_matched_target}.}
\label{tab:handwritten_digit_classification}
\setlength{\tabcolsep}{3.2pt}
\renewcommand{\arraystretch}{0.88}
\begin{tabular}{lccccccccccc}
\toprule
Method & 0 & 1 & 2 & 3 & 4 & 5 & 6 & 7 & 8 & 9 & Overall \\
\midrule
MLP
& 95.0 & 97.7 & 90.3 & 96.7 & 95.7 & 91.7 & 97.0 & 89.7 & 93.7 & 90.3 & 93.77 \\
MLP (MM-SOLD Aug.)
& 99.0 & \textbf{100.0} & \textbf{97.7} & \textbf{98.0} & 96.7 & 98.0 & \textbf{99.0} & 97.0 & 97.0 & 96.7 & 97.90 \\
MM Target Min. ECM C.
& \textbf{100.0} & 98.7 & 97.0 & \textbf{98.0} & \textbf{97.0} & \textbf{99.0} & 98.0 & \textbf{97.3} & \textbf{98.0} & \textbf{97.0} & \textbf{98.00} \\
\bottomrule
\end{tabular}
\end{table}

\subsection{Handwritten digits: generation}
\label{exp:handwritten_digit_generation}
In this section, we use MM-SOLD to generate handwritten digit images in the $100$-dimensional latent space of the NRAE mentioned in Section \ref{exp:handwritten_digit_classification}. We consider a single-class generation task on digit ``8'' with its $1{,}000$ training images and evaluate the performance with $300$ held-out test images. We compare MM-SOLD with two baseline methods. The first is $\sigma$-CFDM \cite{scarvelis2023closed} implemented in the same partially whitened NRAE latent space with $100$ Euler steps. The second is a latent denoising diffusion probabilistic model (latent DDPM) \cite{rombach2022high} trained on the $1{,}000$ digit-8 latents. We use $1{,}000$ diffusion steps in the training, and $100$ DDIM \cite{song2020denoising} sampling steps. For MM-SOLD, we use $100$ LM steps.

We evaluate fidelity, diversity, and novelty. Fidelity is measured by Kernel Inception Distance (KID) \cite{sutherland2018demystifying} between generated and held-out test images in the feature space of a pretrained digit classifier. Diversity is measured by Recall, which quantifies test-distribution coverage via nearest-neighbor balls. Novelty is measured by DupRate, the fraction of generated latents unusually close to a training latent, indicating memorization. For MM-SOLD and $\sigma$-CFDM, we search over $(\sigma,M)$; for each setting, we generate $300$ samples, compute all metrics, repeat this five times, and report mean $\pm$ std. We also report training time and per-sample sampling time. Further details are given in Appendix~\ref{app:handwritten_generation_details}.

Figure \ref{fig:digit8_generation_samples} shows representative generated samples with $(\sigma,M)=(0.4,32)$. The latent DDPM produces recognizable but sometimes broken or disconnected strokes due to limited training data. The $\sigma$-CFDM samples are highly concentrated around similar loop shapes and often contain noisy artifacts, consistent with its barycentric sampling behavior. In contrast, MM-SOLD generates higher-fidelity samples while better preserving variations in the sampled digits.
Table \ref{tab:generation_results} reports the best performance of MM-SOLD and $\sigma$-CFDM over the searched $(\sigma,M)$ configurations for each metric. MM-SOLD achieves the best KID, Recall, and DupRate: it reduces KID by $74.2\%$ relative to $\sigma$-CFDM and $74.6\%$ relative to latent DDPM, while keeping DupRate at zero. It is training-free and, on CPU, samples $2.38\times$ faster than $\sigma$-CFDM and $3.36\times$ faster than DDPM running on a V100 GPU. Additional generated images and heatmaps in Appendix \ref{app:handwritten_generation_details} show that MM-SOLD consistently improves over $\sigma$-CFDM across the searched $(\sigma,M)$ and is more robust to the choices of hyperparameters.
\begin{figure}[t]
\centering
\setlength{\tabcolsep}{2pt}
\renewcommand{\arraystretch}{0.9}
\begin{tabular}{cccc}
\includegraphics[width=0.24\textwidth]{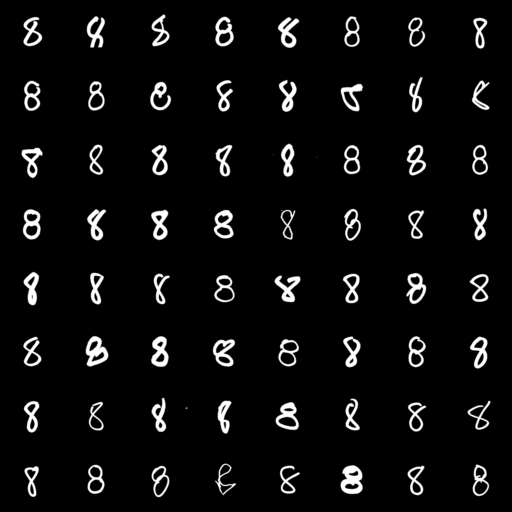} &
\includegraphics[width=0.24\textwidth]{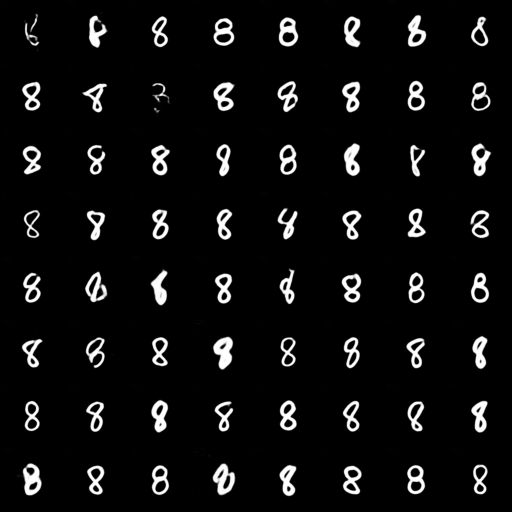} &
\includegraphics[width=0.24\textwidth]{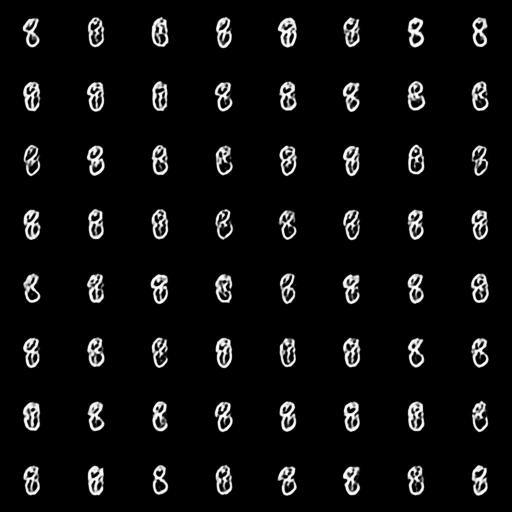} &
\includegraphics[width=0.24\textwidth]{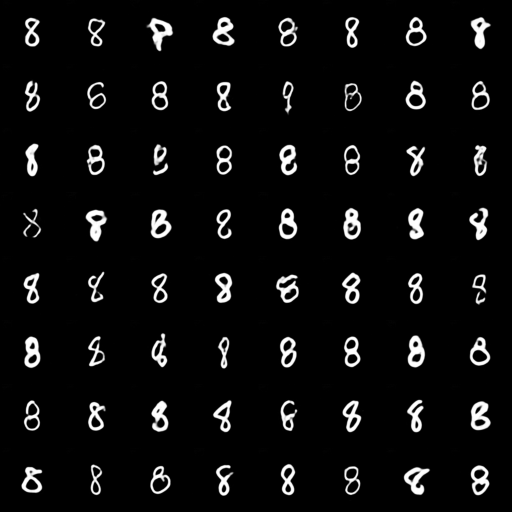} \\
{\scriptsize Real samples} &
{\scriptsize Latent DDPM} &
{\scriptsize $\sigma$-CFDM} &
{\scriptsize MM-SOLD}
\end{tabular}
\caption{Real digit-8 images (left), Latent DDPM samples, $\sigma$-CFDM samples, and MM-SOLD samples (right), decoded from the NRAE latent space.}
\label{fig:digit8_generation_samples}
\end{figure}

\begin{table}[t]
\centering
\caption{Generation quality and computational cost on handwritten digit-8 and CelebA-HQ datasets. For MM-SOLD and $\sigma$-CFDM, we report the best result over the searched $(\sigma,M)$ configurations.}
\label{tab:generation_results}
\scriptsize
\setlength{\tabcolsep}{3.0pt}
\renewcommand{\arraystretch}{0.88}
\begin{tabular}{lcccccc}
\toprule
& \multicolumn{3}{c}{Handwritten digit-8} & \multicolumn{3}{c}{CelebA-HQ} \\
\cmidrule(lr){2-4}\cmidrule(lr){5-7}
Metric 
& MM-SOLD & $\sigma$-CFDM & Latent DDPM
& MM-SOLD & $\sigma$-CFDM & Latent DDPM \\
\midrule
FID $\downarrow$
& -- & -- & --
& 1.8025$\pm$0.0785 & 2.5255$\pm$0.1209 & \textbf{1.6583$\pm$0.1129} \\
KID $\downarrow$
& \textbf{0.0697$\pm$0.0085}
& 0.2699$\pm$0.0266
& 0.2746$\pm$0.0444
& 0.0020$\pm$0.0001
& 0.0029$\pm$0.0001
& \textbf{0.0017$\pm$0.0001} \\
Recall $\uparrow$
& \textbf{0.8100$\pm$0.0101}
& 0.7707$\pm$0.0122
& 0.7420$\pm$0.0218
& \textbf{0.5760$\pm$0.0115}
& 0.4200$\pm$0.0272
& 0.4820$\pm$0.0394 \\
DupRate $\downarrow$
& \textbf{0.0000$\pm$0.0000}
& \textbf{0.0000$\pm$0.0000}
& 0.0080$\pm$0.0034
& \textbf{0.0000$\pm$0.0000}
& 0.0020$\pm$0.0013
& 0.0150$\pm$0.0046 \\
Train time $\downarrow$
& \textbf{0 h}
& \textbf{0 h}
& 0.20 h
& \textbf{0 h}
& \textbf{0 h}
& 13.15 h \\
Sample time $\downarrow$
& \textbf{2.80 ms}
& 6.67 ms
& 9.40 ms
& 45.21 ms
& 54.91 ms
& \textbf{16.23 ms} \\
Device
& CPU & CPU & V100
& CPU & CPU & H100 \\
\bottomrule
\end{tabular}
\end{table}

\subsection{CelebA-HQ: generation}
\label{exp:celebAHQ_generation}
We further evaluate MM-SOLD on CelebA-HQ \cite{karras2017progressive} at $256\times256$ resolution, with $27{,}000$ training and $3{,}000$ held-out test images encoded by an NRAE into $700$-dimensional latent codes. At this scale, full-training-set score evaluation is computationally and memory intensive, so both MM-SOLD and $\sigma$-CFDM use the nearest-neighbor score estimator of Section \ref{sec:nn_score_estimation} with $K=L=50$, run for $100$ steps in the partially whitened latent space. The latent DDPM baseline is trained on all $27{,}000$ latents with $1{,}000$ diffusion steps and $100$ DDIM sampling steps. We use the same metrics as in Section \ref{exp:handwritten_digit_generation} and additionally report Fréchet Inception Distance (FID) \cite{heusel2017gans}, which is more reliable here given the larger RGB test set and standard Inception features. For each $(\sigma,M)$ setting we generate $3{,}000$ samples and report mean $\pm$ std over five repeats. Full setup is in Appendix \ref{app:celebahq_generation_details}.

Figure \ref{fig:celebahq_generation_samples} shows generated samples with $(\sigma,M)=(2.5,32)$. The latent DDPM produces sharp, realistic faces, reflecting the fitting ability of a well-trained neural generator. In contrast, $\sigma$-CFDM mainly captures common facial structures such as eyes, noses, and mouths, but its barycentric sampling washes out identity-specific details and reduces diversity. MM-SOLD better follows the latent data manifold, producing high-fidelity faces with richer facial details, visually approaching the trained DDPM without any model training. Table \ref{tab:generation_results} confirms this trend. On CelebA-HQ, MM-SOLD improves over $\sigma$-CFDM by $28.6\%$ in FID, $31.0\%$ in KID, and $37.1\%$ in Recall, while keeping DupRate at zero. The latent DDPM gives the best FID and KID, but requires $13.15$ hours of training on an H100 GPU and has lower Recall and higher DupRate than MM-SOLD. MM-SOLD is training-free and samples $1.21\times$ faster than $\sigma$-CFDM on CPU. Additional samples and heatmaps in Appendix \ref{app:celebahq_generation_details} further demonstrate the hyperparameter robustness of MM-SOLD. 

In Appendix \ref{app:ablation_studies}, we also investigate the effects of Langevin step size $h$, number of steps $T$, particle number $P$, and training-set size $N$ on MM-SOLD.
\begin{figure}[t]
\centering
\setlength{\tabcolsep}{2pt}
\renewcommand{\arraystretch}{0.9}
\begin{tabular}{cccc}
\includegraphics[width=0.24\textwidth]{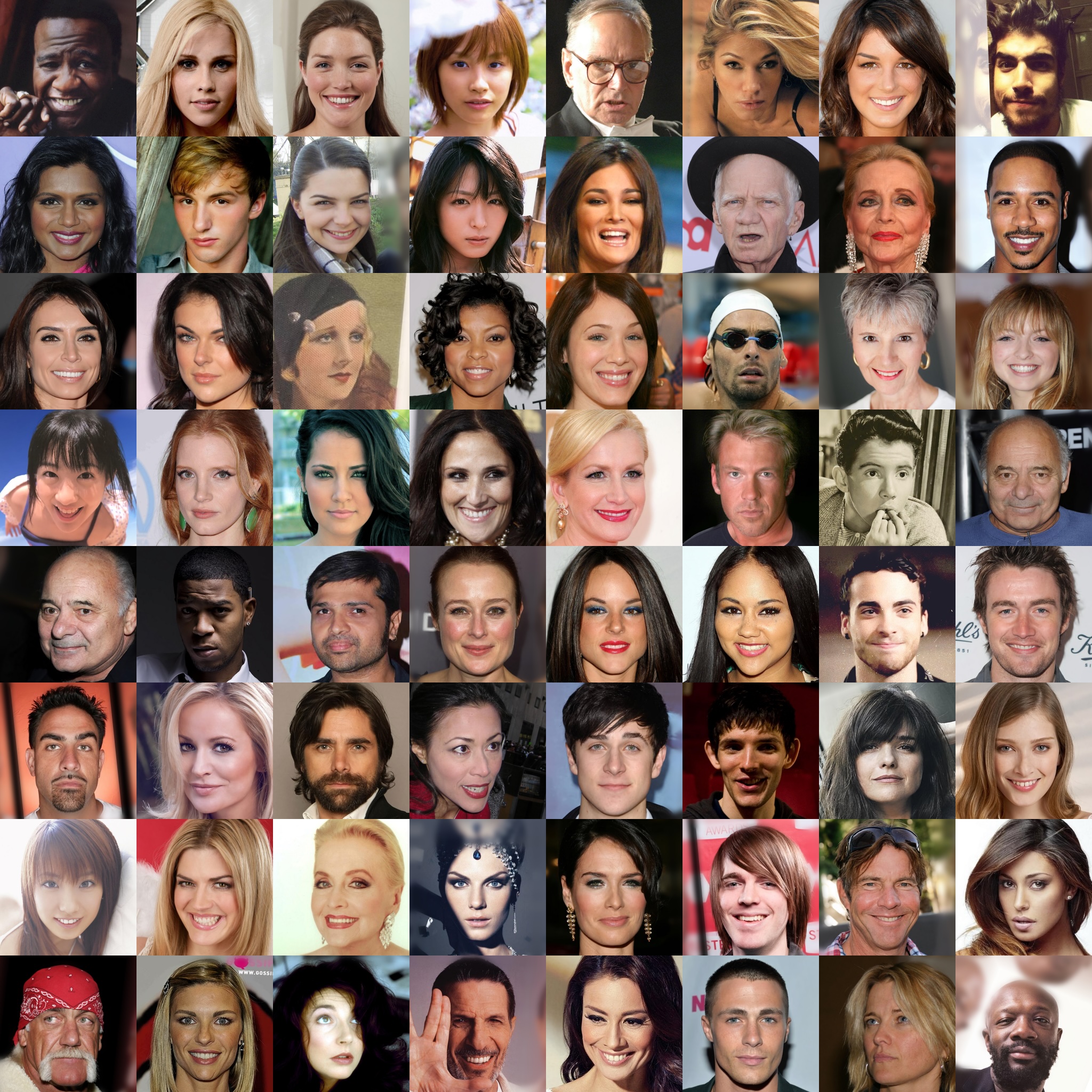} &
\includegraphics[width=0.24\textwidth]{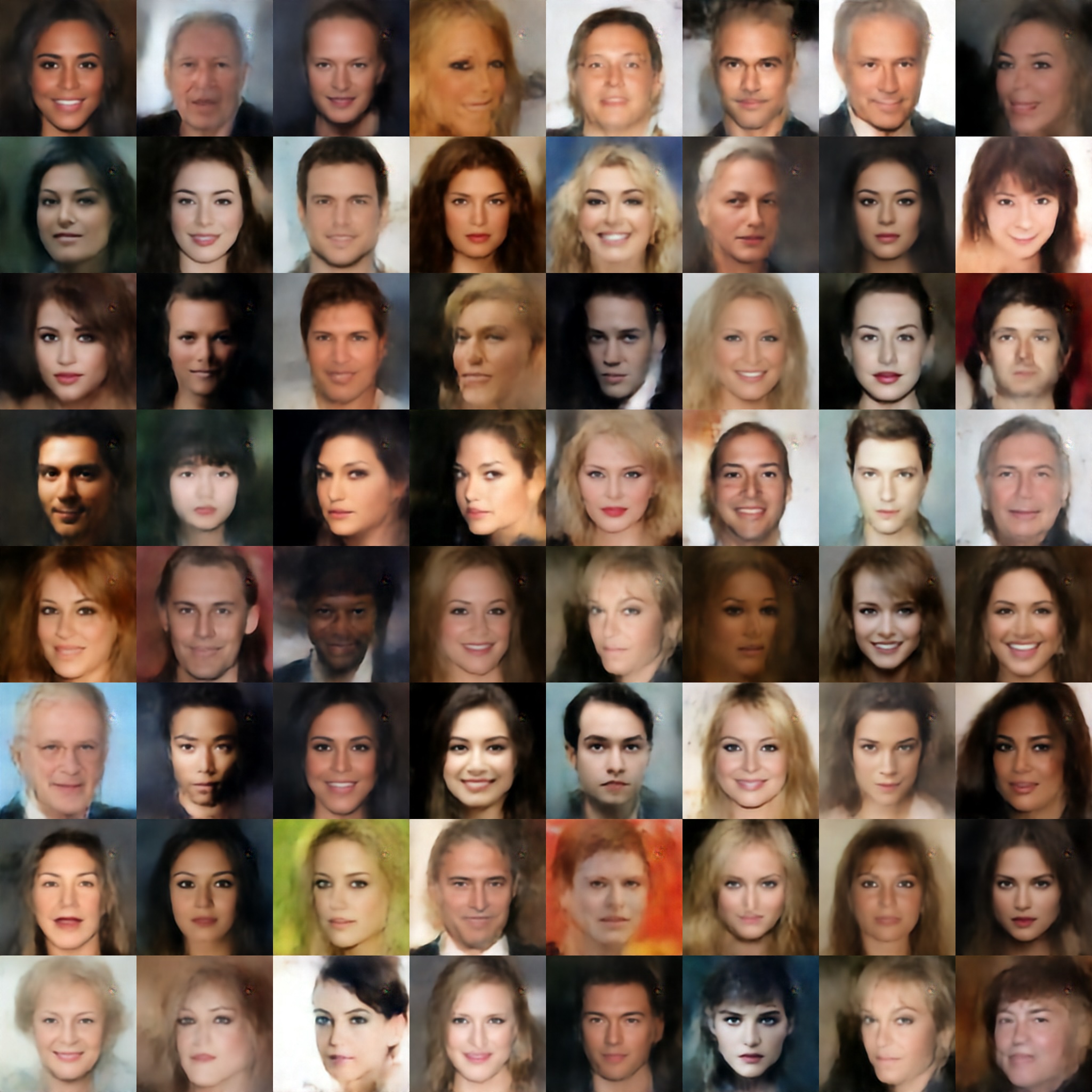} &
\includegraphics[width=0.24\textwidth]{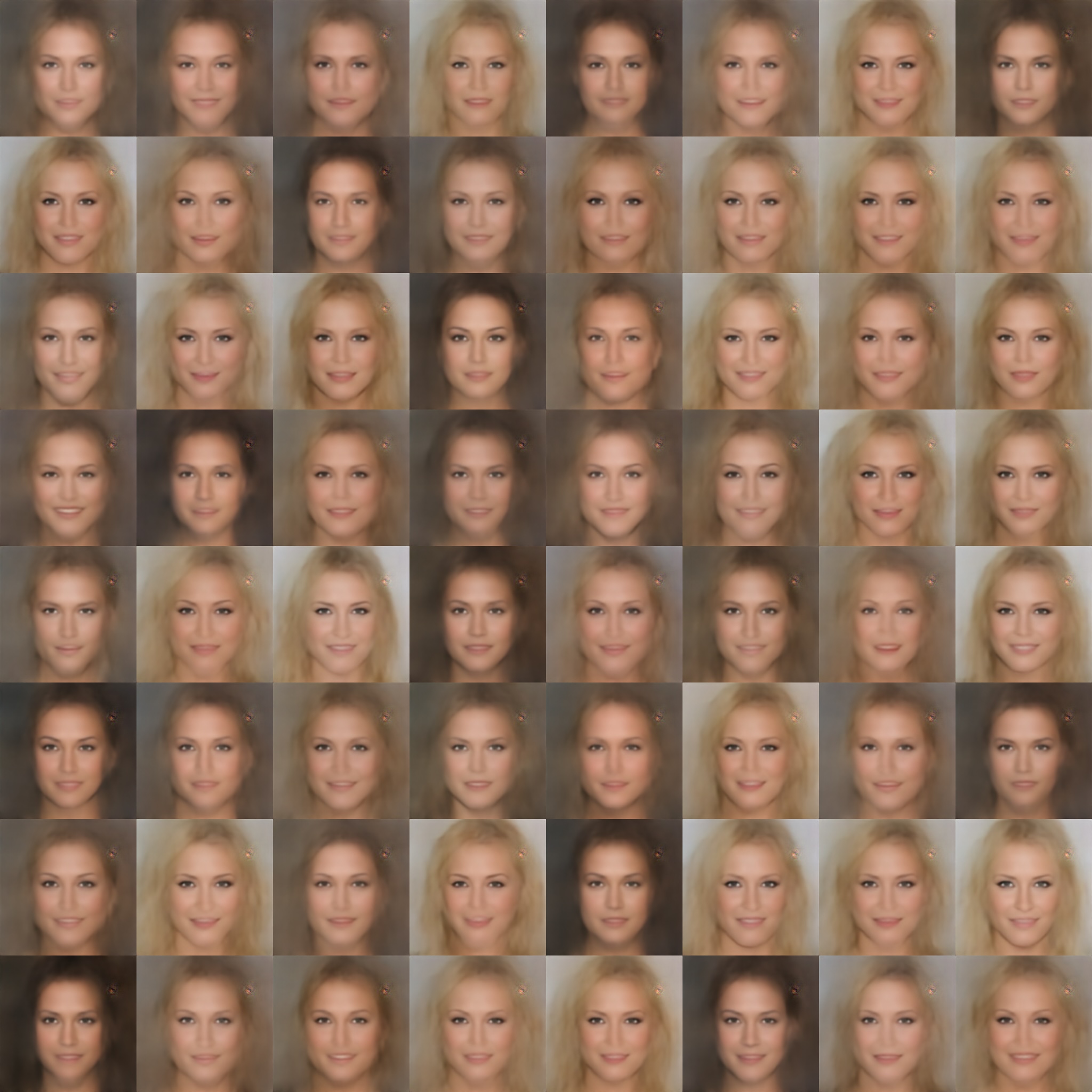} &
\includegraphics[width=0.24\textwidth]{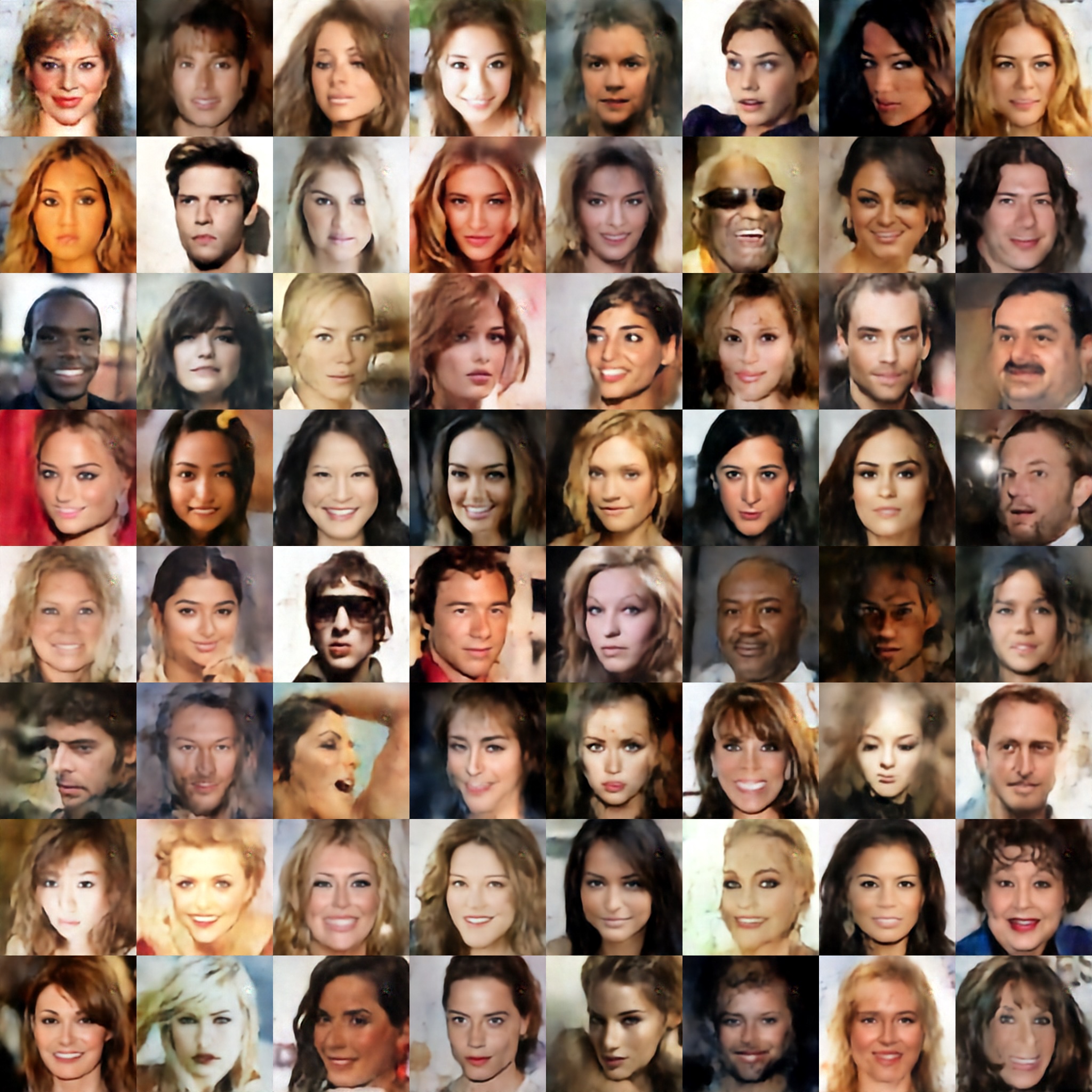} \\
{\scriptsize Real samples} &
{\scriptsize Latent DDPM} &
{\scriptsize $\sigma$-CFDM} &
{\scriptsize MM-SOLD}
\end{tabular}
\caption{Real CelebA-HQ ($256\times256$) images (left), Latent DDPM samples, $\sigma$-CFDM samples, and MM-SOLD samples (right), decoded from the NRAE latent space.}
\label{fig:celebahq_generation_samples}
\end{figure}

\section{Conclusion and discussion}
\label{sec:Conclusion_and_discussion}
    In this work, we propose moment-matched score-smoothed overdamped Langevin dynamics (MM-SOLD), a training-free generative sampler based on an interacting particle system. MM-SOLD leverages the generalization effect of smoothing the empirical score, but additionally enforces the particle mean and covariance to match those of the training data at every iteration. This yields a Langevin sampler whose large-particle stationary marginal is a linear and quadratic exponential tilting of the naive score-smoothed target. Across low-dimensional examples and high-dimensional image generation tasks, our results show that moment matching substantially reduces the distortion observed in score smoothing. As a result, MM-SOLD enables fast, robust, CPU-based sampling without neural network training and produces samples with strong fidelity and diversity on real-world datasets.

Our analysis identifies the limiting moment-matched target and empirically shows that it can recover the data manifold, but we do not provide a finite-sample error bound between this target and the true data law. Such a bound would clarify how score smoothing, moment matching, and parameter estimation jointly affect performance. The current centered Stiefel construction also requires $P\ge d+1$ particles, which is natural for enforcing full covariance constraints but differs from neural generators that sample independently. Relaxing this requirement is an important direction. Finally, although MM-SOLD approaches neural diffusion baselines in latent space, its fidelity still lags behind state-of-the-art neural models on complex high-dimensional images. This suggests that score smoothing alone may not capture all structure needed for highly-curved real image manifolds. Incorporating additional inductive biases from neural generative modeling---such as those induced by model architectures \cite{kamb2025analytic}---may help close this gap.
\section*{Acknowledgements}
Daniel Paulin was supported by a Nanyang Technological University Start-up Grant, project number: 024968-00001.
\bibliographystyle{unsrtnat}
\bibliography{References}

\appendix
\section{Symbols and notation}
\label{app:symbols}

\begin{table}[h]
\centering
\caption{Summary of frequently used notation.}
\label{tab:symbols}
\setlength{\tabcolsep}{4pt}
\renewcommand{\arraystretch}{0.92}
\begin{tabular}{ll}
\toprule
Symbol & Description \\
\midrule
\multicolumn{2}{l}{\textit{Data and score smoothing}} \\
\midrule
$\pi_{\mathrm{data}}$ & True data distribution \\
$\hat\pi_{\mathrm{data}}$ & Empirical distribution of the training set \\
$\{x_i\}_{i=1}^N$ & Training samples \\
$N$ & Number of training samples \\
$d$ & Ambient or latent dimension \\
$p_t$ & Density of the forward diffusion process at time $t$ \\
$\hat p_t$ & Empirical forward density initialized from $\hat\pi_{\mathrm{data}}$ \\
$\hat p^\delta$ & Isotropic GMM centered at the training samples \\
$\delta$ & Standard deviation of each GMM component \\
$k$ & Score-smoothing kernel \\
$\rho_\varepsilon$ & Distribution of smoothing noise $\varepsilon$ \\
$\sigma$ & Score-smoothing bandwidth \\
$M$ & Number of Monte Carlo smoothing samples \\
$V(z)$ & Score-smoothed potential, $V(z)=-\mathbb E_\varepsilon[\log \hat p^\delta(z+\sigma\varepsilon)]$ \\
$g(z)$ & Negative smoothed score, $g(z)=\nabla V(z)$ \\
\midrule
\multicolumn{2}{l}{\textit{Moment-matched target}} \\
\midrule
$\mu^*$ & Empirical mean of the training set \\
$\Sigma^*$ & Empirical covariance of the training set \\
$L^*$ & Cholesky factor of $\Sigma^*$, with $\Sigma^*=L^*(L^*)^\top$ \\
$\mathcal C(\mu^*,\Sigma^*)$ & Class of densities with mean $\mu^*$ and covariance $\Sigma^*$ \\
$\mathcal F(\rho)$ & Free-energy functional for the score-smoothed potential \\
$\hat p^{\delta,k,\sigma}_{\mu^*,\Sigma^*}$ & Moment-matched limiting target of MM-SOLD \\
$\lambda$ & Linear exponential-tilting parameter \\
$\Lambda$ & Quadratic exponential-tilting parameter \\
\midrule
\multicolumn{2}{l}{\textit{MM-SOLD particle system}} \\
\midrule
$P$ & Number of generated particles \\
$Z\in\mathbb R^{P\times d}$ & Row-stacked particle matrix in the original space \\
$Y\in\mathbb R^{P\times d}$ & Centered and whitened particle matrix \\
$\mathcal M_P$ & Centered and scaled Stiefel-type manifold for moment matching \\
$\Pi_Y$ & Tangent projection at $Y\in\mathcal M_P$ \\
$\mathcal{R}(\cdot)$ & Retraction back to $\mathcal M_P$ \\
$h$ & Langevin step size \\
$T$ & Number of Langevin iterations \\
\midrule
\multicolumn{2}{l}{\textit{Nearest-neighbor score estimation}} \\
\midrule
$K$ & Number of nearest neighbors kept for local score estimation \\
$L$ & Number of random remainder samples used for correction \\
$U(z)$ & Local subset used to estimate the GMM score at query $z$ \\
$\omega_i$ & Importance weight for sample $x_i$ in the local corrected estimator \\
\bottomrule
\end{tabular}
\end{table}
\section{Derivation details of MM-SOLD}
\label{app:MM-SOLD_details}

We present here some derivation details of MM-SOLD as a supplement. Throughout, $Z\in\mathbb R^{P\times d}$ denotes the row-stacked particle matrix, and
\begin{equation}
Y=(Z-\mathbf 1_P{\mu^*}^\top)(L^*)^{-\top},
\qquad
Z=\mathbf 1_P{\mu^*}^\top+Y(L^*)^\top,
\label{eq:appendix-whitening-map}
\end{equation}
where $\Sigma^*=L^*(L^*)^\top$.

\paragraph{Equivalent form of the moment constraints}
Denote the empirical mean and covariance of particles as:
\begin{equation}
\hat\mu_P(Z)=\frac{1}{P}Z^\top \mathbf 1_P,
\qquad
\hat\Sigma_P(Z)=\frac{1}{P}\bigl(Z-\mathbf 1_P\hat\mu_P(Z)^\top\bigr)^\top
\bigl(Z-\mathbf 1_P\hat\mu_P(Z)^\top\bigr).
\label{eq:appendix-empirical-moments}
\end{equation}
We claim that the constraints $\hat\mu_P(Z)=\mu^*$ and $\hat\Sigma_P(Z)=\Sigma^*$ are equivalent to
\begin{equation}
\mathbf 1_P^\top Y=0,
\qquad
Y^\top Y=PI_d.
\label{eq:appendix-centered-stiefel}
\end{equation}
Indeed, by \eqref{eq:appendix-whitening-map},
\begin{align*}
\mathbf 1_P^\top Y
&=
\mathbf 1_P^\top (Z-\mathbf 1_P{\mu^*}^\top)(L^*)^{-\top} \\
&=
\bigl(\mathbf 1_P^\top Z-P{\mu^*}^\top\bigr)(L^*)^{-\top},
\end{align*}
so $\mathbf 1_P^\top Y=0$ if and only if $\mathbf 1_P^\top Z=P{\mu^*}^\top$, i.e. $\hat\mu_P(Z)=\mu^*$. Under this mean constraint, 
\begin{align*}
\hat\Sigma_P(Z)
&=
\frac{1}{P}(Z-\mathbf 1_P{\mu^*}^\top)^\top
(Z-\mathbf 1_P{\mu^*}^\top) \\
&=
\frac{1}{P}\bigl(Y(L^*)^\top\bigr)^\top
\bigl(Y(L^*)^\top\bigr) \\
&=
\frac{1}{P}L^*Y^\top Y(L^*)^\top.
\end{align*}
Hence $\hat\Sigma_P(Z)=\Sigma^*=L^*(L^*)^\top$ if and only if
\begin{equation*}
\frac{1}{P}L^*Y^\top Y(L^*)^\top=L^*(L^*)^\top,
\end{equation*}
which, since $L^*$ is invertible, is equivalent to $Y^\top Y=PI_d$. Therefore the particles evolve on:
\begin{equation}
\mathcal M_P=
\left\{
Y\in\mathbb R^{P\times d}:\;
\mathbf 1_P^\top Y=0,\;
Y^\top Y=PI_d
\right\}.
\label{eq:appendix-manifold}
\end{equation}

\paragraph{Tangent projection}
We next justify the projection formula used in line 8 of Algorithm \ref{alg:mmsold-lm}. The linear centering constraint \(\mathbf 1_P^\top Y=0\) has tangent space
\begin{equation*}
T_Y^{\mathrm{ctr}}=
\left\{
A\in\mathbb R^{P\times d}:\;
\mathbf 1_P^\top A=0
\right\},    
\end{equation*}
with orthogonal projection
\begin{equation}
\Pi_{\mathrm{ctr}}(A)=A-\frac{1}{P}\mathbf 1_P\mathbf 1_P^\top A.
\label{eq:appendix-centering-proj}
\end{equation}
Indeed, $\mathbf 1_P^\top\Pi_{\mathrm{ctr}}(A)=\mathbf 1_P^\top A-\frac{1}{P}\mathbf 1_P^\top\mathbf 1_P\,\mathbf 1_P^\top A=0$.

Now consider the constraint $Y^\top Y=PI_d$. Differentiating in direction $A$ gives
\begin{equation}
Y^\top A+A^\top Y=0,
\label{eq:appendix-stiefel-tangent}
\end{equation}
so the corresponding tangent space is
\begin{equation*}
T_Y^{\mathrm{St}}=
\left\{
A\in\mathbb R^{P\times d}:\;
Y^\top A+A^\top Y=0
\right\}.
\end{equation*}
Define
\begin{equation}
\Pi_Y^{\mathrm{St}}(A)=A-Y\,\sym\!\left(\frac{1}{P}Y^\top A\right),
\qquad
\sym(B):=\frac{1}{2}(B+B^\top).
\label{eq:appendix-stiefel-proj}
\end{equation}
Setting $S=\sym(P^{-1}Y^\top A)$, we have $S^\top=S$, and since $Y^\top Y=PI_d$,
\begin{align*}
Y^\top \Pi_Y^{\mathrm{St}}(A)+\Pi_Y^{\mathrm{St}}(A)^\top Y
&=
Y^\top A-PS+A^\top Y-PS \\
&=
Y^\top A+A^\top Y-2PS.
\end{align*}
But by the definition of $S$,
\begin{equation*}
2PS=Y^\top A+A^\top Y,   
\end{equation*}
hence $Y^\top \Pi_Y^{\mathrm{St}}(A)+\Pi_Y^{\mathrm{St}}(A)^\top Y=0$, so $\Pi_Y^{\mathrm{St}}(A)\in T_Y^{\mathrm{St}}$. The full tangent projection used in MM-SOLD is therefore
\begin{equation}
\Pi_Y(A):=\Pi_Y^{\mathrm{St}}\bigl(\Pi_{\mathrm{ctr}}(A)\bigr).
\label{eq:appendix-full-proj}
\end{equation}
Since $Y\in\mathcal M_P$ already satisfies $\mathbf 1_P^\top Y=0$, the correction term
\begin{equation*}
Y\,\sym\!\left(\frac{1}{P}Y^\top \Pi_{\mathrm{ctr}}(A)\right)
\end{equation*}
also has zero row sum, so $\Pi_Y(A)$ lies in the tangent space of $\mathcal M_P$.

\paragraph{QR retraction}
The retraction used in line 10 of Algorithm \ref{alg:mmsold-lm} is obtained by first centering and then QR-normalizing. Let
\begin{equation}
A_c:=\Pi_{\mathrm{ctr}}(A)=QR
\label{eq:appendix-qr}
\end{equation}
be the reduced QR factorization of the centered matrix, and define
\begin{equation}
\mathcal R(A):=\sqrt{P}\,Q,
\label{eq:appendix-retraction}
\end{equation}
with the standard diagonal sign correction on $R$. We verify that $\mathcal R(A)\in\mathcal M_P$. Since $A_c$ is centered, $\mathbf 1_P^\top A_c=0$, hence $\mathbf 1_P^\top QR=0$. Whenever $A_c$ has full column rank, $R$ is invertible, so $\mathbf 1_P^\top Q=0$. Therefore $\mathbf 1_P^\top \mathcal R(A)=0$. Moreover,
\begin{equation*}
\mathcal R(A)^\top \mathcal R(A)
=
(\sqrt{P}Q)^\top(\sqrt{P}Q)
=
PQ^\top Q
=
PI_d.  
\end{equation*}
Thus $\mathcal R(A)\in\mathcal M_P$.

\paragraph{Pullback of the gradient}
It remains to justify the identity \(G_k^Y=G_k^ZL^*\) used in line 6 of Algorithm \ref{alg:mmsold-lm}. Let
\begin{equation*}
F(Y):=\sum_{i=1}^P V(Z^i),
\qquad
Z=\mathbf 1_P{\mu^*}^\top+Y(L^*)^\top,
\end{equation*}
and write the row-stacked gradient in \(Z\)-coordinates as
\begin{equation*}
G^Z=
\begin{bmatrix}
\nabla V(Z^1)^\top\\
\vdots\\
\nabla V(Z^P)^\top
\end{bmatrix}.
\end{equation*}
For a variation $dY$, we have $dZ=dY(L^*)^\top$. Using the Frobenius pairing,
\begin{align*}
dF
&=
\operatorname{tr}\!\bigl((G^Z)^\top dZ\bigr) \\
&=
\operatorname{tr}\!\bigl((G^Z)^\top dY(L^*)^\top\bigr) \\
&=
\operatorname{tr}\!\bigl((G^ZL^*)^\top dY\bigr).
\end{align*}
Hence the gradient in $Y$-coordinates is
\begin{equation}
G^Y=G^ZL^*.
\label{eq:appendix-pullback}
\end{equation}
This is exactly the pullback formula used in MM-SOLD.

\section{Details on the nearest-neighbor estimator of the smoothed score}
\label{app:nn_score}

We give the details of the nearest-neighbor estimator used to approximate the negative smoothed score $g(z)=\nabla V(z)$. Recall that
\begin{equation*}
V(z)=-\mathbb E_{\varepsilon}\big[\log \hat p^\delta(z+\sigma\varepsilon)\big],
\qquad
g(z)=-\mathbb E_{\varepsilon}\big[\nabla \log \hat p^\delta(z+\sigma\varepsilon)\big],
\qquad
\varepsilon\sim\mathcal N(0,I_d).
\end{equation*}
For the isotropic GMM
\begin{equation*}
\hat p^\delta(y)=\frac1N\sum_{i=1}^N \mathcal N(y\mid x_i,\delta^2 I_d),
\end{equation*}
its score has the closed form
\begin{equation*}
\nabla\log \hat p^\delta(y)
=
\frac{1}{\delta^2}\left(c(y)-y\right),
\qquad
c(y)=\sum_{i=1}^N w_i(y)x_i,
\end{equation*}
where the softmax weights
\begin{equation*}
w_i(y)=
\frac{
\exp\!\left(-\|y-x_i\|^2/(2\delta^2)\right)
}{
\sum_{j=1}^N
\exp\!\left(-\|y-x_j\|^2/(2\delta^2)\right)
}.
\end{equation*}
Thus estimating $g(z)$ reduces to estimating the local posterior mean $c(y)$ at perturbed points $y=z+\sigma\varepsilon$.

\paragraph{Local corrected estimator}
For a query point $z$, let $A_K(z)$ be the index set of its $K$ nearest training samples. Let $B_L(z)$ be sampled uniformly without replacement from $\{1,\dots,N\}\setminus A_K(z)$ with $|B_L(z)|=L$, and define
\begin{equation*}
U(z):=A_K(z)\cup B_L(z),
\qquad
m:=K+L.
\end{equation*}
For $i\in U(z)$, assign
\begin{equation*}
\omega_i=
\begin{cases}
1, & i\in A_K(z),\\
(N-K)/L, & i\in B_L(z).
\end{cases}
\end{equation*}
For any perturbed point $y$, define
\begin{align*}
\widehat T_0(y\mid z)
&=
\sum_{i\in U(z)}
\omega_i
\exp\!\left(-\frac{\|y-x_i\|^2}{2\delta^2}\right), \\
\widehat T_1(y\mid z)
&=
\sum_{i\in U(z)}
\omega_i
\exp\!\left(-\frac{\|y-x_i\|^2}{2\delta^2}\right)x_i,
\end{align*}
and
\begin{equation*}
\widehat c(y\mid z)=\frac{\widehat T_1(y\mid z)}{\widehat T_0(y\mid z)}.
\end{equation*}
The local approximation of the negative GMM score at $y$ is then
\begin{equation}
\widehat g_{\mathrm{loc}}(y\mid z)
=
\frac{1}{\delta^2}\left(y-\widehat c(y\mid z)\right).
\end{equation}

\begin{algorithm}[t]
\caption{Nearest-neighbor estimator for the negative smoothed score $g(z)$}
\label{alg:nn_score_estimator}
\begin{algorithmic}[1]
\Require Query point $z$, training set $\{x_i\}_{i=1}^N$, component standard deviation $\delta$, smoothing bandwidth $\sigma$, nearest-neighbor number $K$, remainder number $L$, even number of Monte Carlo samples $M$.
\State Find $A_K(z)$, the $K$ nearest neighbors of $z$.
\State Sample $B_L(z)$ uniformly without replacement from $\{1,\dots,N\}\setminus A_K(z)$.
\State Set $U(z)=A_K(z)\cup B_L(z)$ and weights $\omega_i=1$ for $i\in A_K(z)$, $\omega_i=(N-K)/L$ for $i\in B_L(z)$.
\For{$r=1,\dots,M/2$}
    \State Generate a Gaussian smoothing direction $\varepsilon_r$ by projected-space sampling if $K+L<d$, otherwise by ambient-space sampling.
    \State Set $y_r^+=z+\sigma\varepsilon_r$ and $y_r^-=z-\sigma\varepsilon_r$.
    \State Compute $\widehat c(y_r^+\mid z)$ and $\widehat c(y_r^-\mid z)$ using the weighted local softmax over $U(z)$.
\EndFor
\State \Return
\quad
$
\widehat g(z)
=
\frac{1}{M\delta^2}
\sum_{r=1}^{M/2}
\left[
y_r^+-\widehat c(y_r^+\mid z)
+
y_r^--\widehat c(y_r^-\mid z)
\right].
$
\end{algorithmic}
\end{algorithm}

\paragraph{Projected-space smoothing}
Let $U(z)=\{i_1,\dots,i_m\}$ with $m=K+L$, and write $\nu_a=x_{i_a}$ for $a=1,\dots,m$. For $y=z+\sigma\varepsilon$, the local softmax logits satisfy
\begin{align*}
-\frac{\|z+\sigma\varepsilon-\nu_a\|^2}{2\delta^2}
&=
-\frac{\|z-\nu_a\|^2}{2\delta^2}
+
\frac{\sigma}{\delta^2}\langle \varepsilon,\nu_a\rangle
+
\mathrm{const}(z,\varepsilon),
\end{align*}
where the last term is common to all $a$ and therefore cancels in the softmax. Hence the smoothing noise affects the local softmax only through the inner products
\begin{equation*}
\eta_a=\langle \varepsilon,\nu_a\rangle,
\qquad a=1,\dots,m.
\end{equation*}
If $\varepsilon\sim\mathcal N(0,I_d)$, then
\begin{equation*}
\eta=(\eta_1,\dots,\eta_m)^\top\sim\mathcal N(0,G),
\qquad
G_{ab}:=\langle \nu_a,\nu_b\rangle.
\end{equation*}
Thus, when $m<d$, we sample $\eta\sim\mathcal N(0,G)$ directly and perturb the local logits by $\sigma\eta_a/\delta^2$. For the antithetic point $z-\sigma\varepsilon$, we use the same $\eta$ with the opposite sign. This gives the same local softmax perturbation law as ambient-space Gaussian smoothing, while avoiding the generation and projection of full $d$-dimensional Gaussian noises. When $m\ge d$, we instead sample $\varepsilon\sim\mathcal N(0,I_d)$ as usual in the ambient space.

\paragraph{Unbiasedness of the corrected local sums}
Define the full sums
\begin{equation*}
T_0(y)
=
\sum_{i=1}^N
\exp\!\left(-\frac{\|y-x_i\|^2}{2\delta^2}\right),
\qquad
T_1(y)
=
\sum_{i=1}^N
\exp\!\left(-\frac{\|y-x_i\|^2}{2\delta^2}\right)x_i.
\end{equation*}
Conditioned on $A_K(z)$, the corrected local sums satisfy
\begin{equation}
\mathbb E\!\left[\widehat T_0(y\mid z)\mid A_K(z)\right]=T_0(y),
\qquad
\mathbb E\!\left[\widehat T_1(y\mid z)\mid A_K(z)\right]=T_1(y).
\label{eq:nn_unbiased_local_sums}
\end{equation}
Indeed, for any scalar or vector-valued sequence $\{a_i\}_{i\notin A_K(z)}$, uniform sampling without replacement gives
\begin{equation*}
\mathbb E\left[\sum_{i\in B_L(z)}a_i\,\middle|\,A_K(z)\right]
=
\frac{L}{N-K}\sum_{i\notin A_K(z)}a_i.
\end{equation*}
Multiplication by $(N-K)/L$ recovers the full contribution from the complement of $A_K(z)$. Taking
\begin{equation*}
a_i=
\exp\!\left(-\frac{\|y-x_i\|^2}{2\delta^2}\right)
\quad\text{and}\quad
a_i=
\exp\!\left(-\frac{\|y-x_i\|^2}{2\delta^2}\right)x_i
\end{equation*}
proves \eqref{eq:nn_unbiased_local_sums}. The ratio $\widehat c(y\mid z)=\widehat T_1(y\mid z)/\widehat T_0(y\mid z)$ is generally not exactly unbiased, since it is a ratio estimator.

\paragraph{Consistency}
If $\widehat T_0(y\mid z)\to T_0(y)$ and $\widehat T_1(y\mid z)\to T_1(y)$ in probability, and $T_0(y)>0$, then by the continuous mapping theorem,
\begin{equation}
\widehat c(y\mid z)\xrightarrow[]{\mathbb P} c(y),
\qquad
\widehat g_{\mathrm{loc}}(y\mid z)
\xrightarrow[]{\mathbb P}
-\nabla\log\hat p^\delta(y).
\end{equation}
The antithetic estimator remains unbiased for the smoothing expectation because $\varepsilon$ and $-\varepsilon$ have the same distribution. As $M\to\infty$, equivalently as the number of antithetic pairs $M/2\to\infty$, the law of large numbers gives
\begin{equation}
\widehat g(z)
\xrightarrow[]{\mathbb P}
-\mathbb E_{\varepsilon}\left[\nabla\log\hat p^\delta(z+\sigma\varepsilon)\right]
=
\nabla V(z).
\end{equation}
Therefore, Algorithm \ref{alg:nn_score_estimator} gives a consistent approximation of the negative smoothed score used in MM-SOLD.

\section{Proofs}
\label{app:proofs}

\subsection{Proof of Proposition~\ref{prop:large_particle_target}}
\label{app:proof_large_particle_target}

We prove Proposition~\ref{prop:large_particle_target} for the
Gaussian-smoothed log-GMM potential. The proof has six steps. First, we record
basic analytic properties of the potential. Second, we prove moment
realizability and identify the entropy projection in whitened coordinates.
Third, we prove the one-particle local density-ratio estimate needed for
equivalence of ensembles. Fourth, we identify the finite-\(P\) constrained
Gibbs law by coarea conditioning. Fifth, we prove convergence of the
one-particle marginal. Finally, we identify the limiting law in \(Z\)-space and
derive the tilting-parameter identities.

\paragraph{Setup.}
Recall that
\[
\hat p^\delta(z)
=
\frac1N\sum_{j=1}^N \mathcal N(z\mid x_j,\delta^2 I_d),
\qquad
V(z)
=
-\mathbb E_{\varepsilon\sim\mathcal N(0,I_d)}
\left[\log \hat p^\delta(z+\sigma\varepsilon)\right],
\]
where \(\delta>0\) and \(\sigma\ge0\). Since \(\Sigma^*\succ0\), choose an
invertible \(L^*\) satisfying
\[
\Sigma^*=L^*(L^*)^\top .
\]
We work in whitened coordinates
\[
z=\mu^*+L^*y,
\qquad
\widetilde V(y):=V(\mu^*+L^*y),
\qquad
\widetilde q(dy)=Z_{\widetilde q}^{-1}e^{-\widetilde V(y)}\,dy .
\]
The affine map \(y\mapsto z=\mu^*+L^*y\) pushes \(\widetilde q\) forward to
\(q(dz)=Z_q^{-1}e^{-V(z)}\,dz\), up to the constant Jacobian.

\paragraph{Step 1: Analytic properties of the Gaussian-smoothed log-GMM.}
Since \(\delta>0\), the GMM density \(\hat p^\delta\) is strictly positive and
real analytic on \(\mathbb R^d\). Hence \(\log\hat p^\delta\) is smooth, and
Gaussian smoothing permits differentiation under the expectation defining
\(V\).

Let
\[
R:=\max_{1\le j\le N}\|x_j\|.
\]
By log-sum-exp bounds, there are constants \(c_1,c_2,c_3,c_4>0\) such that,
for all \(u\in\mathbb R^d\),
\[
c_1\|u\|^2-c_2
\le
-\log\hat p^\delta(u)
\le
c_3\|u\|^2+c_4 .
\]
Indeed, up to additive constants,
\[
-\log\hat p^\delta(u)
=
-\log\sum_{j=1}^N
\exp\left(-\frac{\|u-x_j\|^2}{2\delta^2}\right),
\]
and this is controlled above and below by \(\min_j\|u-x_j\|^2\), up to
additive constants. Since
\[
(\|u\|-R)^2
\le
\min_j\|u-x_j\|^2
\le
(\|u\|+R)^2,
\]
the claimed bounds follow. Averaging over
\(u=z+\sigma\varepsilon\), with \(\varepsilon\sim\mathcal N(0,I_d)\), preserves
quadratic growth. Hence there exist constants \(\alpha_->0\),
\(\alpha_+>0\), and \(C<\infty\) such that
\[
\alpha_-\|z\|^2-C
\le
V(z)
\le
\alpha_+\|z\|^2+C .
\]
Since \(L^*\) is invertible, the whitened potential also satisfies
\[
\beta_-\|y\|^2-C
\le
\widetilde V(y)
\le
\beta_+\|y\|^2+C
\]
for some \(\beta_->0\), \(\beta_+>0\), and \(C<\infty\). Thus
\(e^{-\widetilde V(y)}dy\) has full support and Gaussian tails.

We also record a growth bound for \(g=\nabla V\). The GMM score satisfies
\[
\nabla\log\hat p^\delta(u)
=
\frac{c(u)-u}{\delta^2},
\qquad
c(u)=\sum_{j=1}^N w_j(u)x_j,
\]
where \(w_j(u)\ge0\) and \(\sum_jw_j(u)=1\). Hence \(c(u)\) lies in the convex
hull of the training samples and \(\|c(u)\|\le R\). Therefore
\[
\|\nabla\log\hat p^\delta(u)\|
\le
\frac{R+\|u\|}{\delta^2}.
\]
Since
\[
g(z)=\nabla V(z)
=
-\mathbb E_\varepsilon
\left[\nabla\log\hat p^\delta(z+\sigma\varepsilon)\right],
\]
we obtain
\[
\|g(z)\|\le C(1+\|z\|)
\]
for a constant \(C<\infty\).

\paragraph{Step 2: Moment realizability and entropy projection in \(Y\)-space.}
Let
\[
\widetilde T(y):=(y,yy^\top),
\qquad
t_0:=(0,I_d),
\]
and define
\[
\widetilde{\mathcal C}
=
\left\{
\rho:
\int y\,\rho(dy)=0,\;
\int yy^\top\,\rho(dy)=I_d
\right\}.
\]

\begin{lemma}[Moment realization for the Gaussian-smoothed log-GMM]
\label{lem:moment_realization_log_gmm}
Under the assumptions of Proposition~\ref{prop:large_particle_target}, there
exist \(a_*\in\mathbb R^d\) and \(B_*\in\mathrm{Sym}_d\), lying in the
interior of the quadratic integrability domain
\[
\mathcal D
=
\left\{
(a,B):
\int_{\mathbb R^d}
\exp\left(a^\top y+\frac12y^\top By-\widetilde V(y)\right)\,dy<\infty
\right\},
\]
such that
\[
\widetilde r(dy)
=
Z_*^{-1}
\exp\left(
a_*^\top y+\frac12y^\top B_*y-\widetilde V(y)
\right)\,dy
\]
satisfies
\[
\mathbb E_{\widetilde r}[Y]=0,
\qquad
\mathbb E_{\widetilde r}[YY^\top]=I_d .
\]
Moreover, \(\widetilde r\) is the unique minimizer of
\[
\mathrm{KL}(\rho\,\|\,\widetilde q)
\]
over \(\widetilde{\mathcal C}\).
\end{lemma}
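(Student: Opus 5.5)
We treat Lemma~\ref{lem:moment_realization_log_gmm} as a statement about an exponential family with sufficient statistic \(\widetilde T(y)=(y,yy^\top)\) and base measure \(\widetilde q\). Define the log-partition function
\[
\Psi(a,B):=\log\int_{\mathbb R^d}\exp\Bigl(a^\top y+\tfrac12 y^\top B y-\widetilde V(y)\Bigr)\,dy
\]
on \(\mathcal D\). By Step~1, \(\widetilde V(y)\ge\beta_-\|y\|^2-C\), so \(\mathcal D\) contains the open set \(\{(a,B):B\prec 2\beta_- I_d\}\). The boundary of \(\mathcal D\) can only involve \(B\) with an eigenvalue of at least \(2\beta_-\). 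Indeed, \(\widetilde V\le\beta_+\|y\|^2+C\) shows that \(B\succ 2\beta_+I\) lies outside \(\mathcal D\), but we only need that \(\mathcal D\) is convex with nonempty interior. The family is minimal: the functions \(y_i\) and \(y_iy_j\) (\(i\le j\)) are affinely independent on the support \(\mathbb R^d\). Hence \(\Psi\) is smooth and strictly convex on \(\operatorname{int}\mathcal D\). Its gradient is the mean map \(\nabla\Psi(a,B)=\mathbb E_{a,B}[\widetilde T]\), with the usual factor \(\tfrac12\) on the \(B\)-component.

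The core step is to show that \(t_0=(0,I_d)\) lies in the range of \(\nabla\Psi\) on \(\operatorname{int}\mathcal D\). Two routes are available.

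\emph{Route 1 (steepness).} Show that the family is steep, i.e.\ \(\|\nabla\Psi\|\to\infty\) at boundary points of \(\mathcal D\) approached from inside. Standard exponential-family theory (Barndorff-Nielsen) then says that \(\nabla\Psi(\operatorname{int}\mathcal D)\) equals the interior of the convex hull of the support of \(\widetilde T_\#\widetilde q\). Since \(\widetilde q\) has full support, that interior is \(\{(m,S):S-mm^\top\succ0\}\), which contains \(t_0\).

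\emph{Route 2 (direct, preferred).} Minimize the strictly convex function
\[
\Phi(a,B):=\Psi(a,B)-\tfrac12\operatorname{tr}(B)
\]
over \(\mathcal D\). It suffices to show that \(\Phi\) is coercive and that its minimizer cannot lie on \(\partial\mathcal D\). For coercivity, take a direction \((a,B)\ne0\) and send \(t\to\infty\) along \(t(a,B)\).

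If \(B\) has a positive eigenvalue, or \(a\) has a component in \(\ker B\) with \(B\preceq0\), the pair leaves \(\mathcal D\) or \(\Psi\) grows faster than linearly. If instead \(B\preceq0\) and \(B\ne0\), then \(\Psi(tB)\) grows at most logarithmically in \(t\), while \(-\tfrac t2\operatorname{tr}B\to+\infty\) linearly, so \(\Phi\to\infty\). The remaining subcases are handled in the same way using full support: for any hyperplane or quadric region, \(\widetilde q\) puts positive mass on each side.

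For the boundary, the danger is that \(\Psi\) stays finite at a boundary point with a finite gradient. Here we need the tail structure of \(\widetilde V\). Since \(\widetilde V\) is quadratic plus a bounded-slope correction (\(\|g(z)\|\le C(1+\|z\|)\)), we expect \(\widetilde V(y)=\tfrac12 y^\top Ky+O(\|y\|)\) with \(K=(L^*)^\top L^*/(\delta^2+\sigma^2)\cdot\)const. Concretely, the GMM log-density equals \(-\|u\|^2/(2\delta^2)+O(\|u\|)\), and the Gaussian average preserves this. Then \(\mathcal D=\{B\prec K\}\times\mathbb R^d\) is open, and boundary issues disappear entirely: \(\operatorname{int}\mathcal D=\mathcal D\). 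This is the step I expect to be the main obstacle, and I would settle it with this explicit asymptotic of \(\widetilde V\).

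At the minimizer \((a_*,B_*)\in\operatorname{int}\mathcal D\), stationarity gives \(\mathbb E_{\widetilde r}[Y]=0\) and \(\mathbb E_{\widetilde r}[YY^\top]=I_d\). For uniqueness as the KL projection, take any \(\rho\in\widetilde{\mathcal C}\) with \(\mathrm{KL}(\rho\|\widetilde q)<\infty\). Then
\[
\mathrm{KL}(\rho\|\widetilde q)=\mathrm{KL}(\rho\|\widetilde r)+\int\log\frac{d\widetilde r}{d\widetilde q}\,d\rho .
\]
The last integral equals \(a_*^\top\mathbb E_\rho Y+\tfrac12\operatorname{tr}(B_*\mathbb E_\rho YY^\top)-\log(Z_*/Z_{\widetilde q})\). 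This is the same constant for every \(\rho\in\widetilde{\mathcal C}\), including \(\widetilde r\). Therefore \(\mathrm{KL}(\rho\|\widetilde q)\ge\mathrm{KL}(\widetilde r\|\widetilde q)\), with equality iff \(\rho=\widetilde r\).
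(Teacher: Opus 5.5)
\textbf{Gap: the openness of $\mathcal D$.} This is the one step that carries all the log-GMM-specific content, and your argument for it does not go through. You infer $\mathcal D=\{(a,B):B\prec K\}$ from $\widetilde V(y)=\tfrac12y^\top Ky+O(\|y\|)$. But an $O(\|y\|)$ remainder gives no information on the boundary, where $K-B\succeq0$ is singular.

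Take $d=1$ and $\widetilde V(y)=\tfrac K2y^2+c|y|$. It is quadratic plus $O(|y|)$, with full support and Gaussian tails. Yet $(a,K)\in\mathcal D$ for every $|a|<c$, so $\mathcal D$ is not open. Moreover, mean zero forces $a=0$ (by symmetry and monotonicity of the mean in $a$). The second moments of the $a=0$ members increase to $2/c^2$ as $B\uparrow K$. So for $c>\sqrt2$ the target $(0,1)$ is realized by no member of the family. The minimizer of $\Phi$ then sits at $(0,K)\in\mathcal D\setminus\operatorname{int}\mathcal D$, where the first-order conditions fail. Everything you invoke (full support, two-sided quadratic bounds, an $O(\|y\|)$ remainder) is therefore compatible with the lemma being false. ``Boundary issues disappear'' does not follow from your asymptotic.

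\textbf{What is missing.} You need the sign and convexity of the remainder, together with $\Sigma^*\succ0$, which your argument never uses at this point. Expanding the Gaussian exponent gives exactly $\widetilde V(y)=\tfrac12y^\top Qy+q^\top y-\ell(y)+C$ with $Q=(L^*)^\top L^*/\delta^2$. Log-domain smoothing leaves the quadratic coefficient unchanged, so the factor $\delta^2+\sigma^2$ in your $K$ is wrong; your ``concretely'' remark does give the right $Q$. Here $\ell$ is convex with recession function $\ell^\infty(v)=\max_jb_j^\top v$, where $b_j=(L^*)^\top x_j/\delta^2$. So $\widetilde V$ is a quadratic \emph{minus} a convex function, the opposite of the example above.

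Because the training points affinely span $\mathbb R^d$, $\ell^\infty(v)+\ell^\infty(-v)\ge\omega_0>0$ on the unit sphere. Hence if $v$ is a null vector of $Q-B\succeq0$, then for any $a$, one of the rays $\pm tv$ (thickened to a tube) carries exponent growth at least $\omega_0t/4$ for large $t$, and the integral diverges. The paper uses the same estimate to show $\Phi\to\infty$ along sequences with $\lambda_{\min}(Q-B_n)\downarrow0$.

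\textbf{Once openness is proved correctly.} With $\mathcal D=\{(a,B):B\prec Q\}$ established this way, your Route~2 closes, and so does Route~1, since a regular family is steep. It is then a reasonable alternative to the paper's three-case coercivity. Divergence of $\Phi$ along rays to infinity uses only that $t_0$ is interior to the convex support of $\widetilde T$. Boundary minimizers are excluded by openness together with lower semicontinuity of $\Psi$ (Fatou). You should state the lower semicontinuity explicitly, since it is also what turns ray-wise divergence into bounded level sets.

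One smaller slip: for $B\preceq0$, $B\neq0$ and $a\in\operatorname{range}B$, $\Psi(t(a,B))$ grows linearly with slope $\sup_y(a^\top y+\tfrac12y^\top By)\ge0$, not logarithmically. The conclusion $\Phi\to\infty$ is unaffected.
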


\begin{proof}[Proof of Lemma~\ref{lem:moment_realization_log_gmm}]
We prove the result by minimizing the dual problem. Up to an additive constant,
\[
-\log \hat p^\delta(u)
=
\frac{\|u\|^2}{2\delta^2}
-
\log\sum_{j=1}^N
\exp\left(
\frac{x_j^\top u}{\delta^2}
-\frac{\|x_j\|^2}{2\delta^2}
\right).
\]
With \(u=\mu^*+L^*y+\sigma\varepsilon\), taking the Gaussian smoothing
expectation gives
\[
\widetilde V(y)
=
\frac12 y^\top Qy+q^\top y-\ell(y)+C,
\]
where
\[
Q=\frac{(L^*)^\top L^*}{\delta^2}\succ0,
\]
\(q\in\mathbb R^d\), \(C\in\mathbb R\), and
\[
\ell(y)
=
\mathbb E_\varepsilon
\log\sum_{j=1}^N
\exp\left(b_j^\top y+\gamma_j(\varepsilon)\right)
\]
for vectors \(b_j=(L^*)^\top x_j/\delta^2\) and offsets
\(\gamma_j(\varepsilon)\) independent of \(y\). The function \(\ell\) is
convex and has linear growth. Its recession function is
\[
\ell^\infty(v)
=
\lim_{t\to\infty}\frac{\ell(tv)}{t}
=
\max_{1\le j\le N} b_j^\top v .
\]
Because \(\Sigma^*\succ0\), the training points affinely span
\(\mathbb R^d\). Hence the vectors \(b_j\) affinely span \(\mathbb R^d\), and
\[
\omega(v)
:=
\ell^\infty(v)+\ell^\infty(-v)
=
\max_j b_j^\top v-\min_j b_j^\top v
\]
is strictly positive for every \(\|v\|=1\). By compactness,
\[
\omega_0:=\inf_{\|v\|=1}\omega(v)>0 .
\]

The log-partition function is
\[
\Psi(a,B)
=
\log\int_{\mathbb R^d}
\exp\left(a^\top y+\frac12y^\top By-\widetilde V(y)\right)\,dy .
\]
Set
\[
A:=Q-B,
\qquad
\tilde a:=a-q .
\]
Then, up to an additive constant independent of \((\tilde a,A)\),
\[
\Psi(a,B)
=
\log\int_{\mathbb R^d}
\exp\left(
\tilde a^\top y-\frac12y^\top Ay+\ell(y)
\right)\,dy .
\]
The integral is finite exactly when \(A\succ0\). If \(A\succ0\), the negative
quadratic term dominates the linear growth of \(\ell\). Conversely, if \(A\)
has a nonpositive direction \(v\), then along \(tv\) and \(-tv\) the sum of the
two directional linear growth rates is at least \(\omega_0\). Hence at least
one of the two rays has strictly positive linear growth while the quadratic
term is nondecaying or growing, and the integral diverges.

The dual objective for the target moments \((0,I_d)\) is
\[
\Phi(a,B)
=
\Psi(a,B)-\frac12\operatorname{tr}(B).
\]
Equivalently,
\[
\Phi(\tilde a,A)
=
\log\int_{\mathbb R^d}
\exp\left(
\tilde a^\top y-\frac12y^\top Ay+\ell(y)
\right)\,dy
+
\frac12\operatorname{tr}(A)
+\mathrm{const.},
\qquad A\succ0 .
\]

We show that \(\Phi\) is coercive on \(A\succ0\). First note that
\(\ell(y)\ge -C(1+\|y\|)\). Therefore, for all \(\tilde a\) and \(A\succ0\),
\[
\begin{aligned}
&\int
\exp\left(
\tilde a^\top y-\frac12 y^\top Ay+\ell(y)
\right)\,dy \\
&\qquad\ge
e^{-C}\int
\exp\left(
\tilde a^\top y-\frac12 y^\top Ay-C\|y\|
\right)\,dy .
\end{aligned}
\]
Since
\[
w_A(y):=\exp\left(-\frac12y^\top Ay-C\|y\|\right)
\]
is even,
\[
\int e^{\tilde a^\top y}w_A(y)\,dy
=
\int \cosh(\tilde a^\top y)w_A(y)\,dy
\ge
\int w_A(y)\,dy .
\]
Also,
\[
\int \exp\left(-\frac12y^\top Ay-C\|y\|\right)\,dy
\ge
c\,\det(I+A)^{-1/2}.
\]
Hence
\[
\log\int
\exp\left(
\tilde a^\top y-\frac12 y^\top Ay+\ell(y)
\right)\,dy
\ge
-C-\frac12\log\det(I+A),
\]
and therefore
\[
\Phi(\tilde a,A)
\ge
\frac12\operatorname{tr}(A)
-\frac12\log\det(I+A)-C .
\]
Since
\[
\log\det(I+A)\le d\log(1+\operatorname{tr}A),
\]
we get
\[
\Phi(\tilde a,A)\to\infty
\qquad\text{whenever}\qquad
\operatorname{tr}(A)\to\infty .
\]

It remains to consider sequences with \(\operatorname{tr}(A_n)\) bounded.
If \(\|\tilde a_n\|\to\infty\), let
\(e_n=\tilde a_n/\|\tilde a_n\|\). Since \(\operatorname{tr}(A_n)\) is bounded,
there is \(M<\infty\) such that \(\|A_n\|\le M\). Along the ray
\(y=t e_n\),
\[
\tilde a_n^\top y-\frac12y^\top A_ny+\ell(y)
\ge
\|\tilde a_n\|t-\frac12Mt^2-C(1+t).
\]
Choosing \(t=(\|\tilde a_n\|-C)_+/M\), and integrating over a fixed-width tube
around this ray, gives
\[
\log\int
\exp\left(
\tilde a_n^\top y-\frac12 y^\top A_ny+\ell(y)
\right)dy
\ge
c\|\tilde a_n\|^2-C\|\tilde a_n\|-C .
\]
Hence
\[
\Phi(\tilde a_n,A_n)\to\infty .
\]

Finally, suppose \((\tilde a_n,A_n)\) is bounded but
\(\lambda_{\min}(A_n)\downarrow0\). Let \(v_n\) be a unit eigenvector
corresponding to \(\lambda_{\min}(A_n)\). For each \(n\), at least one of
\[
\ell^\infty(v_n)+\tilde a_n^\top v_n,
\qquad
\ell^\infty(-v_n)-\tilde a_n^\top v_n
\]
is at least \(\omega_0/2\), because their sum is
\[
\omega(v_n)
=
\ell^\infty(v_n)+\ell^\infty(-v_n)
\ge \omega_0 .
\]
Choose \(s_n\in\{-1,1\}\) such that
\[
\ell^\infty(s_nv_n)+\tilde a_n^\top(s_nv_n)\ge\omega_0/2 .
\]
By the definition of the recession function, for all sufficiently large \(t\),
uniformly along this subsequence,
\[
\ell(ts_nv_n)+t\tilde a_n^\top(s_nv_n)
\ge
\frac{\omega_0}{4}t-C .
\]
Now integrate over a fixed-width tube around the ray
\(\{t s_n v_n:t\ge0\}\). Namely, take points of the form
\[
y=t s_nv_n+h,
\qquad
h\perp v_n,\qquad
\|h\|\le r,
\]
with \(r>0\) fixed sufficiently small. Since \((\tilde a_n,A_n)\) is bounded
and \(\ell\) is convex with linear growth, the preceding one-dimensional lower
bound persists on this tube after changing constants. Also,
\[
y^\top A_n y
\le
\lambda_{\min}(A_n)t^2 + C_r t + C_r,
\]
where the linear term can be absorbed into the positive
\(\omega_0 t/4\) term by taking the recession lower bound with a smaller
constant, say \(\omega_0 t/8\). Hence the log-partition term is bounded below
by a constant plus
\[
\log\int_0^\infty
\exp\left(
\frac{\omega_0}{8}t-\frac12\lambda_{\min}(A_n)t^2-C
\right)\,dt .
\]
As \(\lambda_{\min}(A_n)\downarrow0\), this integral diverges to infinity.
Thus
\[
\Phi(\tilde a_n,A_n)\to\infty .
\]

The three cases cover all ways that \((\tilde a,A)\) can leave compact subsets
of the domain \(A\succ0\). Hence \(\Phi\) is coercive. Since \(\Phi\) is
strictly convex because \(T(y)=(y,yy^\top)\) is minimal under a full-support
base measure, it has a unique minimizer \((\tilde a_*,A_*)\) with
\(A_*\succ0\). Returning to the original parameters gives
\[
B_*:=Q-A_*,
\qquad
a_*:=\tilde a_*+q,
\]
and \((a_*,B_*)\in\mathcal D^\circ\) (interior point of domain $D$).

At the interior minimizer, the first-order conditions are
\[
\nabla_a\Psi(a_*,B_*)=0,
\qquad
\nabla_B\Psi(a_*,B_*)=\frac12I_d .
\]
Since
\[
\nabla_a\Psi(a,B)=\mathbb E_{a,B}[Y],
\qquad
\nabla_B\Psi(a,B)=\frac12\mathbb E_{a,B}[YY^\top],
\]
we obtain
\[
\mathbb E_{\widetilde r}[Y]=0,
\qquad
\mathbb E_{\widetilde r}[YY^\top]=I_d .
\]

Finally, for any \(\rho\in\widetilde{\mathcal C}\),
\[
\mathrm{KL}(\rho\,\|\,\widetilde q)
=
\mathrm{KL}(\rho\,\|\,\widetilde r)
+
\int
\log\left(\frac{d\widetilde r}{d\widetilde q}\right)d\rho .
\]
The second term is constant over \(\widetilde{\mathcal C}\), because
\[
\log\left(\frac{d\widetilde r}{d\widetilde q}\right)
=
a_*^\top y+\frac12y^\top B_*y+\mathrm{const.}
\]
and all laws in \(\widetilde{\mathcal C}\) have the same first and second
moments. Therefore
\[
\mathrm{KL}(\rho\,\|\,\widetilde q)
=
\mathrm{KL}(\rho\,\|\,\widetilde r)+\mathrm{const.},
\]
so \(\widetilde r\) is the unique \(I\)-projection.
\end{proof}

Since \((a_*,B_*)\in\mathcal D^\circ\) and \(\widetilde V\) is quadratically
confining, \(\widetilde r\) has Gaussian tails: there exists \(\eta>0\) such
that
\[
\int_{\mathbb R^d}e^{\eta\|y\|^2}\widetilde r(y)\,dy<\infty .
\]
For a finite common-covariance Gaussian mixture, derivatives of
\(\log\hat p^\delta\) have at most polynomial growth, and Gaussian smoothing
preserves these bounds. Since the tilted density has a Gaussian envelope,
differentiating \(\widetilde r\) produces only polynomial factors times a
Gaussian envelope. Hence, for every multi-index \(\alpha\), there exist
\(C_\alpha,c_\alpha>0\) such that
\[
|\partial^\alpha\widetilde r(y)|
\le
C_\alpha e^{-c_\alpha\|y\|^2}.
\]

\paragraph{Step 3: One-particle local density-ratio estimate.}
Identify \(\mathbb R^d\times\mathrm{Sym}_d\) with \(\mathbb R^m\), where
\[
m=d+\frac{d(d+1)}2 .
\]
Let
\[
W=\widetilde T(Y)=(Y,YY^\top),
\qquad
Y\sim\widetilde r .
\]

\begin{lemma}[Anisotropic quadratic Fourier estimate]
\label{lem:anisotropic_quadratic_fourier}
Let \(\mathcal A\) be a bounded set of Schwartz functions on \(\mathbb R^d\).
For every \(N\ge1\), there exists \(C_N<\infty\) such that, for every
\(a\in\mathcal A\), \(u\in\mathbb R^d\), and \(A=A^\top\), if
\[
A=O^\top\operatorname{diag}(\lambda_1,\ldots,\lambda_d)O,
\qquad
\eta=Ou,
\]
then
\[
\left|
\int_{\mathbb R^d}
e^{iu^\top y+\frac i2y^\top Ay}a(y)\,dy
\right|
\le
C_N
\prod_{j=1}^d
(1+|\lambda_j|)^{-1/2}
\left(
1+\frac{|\eta_j|}{1+|\lambda_j|}
\right)^{-N}.
\]
\end{lemma}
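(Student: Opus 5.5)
The plan is to reduce to one-dimensional oscillatory integrals by diagonalizing the quadratic phase. I will change variables \(y=O^\top x\), which is orthogonal and has unit Jacobian. Then \(u^\top y=\eta^\top x\) and \(y^\top Ay=\sum_j\lambda_j x_j^2\). The integral becomes
\[
\int_{\mathbb R^d} e^{i\sum_j(\eta_j x_j+\frac12\lambda_j x_j^2)}\,b(x)\,dx,
\qquad b:=a\circ O^\top .
\]
Because the orthogonal group is compact and \(\mathcal A\) is bounded in Schwartz space, the family \(\{a\circ O^\top : a\in\mathcal A,\ O\in O(d)\}\) is again bounded in Schwartz space. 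Concretely, every seminorm \(\sup_x(1+\|x\|)^k|\partial^\beta b(x)|\) is bounded uniformly. All constants below will therefore depend only on finitely many such seminorms.

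The core is a one-dimensional estimate. For a Schwartz function \(f\) on \(\mathbb R\), I claim
\[
\Bigl|\int_{\mathbb R} e^{i(\eta t+\frac12\lambda t^2)}f(t)\,dt\Bigr|\le C_N\,(1+|\lambda|)^{-1/2}\Bigl(1+\frac{|\eta|}{1+|\lambda|}\Bigr)^{-N},
\]
with \(C_N\) controlled by finitely many Schwartz seminorms of \(f\).

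I would prove the claim via Plancherel/Fresnel. The Fourier transform of the chirp \(e^{\frac i2\lambda t^2}\) is \(c\,|\lambda|^{-1/2}e^{-\frac i2\xi^2/\lambda}\). The integral therefore equals \(c|\lambda|^{-1/2}\int e^{-\frac{i}{2\lambda}(\xi-\eta)^2}\hat f(\xi)\,d\xi\) up to conventions. This immediately gives a bound \(|\lambda|^{-1/2}\|\hat f\|_{L^1}\) for \(|\lambda|\ge1\). For \(|\lambda|\le1\) the trivial bound \(\|f\|_{L^1}\) suffices.

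For decay in \(\eta\), I would split into two regimes.

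When \(|\eta|\le 2(1+|\lambda|)\), the weight factor is bounded below by a constant, so the \(\lambda\) bound alone suffices.

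When \(|\eta|>2(1+|\lambda|)\), I integrate by parts with the operator \(L=(\eta+\lambda t)^{-1}(-i\partial_t)\), which fixes the phase. Where \(|t|\le |\eta|/(2(1+|\lambda|))\), we have \(|\eta+\lambda t|\ge|\eta|/2\). Each application of \(L^\top\) then gains a factor \(C(1+|\lambda|)/|\eta|\) times derivatives or polynomial weights of \(f\). On the complementary region \(|t|>|\eta|/(2(1+|\lambda|))\), the Schwartz decay of \(f\) gives \((|\eta|/(1+|\lambda|))^{-N}\) directly. To make this split smooth, I would insert a smooth cutoff \(\chi(t(1+|\lambda|)/|\eta|)\); its derivatives are harmless because the cutoff scale is \(\ge 1\).

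The integration by parts yields the \(\eta\)-decay, while the stationary-phase/Fresnel bound yields the \(|\lambda|^{-1/2}\) factor. To get both simultaneously, I would instead carry out the integration by parts on the Fresnel-transformed form. There the integral is \(|\lambda|^{-1/2}\int e^{-\frac{i}{2\lambda}(\xi-\eta)^2}\hat f(\xi)\,d\xi\). When \(|\eta|\gg1+|\lambda|\), I would split by \(|\xi|\lessgtr|\eta|/2\). On \(|\xi|>|\eta|/2\), the decay of \(\hat f\) gives \(|\eta|^{-N}\). On \(|\xi|\le|\eta|/2\), the phase derivative \((\xi-\eta)/\lambda\) has size \(\gtrsim|\eta|/|\lambda|\), and repeated integration by parts gains \((|\lambda|/|\eta|)^N\). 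The derivatives of the amplitude \(1/(\xi-\eta)\) cost only \(|\eta|^{-1}\) each, which is no worse. Both contributions are bounded by \(|\lambda|^{-1/2}(1+|\eta|/(1+|\lambda|))^{-N}\), as required.

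To pass to \(d\) dimensions, I would apply the one-dimensional argument coordinatewise, iterating Fubini. The obstacle is that \(b\) is not a product, so I would apply the operators coordinate by coordinate on the full \(d\)-dimensional integral. Equivalently, take the Fourier transform in all variables: the chirp factors as \(\prod_j\), so the integral equals
\[
\prod_j c|\lambda_j|^{-1/2}\int e^{-\sum_j\frac{i}{2\lambda_j}(\xi_j-\eta_j)^2}\hat b(\xi)\,d\xi .
\]
I would then use the trivial \(L^1\) bound in the small-\(\lambda_j\) coordinates, which avoids the singular factor. In each coordinate where \(|\eta_j|\) is large, I use either the tail of \(\hat b\) in \(\xi_j\) or integration by parts in \(\xi_j\). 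Each coordinate split is handled by a partition of unity, and every resulting term is dominated by a Schwartz seminorm of \(b\) times the product of one-dimensional factors.

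The main obstacle is bookkeeping this mixed regime uniformly in \(\lambda\). In particular, the small-\(|\lambda_j|\) coordinates need the \(x\)-side argument, while the large ones need the \(\xi\)-side argument. I would handle this by applying a partial Fourier transform only in the coordinates with \(|\lambda_j|\ge1\) and treating the rest on the physical side. The argument then reduces to uniform Schwartz bounds on mixed partial Fourier transforms of \(b\), which hold by the boundedness of the family.
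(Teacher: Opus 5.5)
Your proposal is correct. It has the same overall architecture as the paper: an orthogonal diagonalization, uniform Schwartz bounds for the rotated amplitudes, a one-dimensional estimate, then tensorization. You prove the key one-dimensional bound and the tensorization by different means.

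\textbf{The paper's route.} It gets the factor $(1+|\lambda|)^{-1/2}$ from the quadratic van der Corput lemma when $|\eta|\lesssim 1+|\lambda|$. It gets the $\eta$-decay from physical-side integration by parts when $|\eta|\gg 1+|\lambda|$. It then tensorizes by applying the one-dimensional bound in $x_1,\dots,x_d$ successively, treating the other variables as parameters. This works because the partially integrated amplitude stays Schwartz in the remaining variables, with seminorms carrying the factors already gained.

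\textbf{Your route.} For $|\lambda|\ge 1$ you use the exact Fresnel/Plancherel identity
$\int e^{i(\eta t+\lambda t^2/2)}f(t)\,dt=c\,|\lambda|^{-1/2}\int e^{-i(\xi-\eta)^2/(2\lambda)}\hat f(\xi)\,d\xi$
(up to Fourier sign conventions). This extracts $|\lambda|^{-1/2}$ exactly. The $\eta$-decay then comes from integrating by parts in $\xi$ away from the stationary point $\xi=\eta$, where $\hat f$ is already $O(|\eta|^{-N})$. You use the physical side only for $|\lambda|<1$. In $d$ dimensions, a partial Fourier transform in the coordinates with $|\lambda_j|\ge 1$, together with product cutoffs, handles the bookkeeping; the separable phase makes the coordinatewise integrations by parts commute.

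\textbf{What each route buys.} Consider the regime $|\eta|\gg 1+|\lambda|$ with $|\lambda|$ large. The physical phase still has a stationary point at $t=-\eta/\lambda$, so integration by parts alone cannot produce both factors at once. The trivial tail bound near that point lacks the $|\lambda|^{-1/2}$ when, say, $|\eta|\approx 10|\lambda|$. One must localize there and combine van der Corput with the smallness of the amplitude, a step the paper's two-regime sketch leaves implicit. Your Fresnel form handles this transparently, with no oscillatory-integral lemma at all. The paper's route, in turn, is shorter to state and uses only real-variable tools. It does not rely on the exact chirp Fourier transform, so it would extend to non-quadratic phases. Yours is tied to quadratic phases, which is all this lemma needs.

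\textbf{Minor detail.} In your physical-side step, the cutoff $\chi$ must be supported where $|\eta+\lambda t|\ge |\eta|/2$, not merely equal to one there. Also, the per-step gain is in fact $O(1/|\eta|)$, with no polynomial weights arising. Neither point affects the argument.
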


\begin{proof}[Proof of Lemma~\ref{lem:anisotropic_quadratic_fourier}]
After the orthogonal change of variables \(x=Oy\), the integral becomes
\[
\int_{\mathbb R^d}
e^{i\eta^\top x+\frac i2\sum_{j=1}^d\lambda_jx_j^2}
a_O(x)\,dx,
\qquad
a_O(x):=a(O^\top x).
\]
The family of amplitudes \(a_O\) remains bounded in every Schwartz seminorm.

The following one-dimensional estimate is standard. If \(b\) belongs to a
bounded subset of the Schwartz class on \(\mathbb R\), then, for every
\(N\ge1\),
\[
\left|
\int_{\mathbb R}e^{i\eta x+i\lambda x^2/2}b(x)\,dx
\right|
\le
C_N(1+|\lambda|)^{-1/2}
\left(1+\frac{|\eta|}{1+|\lambda|}\right)^{-N}.
\]
For \(|\eta|\lesssim1+|\lambda|\), this is the quadratic van der Corput
estimate; for \(|\eta|\gg1+|\lambda|\), repeated integration by parts using
\(\partial_x(\eta x+\lambda x^2/2)=\eta+\lambda x\) gives the stated rapid
decay. The constants depend only on finitely many Schwartz seminorms of \(b\).

Applying this one-dimensional estimate successively in
\(x_1,\ldots,x_d\), with the remaining variables treated as parameters and
using the uniform Schwartz seminorm bounds of \(a_O\), gives the desired
product estimate. This tensorized estimate is a standard consequence of
one-dimensional quadratic van der Corput bounds and uniform Schwartz seminorm
control; see, for example, the oscillatory-integral estimates in
\cite[Chapter~VIII]{stein1993harmonic}.
\end{proof}

\begin{lemma}[Fourier smoothing for the mean--second-moment statistic]
\label{lem:fourier_smoothing_mean_cov}
Let \(W=(Y,YY^\top)\), with \(Y\sim\widetilde r\). Let \(\phi_W\) be the
characteristic function of \(W\). Then there exists \(n_0\) such that
\[
\phi_W^{\,n_0}\in L^1(\mathbb R^m).
\]
\end{lemma}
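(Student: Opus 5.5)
We plan to bound $|\phi_W|$ directly with Lemma~\ref{lem:anisotropic_quadratic_fourier} and then integrate the resulting bound, first over the mean variable and then over the symmetric-matrix variable using spectral coordinates.

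\emph{Setting up the Fourier bound.} We write the dual variable of $W=(Y,YY^\top)$ as $(u,S)\in\mathbb R^d\times\mathrm{Sym}_d$, paired through $u^\top y+\operatorname{tr}(S\,yy^\top)$. Setting $A:=2S$, this gives
\[
\phi_W(u,S)=\int_{\mathbb R^d} e^{iu^\top y+\frac i2 y^\top A y}\,\widetilde r(y)\,dy .
\]
By the derivative bounds $|\partial^\alpha\widetilde r(y)|\le C_\alpha e^{-c_\alpha\|y\|^2}$ established after Lemma~\ref{lem:moment_realization_log_gmm}, $\widetilde r$ is a Schwartz function. The singleton $\mathcal A=\{\widetilde r\}$ is therefore a bounded Schwartz family. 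Lemma~\ref{lem:anisotropic_quadratic_fourier} then gives, for every $N$,
\[
|\phi_W(u,S)|\le C_N\prod_{j=1}^d(1+|\lambda_j|)^{-1/2}\Bigl(1+\tfrac{|\eta_j|}{1+|\lambda_j|}\Bigr)^{-N},
\]
where $\lambda_j$ are the eigenvalues of $A$, $A=O^\top\operatorname{diag}(\lambda)O$, and $\eta=Ou$. We also use the trivial bound $|\phi_W|\le1$.

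\emph{Integrating over $u$.} Fix $n$ and take $N$ with $nN>1$. For fixed $S$, the substitution $\eta=Ou$ has unit Jacobian, and the $\eta$-integral factorizes:
\[
\int_{\mathbb R^d}|\phi_W(u,S)|^n\,du\le C\prod_{j=1}^d(1+|\lambda_j|)^{-n/2}\int_{\mathbb R}\Bigl(1+\tfrac{|\eta_j|}{1+|\lambda_j|}\Bigr)^{-nN}d\eta_j= C'\prod_{j=1}^d(1+|\lambda_j|)^{1-n/2}.
\]

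\emph{Integrating over $\mathrm{Sym}_d$.} Here we use the Weyl integration formula. Lebesgue measure on $\mathrm{Sym}_d$ pushes forward under $A\mapsto(\lambda,O)$ to $c_d\prod_{i<j}|\lambda_i-\lambda_j|\,d\lambda\,dO$, with $O$ ranging over the compact group $O(d)$, and the linear change $S\mapsto A=2S$ only contributes a constant. Bounding the Vandermonde factor crudely by
\[
\prod_{i<j}|\lambda_i-\lambda_j|\le \prod_{i<j}(1+|\lambda_i|)(1+|\lambda_j|)=\prod_j(1+|\lambda_j|)^{d-1},
\]
we obtain
\[
\int|\phi_W|^n\le C\prod_{j=1}^d\int_{\mathbb R}(1+|\lambda_j|)^{d-n/2}\,d\lambda_j,
\]
which is finite as soon as $n/2-d>1$. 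Taking any integer $n_0>2d+2$, together with $N$ large enough that $n_0N>1$, proves the lemma.

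The step we expect to be the main obstacle is the bookkeeping in this last step. We must check that the Weyl Jacobian and the Vandermonde growth do not outweigh the $(1+|\lambda_j|)^{-1/2}$ decay, and that the anisotropic bound has been applied uniformly over all orthogonal $O$. The crude polynomial bound on the Vandermonde factor handles both issues at the cost of a large $n_0$; since the lemma only asks for some finite $n_0$, this loss is harmless.
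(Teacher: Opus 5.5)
Your proposal is correct and follows essentially the same route as the paper: diagonalize $A$, apply the anisotropic quadratic Fourier estimate, integrate out $u$ to obtain $\prod_j(1+|\lambda_j|)^{1-n/2}$, then pass to eigenvalue coordinates with the crude Vandermonde bound $\prod_j(1+|\lambda_j|)^{d-1}$, which gives $\phi_W^{\,n_0}\in L^1$ for any integer $n_0>2(d+1)$. Your bookkeeping is slightly tighter than the paper's: the singleton family $\{\widetilde r\}$ suffices because the lemma already absorbs the rotations $a(O^\top\cdot)$, the condition $n_0N>1$ is exactly what the $\eta_j$-integrals require, and you make the pairing constant $A=2S$ explicit.
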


\begin{proof}[Proof of Lemma~\ref{lem:fourier_smoothing_mean_cov}]
Write a frequency in \(\mathbb R^d\times\mathrm{Sym}_d\) as \((u,A)\), with
\(A=A^\top\). Then
\[
\phi_W(u,A)
=
\int_{\mathbb R^d}
\exp\left(
iu^\top y+\frac i2 y^\top Ay
\right)
\widetilde r(y)\,dy .
\]
By the derivative bounds above, the orthogonal orbit
\[
\{\widetilde r(O^\top\cdot):O\in O(d)\}
\]
is bounded in the Schwartz topology. Hence
Lemma~\ref{lem:anisotropic_quadratic_fourier} applies. Diagonalize
\[
A=O^\top\operatorname{diag}(\lambda_1,\ldots,\lambda_d)O,
\qquad
\eta=Ou .
\]
For every \(N\ge1\),
\[
|\phi_W(u,A)|
\le
C_N
\prod_{j=1}^d
(1+|\lambda_j|)^{-1/2}
\left(
1+\frac{|\eta_j|}{1+|\lambda_j|}
\right)^{-N}.
\]

Let \(p>2(d+1)\). Choosing \(N\) so that \(Np>d+1\), we first integrate over
\(\eta\):
\[
\int_{\mathbb R^d}
|\phi_W(O^\top\eta,A)|^p\,d\eta
\le
C
\prod_{j=1}^d(1+|\lambda_j|)^{1-p/2}.
\]
Next integrate over \(A\). In eigenvalue coordinates, Lebesgue measure on
\(\mathrm{Sym}_d\) is, up to a constant,
\[
\prod_{i<j}|\lambda_i-\lambda_j|\,
d\lambda_1\cdots d\lambda_d\,dO .
\]
Since
\[
\prod_{i<j}|\lambda_i-\lambda_j|
\le
C\prod_{j=1}^d(1+|\lambda_j|)^{d-1},
\]
we get
\[
\int_{\mathrm{Sym}_d}\int_{\mathbb R^d}
|\phi_W(u,A)|^p\,du\,dA
\le
C
\int_{\mathbb R^d}
\prod_{j=1}^d(1+|\lambda_j|)^{d-p/2}\,d\lambda .
\]
This integral is finite whenever \(d-p/2<-1\), equivalently \(p>2(d+1)\).
Taking an integer \(n_0>2(d+1)\), we conclude
\[
\phi_W^{\,n_0}\in L^1(\mathbb R^m).
\]
\end{proof}

\begin{lemma}[Local CLT and one-particle density ratio]
\label{lem:local_clt_density_ratio}
Let \(W\in\mathbb R^m\) have mean \(t_0\), nonsingular covariance \(\Gamma\),
exponential moments in a neighborhood of the origin, and satisfy Cramér's
nonlattice condition. Suppose
\[
\phi_W^{\,n_0}\in L^1(\mathbb R^m)
\]
for some \(n_0\). Let \(f_n\) denote the density of
\(\sum_{i=1}^n W_i\). Then, for all sufficiently large \(n\), \(f_n\) exists
and satisfies the local Gaussian expansion
\[
f_n(nt_0+x)
=
\frac{1+o(1)}
{(2\pi n)^{m/2}\det(\Gamma)^{1/2}}
\exp\left(-\frac12x^\top(n\Gamma)^{-1}x\right)
\]
uniformly for \(\|x\|=o(\sqrt n)\). Moreover,
\[
\sup_x f_n(x)\le Cn^{-m/2},
\qquad
f_n(nt_0)\asymp n^{-m/2}.
\]
Consequently, if \(\Delta\) has exponential tails and does not depend on \(P\),
then
\[
\frac{f_{P-1}((P-1)t_0-\Delta)}{f_P(Pt_0)}
\to 1
\qquad
\text{in }L^1 .
\]
\end{lemma}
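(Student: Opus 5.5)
The proof follows the classical local limit theorem argument with a nonlattice density (Petrov, Bhattacharya--Rao), adapted to the last ratio statement.

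\emph{Existence and boundedness of $f_n$.} Since $\phi_W^{\,n_0}\in L^1(\mathbb R^m)$ and $|\phi_W|\le 1$, we have $\phi_W^{\,n}\in L^1$ for every $n\ge n_0$. Fourier inversion then gives a bounded continuous density
\[
f_n(y)=(2\pi)^{-m}\int e^{-i\langle\xi,y\rangle}\phi_W(\xi)^n\,d\xi .
\]

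\emph{Local Gaussian expansion.} Centre and rescale $\xi=\zeta/\sqrt n$ and split the frequency integral into three regions.
\begin{itemize}
\item[(i)] On $\|\zeta\|\le n^{1/6}$ (or $\|\xi\|\le\varepsilon$ with $\varepsilon$ small), the exponential moments make the cumulant generating function analytic near $0$. Hence $\phi_W(\zeta/\sqrt n)^n e^{-i\langle\zeta,\sqrt n t_0\rangle}=e^{-\zeta^\top\Gamma\zeta/2}(1+O(\|\zeta\|^3/\sqrt n))$, with a Gaussian-dominated remainder on $\|\xi\|\le\varepsilon$.
\item[(ii)] On $\|\xi\|\ge\varepsilon$, Cramér's nonlattice condition together with the Riemann--Lebesgue decay of $\phi_W^{n_0}$ gives $\sup_{\|\xi\|\ge\varepsilon}|\phi_W(\xi)|=q<1$. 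Therefore $\int_{\|\xi\|\ge\varepsilon}|\phi_W|^n\le q^{n-n_0}\|\phi_W^{n_0}\|_{L^1}$, which is exponentially small.
\end{itemize}
Inverting the Gaussian main term gives
\[
f_n(nt_0+x)=(2\pi n)^{-m/2}\det\Gamma^{-1/2}\bigl(e^{-x^\top(n\Gamma)^{-1}x/2}+o(1)\bigr),
\]
uniformly in $x$. On $\|x\|=o(\sqrt n)$ the Gaussian factor is bounded below, so the additive $o(1)$ becomes the multiplicative $1+o(1)$. The same bounds, with absolute values inside the integral, give $\sup_x f_n\le Cn^{-m/2}$, and evaluating at $x=0$ gives $f_n(nt_0)\asymp n^{-m/2}$.

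\emph{Ratio statement.} Write $(P-1)t_0-\Delta=(P-1)t_0+x$ with $x=-\Delta$. Let $E_P=\{\|\Delta\|\le P^{1/4}\}$.
\begin{itemize}
\item On $E_P$, apply the expansion with $n=P-1$. The Gaussian factor $e^{-\Delta^\top((P-1)\Gamma)^{-1}\Delta/2}\to1$, and $(P-1)^{-m/2}/P^{-m/2}\to1$. So the ratio converges to $1$ uniformly on $E_P$.
\item Off $E_P$, the ratio is bounded by $\sup f_{P-1}/f_P(Pt_0)\le C$. Exponential tails give $\mathbb P(E_P^c)\le e^{-cP^{1/4}}$.
\end{itemize}
Hence $\mathbb E\bigl|\text{ratio}-1\bigr|\le o(1)+(C+1)\,\mathbb P(E_P^c)\to0$, which is the claimed $L^1$ convergence.

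\emph{Main obstacle.} The delicate step is the uniformity of the main-term error in region (i) at scale $\|x\|=o(\sqrt n)$: one must show the additive $o(n^{-m/2})$ error is negligible relative to the Gaussian density there. For the ratio we only need $\|x\|\le P^{1/4}$, where the Gaussian factor is $1+o(1)$, so this suffices.
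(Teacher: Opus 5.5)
Your proposal is correct and follows the paper's route. The paper simply cites the Fourier-integrable local CLT (Bhattacharya--Rao, Petrov), which is the small/large-frequency argument you sketch. It then proves the ratio claim exactly as you do: it splits on a truncation event ($\|\Delta\|\le K\log P$ in the paper, $\|\Delta\|\le P^{1/4}$ in yours), uses the uniform local expansion inside, and outside combines $\sup_x f_{P-1}\le CP^{-m/2}$ and $f_P(Pt_0)\asymp P^{-m/2}$ with the exponential tails of $\Delta$. Any radius $R_P\to\infty$ with $R_P=o(\sqrt P)$ works, so the choice is immaterial. In fact, pointwise convergence of the ratio plus its uniform bound already gives $L^1$ convergence by dominated convergence, without using the tails of $\Delta$.
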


\begin{proof}[Proof of Lemma~\ref{lem:local_clt_density_ratio}]
This is the standard Fourier-integrable form of the multivariate local CLT for
densities under exponential moments, Cramér's nonlattice condition, and the
smoothing condition \(\phi_W^{\,n_0}\in L^1\); see
\cite[Chapter~19]{bhattacharya1976normal} or
\cite[Chapter~VII, Section~2]{petrov1975sums}. These hypotheses give the
local expansion and global density bound stated above.

It remains only to justify the density-ratio consequence. Let
\[
R_P=K\log P .
\]
On \(\{\|\Delta\|\le R_P\}\), the local expansion applies uniformly because
\(R_P=o(\sqrt P)\), so the ratio converges uniformly to \(1\). On
\(\{\|\Delta\|>R_P\}\), the global bound
\[
\sup_x f_{P-1}(x)\le C P^{-m/2}
\]
and
\[
f_P(Pt_0)\asymp P^{-m/2}
\]
give a uniformly bounded ratio. Since \(\Delta\) has exponential tails,
choosing \(K\) large makes the tail contribution vanish. Hence the convergence
holds in \(L^1\).
\end{proof}

By Lemma~\ref{lem:fourier_smoothing_mean_cov},
\[
\phi_W^{\,n_0}\in L^1(\mathbb R^m)
\]
for some \(n_0\). The statistic \(W\) has exponential moments near the origin
because \(\widetilde r\) has Gaussian tails. Its covariance is nonsingular: if
\[
u^\top Y+\frac12Y^\top AY
\]
has zero variance for \(A=A^\top\), then this quadratic polynomial is constant
on the support of \(\widetilde r\). Since \(\widetilde r\) is strictly
positive on all of \(\mathbb R^d\), the polynomial is constant on an open set,
so \(u=0\) and \(A=0\).

The Cramér nonlattice condition also holds. Indeed, if
\[
|\phi_W(u,A)|=1
\]
for some \((u,A)\ne(0,0)\), then
\[
u^\top Y+\frac12Y^\top AY
\]
is constant modulo \(2\pi\) almost surely. Since \(Y\) has a strictly positive
density on \(\mathbb R^d\), this polynomial would be constant modulo \(2\pi\)
on an open set, hence constant as an ordinary polynomial, forcing \(u=0\) and
\(A=0\), a contradiction.

Thus Lemma~\ref{lem:local_clt_density_ratio} applies. Since
\[
\mathbb EW=t_0=(0,I_d),
\]
and since
\[
\Delta=\widetilde T(Y)-t_0
\]
has exponential tails because \(\|\Delta\|\lesssim 1+\|Y\|^2\) and
\(\widetilde r\) has Gaussian tails, we obtain
\[
\frac{
f_{P-1}\left((P-1)t_0-(\widetilde T(Y)-t_0)\right)
}{
f_P(Pt_0)
}
\to 1
\qquad
\text{in }L^1(\widetilde r).
\]
This is the one-particle density-ratio estimate used below.

\paragraph{Step 4: Coarea conditioning and the constrained Gibbs measure.}
The constraint manifold is
\[
\mathcal M_P
=
\left\{
Y\in\mathbb R^{P\times d}:
\sum_{i=1}^P Y_i=0,\;
\sum_{i=1}^P Y_iY_i^\top=PI_d
\right\}
=
\{Y:S_P=Pt_0\}.
\]
Define
\[
\Psi_P(Y)
=
\left(
\frac1P Y^\top\mathbf 1_P,\,
\frac1P Y^\top Y
\right).
\]
The first component is viewed as a vector in \(\mathbb R^d\). Its differential
in direction \(H\in\mathbb R^{P\times d}\) is
\[
D\Psi_P(Y)[H]
=
\left(
\frac1P H^\top\mathbf 1_P,\,
\frac1P(H^\top Y+Y^\top H)
\right).
\]
For \(P\ge d+1\), \((0,I_d)\) is a regular value of \(\Psi_P\). Indeed, for
arbitrary \(u\in\mathbb R^d\) and \(S=S^\top\in\mathrm{Sym}_d\), take
\[
H=\mathbf 1_Pu^\top+\frac12YS .
\]
Using \(\mathbf 1_P^\top Y=0\) and \(Y^\top Y=PI_d\), we get
\[
\frac1P H^\top\mathbf 1_P=u,
\qquad
\frac1P(H^\top Y+Y^\top H)=S .
\]
Thus \(D\Psi_P(Y)\) is surjective on \(\mathcal M_P\).

By the coarea formula, conditioning \(\widetilde q^{\otimes P}\) on
\(\Psi_P(Y)=(0,I_d)\) gives a surface measure on \(\mathcal M_P\) with density
proportional to
\[
\exp\left(-\sum_{i=1}^P\widetilde V(y_i)\right)
\]
times the inverse coarea Jacobian. This Jacobian is constant on
\(\mathcal M_P\). Indeed, if \(O\) is an orthogonal map on \(\mathbb R^P\)
satisfying \(O\mathbf 1_P=\mathbf 1_P\), then
\[
\Psi_P(OY)=\Psi_P(Y),
\]
and \(O\) preserves the Frobenius metric. Hence
\[
D\Psi_P(OY)\circ O=D\Psi_P(Y),
\]
so the coarea Jacobian is invariant.

The action is transitive on \(\mathcal M_P\). Let
\(U\in\mathbb R^{P\times(P-1)}\) have orthonormal columns spanning
\(\mathbf 1_P^\perp\). Since \(P-1\ge d\), any \(Y,Y'\in\mathcal M_P\) can be
written as
\[
Y=\sqrt P\,UX,
\qquad
Y'=\sqrt P\,UX',
\qquad
X,X'\in V_{P-1,d}.
\]
Since \(O(P-1)\) acts transitively on \(V_{P-1,d}\), there exists
\(Q\in O(P-1)\) such that \(QX=X'\). Setting
\[
O=UQU^\top+\frac1P\mathbf 1_P\mathbf 1_P^\top
\]
gives an orthogonal map fixing \(\mathbf 1_P\) and satisfying \(OY=Y'\).
Therefore the coarea Jacobian is constant, and the exact conditioned law is
\[
\mu_P(dy_{1:P})
=
\frac1{Z_P}
\exp\left(-\sum_{i=1}^P\widetilde V(y_i)\right)
\mathcal U_{\mathcal M_P}(dy_{1:P}) .
\]
We use this coarea disintegration as the version of the density-based regular
conditional law at the regular value \(S_P=Pt_0\).

\paragraph{Step 5: One-particle equivalence of ensembles.}
The Radon--Nikodym derivative of \(\widetilde r^{\otimes P}\) with respect to
\(\widetilde q^{\otimes P}\) is
\[
\frac{d\widetilde r^{\otimes P}}{d\widetilde q^{\otimes P}}(Y_{1:P})
=
\exp\left(
\sum_{i=1}^P a_*^\top Y_i
+
\frac12\sum_{i=1}^P Y_i^\top B_*Y_i
+\mathrm{const.}_P
\right).
\]
On \(\mathcal M_P\), this factor is constant because
\[
\sum_{i=1}^P Y_i=0,
\qquad
\sum_{i=1}^P Y_iY_i^\top=PI_d .
\]
Therefore the exact conditional laws of \(\widetilde q^{\otimes P}\) and
\(\widetilde r^{\otimes P}\) on \(\mathcal M_P\) coincide.

For a bounded continuous test function \(\varphi:\mathbb R^d\to\mathbb R\),
the conditional-density formula gives
\[
\mathbb E_{\mu_P}[\varphi(Y_1)]
=
\mathbb E_{\widetilde r}
\left[
\varphi(Y)
\frac{
f_{P-1}\left(Pt_0-\widetilde T(Y)\right)
}{
f_P(Pt_0)
}
\right].
\]
Since
\[
Pt_0-\widetilde T(Y)
=
(P-1)t_0-(\widetilde T(Y)-t_0),
\]
the density-ratio estimate from Step 3 gives
\[
\mathbb E_{\mu_P}[\varphi(Y_1)]
\to
\mathbb E_{\widetilde r}[\varphi(Y)] .
\]
Hence
\[
\mu_P^{Y,(1)}\Rightarrow \widetilde r .
\]
Pushing forward by \(z=\mu^*+L^*y\), we obtain
\[
\mu_P^{Z,(1)}\Rightarrow
\rho_*:=(\mu^*+L^*\cdot)_\#\widetilde r .
\]

\paragraph{Step 6: Identification in \(Z\)-space and multiplier identities.}
The pushforward \(\rho_*=(\mu^*+L^*\cdot)_\#\widetilde r\) has mean
\(\mu^*\) and covariance \(\Sigma^*\). Moreover,
\[
\rho_*(z)
=
\frac1{Z_{\lambda,\Lambda}}
\exp\left(
-V(z)-\lambda^\top z
-\frac12(z-\mu^*)^\top\Lambda(z-\mu^*)
\right),
\]
where
\[
\lambda=-(L^*)^{-\top}a_*,
\qquad
\Lambda=-(L^*)^{-\top}B_*(L^*)^{-1}.
\]
The additive constants generated by the affine change of variables are
absorbed into \(Z_{\lambda,\Lambda}\).

Let \(q(dz)=Z_q^{-1}e^{-V(z)}\,dz\). For any
\(\rho\in\mathcal C(\mu^*,\Sigma^*)\) with finite relative entropy with
respect to \(q\),
\[
\mathrm{KL}(\rho\,\|\,q)
=
\mathrm{KL}(\rho\,\|\,\rho_*)
+
\int\log\left(\frac{d\rho_*}{dq}\right)\,d\rho .
\]
Since
\[
\log\left(\frac{d\rho_*}{dq}\right)
=
-\lambda^\top z
-\frac12(z-\mu^*)^\top\Lambda(z-\mu^*)
+\mathrm{const.},
\]
the last integral is constant over \(\mathcal C(\mu^*,\Sigma^*)\). Indeed,
all laws in \(\mathcal C(\mu^*,\Sigma^*)\) have mean \(\mu^*\) and covariance
\(\Sigma^*\), so
\[
\int (z-\mu^*)^\top\Lambda(z-\mu^*)\,d\rho(z)
=
\operatorname{tr}(\Lambda\Sigma^*) .
\]
Thus
\[
\mathrm{KL}(\rho\,\|\,q)
=
\mathrm{KL}(\rho\,\|\,\rho_*)+\mathrm{const.},
\]
and the unique minimizer is \(\rho_*\). Since
\[
\mathrm{KL}(\rho\,\|\,q)
=
\int V(z)\rho(z)\,dz
+
\int \rho(z)\log\rho(z)\,dz
+
\log Z_q
\]
for absolutely continuous \(\rho\), and the entropy is \(+\infty\) for
singular measures, this is equivalent to minimizing \(\mathcal F\) over
\(\mathcal C(\mu^*,\Sigma^*)\).

It remains to derive the identities for \(\lambda\) and \(\Lambda\). The law
\(\rho_*\) has Gaussian tails, and \(g\) has at most linear growth. Therefore,
using compactly supported smooth cutoffs and then sending the cutoff radius to
infinity, the standard integration-by-parts identities hold:
\[
\mathbb E_{\rho_*}[\nabla\log\rho_*(Z)]=0,
\]
and
\[
\mathbb E_{\rho_*}
\left[
(Z-\mu^*)\nabla\log\rho_*(Z)^\top
\right]
=
-I_d .
\]
Since
\[
\nabla\log\rho_*(z)
=
-g(z)-\lambda-\Lambda(z-\mu^*),
\]
the first identity gives
\[
\lambda=-\mathbb E_{\rho_*}[g(Z)] .
\]
The second gives the stronger population relation
\[
\Sigma^*\Lambda=I_d-C,
\qquad
C=\mathbb E_{\rho_*}\left[(Z-\mu^*)g(Z)^\top\right].
\]
Since \(\Lambda\) is symmetric, transposing and adding gives
\[
\Sigma^*\Lambda+\Lambda\Sigma^*
=
2I_d-(C+C^\top)
=
2\left(I_d-\operatorname{sym}(C)\right).
\]
This completes the proof.

\subsection{Step-size stability scale for MM-SOLD}
\label{app:proof_stepsize_bound}

We give a local calculation explaining why the stable step-size scale of MM-SOLD is expected to be $O(\delta^2)$, where $\delta$ is the GMM component standard deviation. This is a local drift-scale estimate for the high-density regions visited by the sampler, rather than a global sharp stability theorem.

Let
\begin{equation*}
\hat p^\delta(z)
=
\frac{1}{N}\sum_{i=1}^N
\varphi_\delta(z-x_i),
\qquad
U^\delta(z):=-\log \hat p^\delta(z),
\end{equation*}
where $\varphi_\delta$ is the Gaussian density with covariance $\delta^2 I_d$. Define
\begin{equation*}
w_i(z)
=
\frac{\exp\!\left(-\frac{\|z-x_i\|^2}{2\delta^2}\right)}
{\sum_{j=1}^N
\exp\!\left(-\frac{\|z-x_j\|^2}{2\delta^2}\right)},
\qquad
m(z):=\sum_{i=1}^N w_i(z)x_i .
\end{equation*}
Then
\begin{equation*}
\nabla U^\delta(z)
=
\frac{z-m(z)}{\delta^2}.
\end{equation*}
Differentiating the softmax barycenter gives
\begin{equation*}
\nabla m(z)
=
\frac{1}{\delta^2}
\sum_{i=1}^N
w_i(z)(x_i-m(z))(x_i-m(z))^\top
=
\frac{1}{\delta^2}\Gamma_z,
\end{equation*}
where $\Gamma_z$ is the covariance matrix of the training centers under the local softmax weights. Hence
\begin{equation}
\nabla^2 U^\delta(z)
=
\frac{1}{\delta^2}I_d
-
\frac{1}{\delta^4}\Gamma_z .
\label{eq:gmm_hessian_bound}
\end{equation}

In the high-density regions relevant to sampling, the softmax weights are local: the dominant centers lie within an $O(\delta)$ neighborhood of $z$. Equivalently, assume along the relevant part of the trajectory that
\begin{equation*}
\|\Gamma_z\|_{\mathrm{op}}\le c_0\delta^2
\end{equation*}
for a constant $c_0$ independent of $\delta$. Then \eqref{eq:gmm_hessian_bound} implies
\begin{equation}
\|\nabla^2 U^\delta(z)\|_{\mathrm{op}}
\le
\frac{1+c_0}{\delta^2}.
\label{eq:local_gmm_hessian_bound}
\end{equation}

The score-smoothed potential used in MM-SOLD is
\begin{equation*}
V_{\delta,\sigma}(z)
=
\mathbb E_{\varepsilon}
\left[
U^\delta(z+\sigma\varepsilon)
\right],
\qquad
\varepsilon\sim\mathcal N(0,I_d).
\end{equation*}
Thus
\begin{equation*}
\nabla V_{\delta,\sigma}(z)
=
\mathbb E_{\varepsilon}
\left[
\nabla U^\delta(z+\sigma\varepsilon)
\right].
\end{equation*}
Under the same local bound for the perturbed points $z+\sigma\varepsilon$, differentiation under the expectation yields
\begin{equation}
\nabla^2 V_{\delta,\sigma}(z)
=
\mathbb E_{\varepsilon}
\left[
\nabla^2 U^\delta(z+\sigma\varepsilon)
\right],
\qquad
\|\nabla^2 V_{\delta,\sigma}(z)\|_{\mathrm{op}}
\le
\frac{1+c_0}{\delta^2}.
\end{equation}
Thus score smoothing does not change the local Hessian scale.

If we view the large-particle limiting target through the moment-matched potential, the additional quadratic tilting contributes a constant Hessian $\Lambda$. The local effective drift Lipschitz scale is therefore bounded by
\begin{equation}
L_{\mathrm{eff}}
\le
\frac{1+c_0}{\delta^2}
+
\|\Lambda\|_{\mathrm{op}}.
\end{equation}
When the quadratic correction is moderate, or more generally when
$\|\Lambda\|_{\mathrm{op}}=O(\delta^{-2})$, we obtain
\begin{equation*}
L_{\mathrm{eff}}=O(\delta^{-2}).
\end{equation*}
The projection and retraction used by MM-SOLD act on the fixed moment-constraint manifold and do not change this local drift scale.

For an explicit overdamped Langevin discretization, the stable step size is controlled by the inverse drift Lipschitz scale. Therefore, locally,
\begin{equation}
h_{\max}
\asymp
L_{\mathrm{eff}}^{-1}
=
O(\delta^2).
\end{equation}

\subsection{Estimation of the tilting parameters}
\label{app:tilting-parameter-estimation}

The tilting parameters \(\lambda\) and \(\Lambda\) appearing in
Proposition~\ref{prop:large_particle_target} are defined by the population
identities
\begin{equation}
        \lambda
        =
        -\mathbb E_{\widehat p^{\delta,k,\sigma}_{\mu^*,\Sigma^*}}\!\left[g(Z)\right],
        \qquad
        C
        =
        \mathbb E_{\widehat p^{\delta,k,\sigma}_{\mu^*,\Sigma^*}}\!\left[
            (Z-\mu^*)g(Z)^\top
        \right],
        \label{eq:appendix-population-tilting-parameters}
\end{equation}
with \(\Lambda\) recovered from \(C\) via the Lyapunov equation
\(\Sigma^*\Lambda+\Lambda\Sigma^*=2(I_d-\sym(C))\). The expectations
in~\eqref{eq:appendix-population-tilting-parameters} are taken under the
moment-matched target itself, which is not available in practice. We instead
use the empirical estimator
\begin{equation}
        \widehat\lambda
        =
        -\frac{1}{N}\sum_{i=1}^N g(x_i),
        \qquad
        \widehat C
        =
        \frac{1}{N}\sum_{i=1}^N (x_i-\mu^*)g(x_i)^\top ,
        \label{eq:appendix-empirical-tilting-parameters}
\end{equation}
in which the population expectations are replaced by averages over the
training set. Since the training samples are drawn from the data law and not
from \(\widehat p^{\delta,k,\sigma}_{\mu^*,\Sigma^*}\), this estimator is
biased. We discuss the structure of the bias and a self-consistency issue
specific to evaluating \(g\) at training points, and connect both to empirical
evidence already present in the paper.

\paragraph{Bias structure.}
Let \(\widehat\pi_{\mathrm{data}}\) denote the empirical distribution of the
training set. The empirical estimator \(\widehat\lambda\) targets
\(-\mathbb E_{\widehat\pi_{\mathrm{data}}}[g(X)]\), whereas the population
parameter \(\lambda\) is the expectation of \(-g\) under
\(\widehat p^{\delta,k,\sigma}_{\mu^*,\Sigma^*}\). The bias is therefore
\begin{equation}
        b_\lambda
        :=
        \mathbb E_{\widehat\pi_{\mathrm{data}}}[g(X)]
        -
        \mathbb E_{\widehat p^{\delta,k,\sigma}_{\mu^*,\Sigma^*}}[g(Z)] .
        \label{eq:appendix-lambda-bias-definition}
\end{equation}
The moment-matched target is constructed precisely so that its first and
second moments agree with those of \(\widehat\pi_{\mathrm{data}}\):
\[
        \mathbb E_{\widehat\pi_{\mathrm{data}}}[X]
        =
        \mathbb E_{\widehat p^{\delta,k,\sigma}_{\mu^*,\Sigma^*}}[Z]
        =\mu^*,
        \qquad
        \mathrm{Cov}_{\widehat\pi_{\mathrm{data}}}[X]
        =
        \mathrm{Cov}_{\widehat p^{\delta,k,\sigma}_{\mu^*,\Sigma^*}}[Z]
        =\Sigma^* .
\]
Decompose \(g\) about \(\mu^*\) into an affine part and a residual:
\begin{equation}
        g(z)
        =
        g(\mu^*)
        +
        H(z-\mu^*)
        +
        r(z),
        \qquad
        H:=\nabla g(\mu^*),
        \label{eq:appendix-g-affine-decomposition}
\end{equation}
where \(r(z)=O(\|z-\mu^*\|^2)\) collects the higher-order terms. Substituting
into~\eqref{eq:appendix-lambda-bias-definition} yields
\begin{equation}
        b_\lambda
        =
        \mathbb E_{\widehat\pi_{\mathrm{data}}}[r(X)]
        -
        \mathbb E_{\widehat p^{\delta,k,\sigma}_{\mu^*,\Sigma^*}}[r(Z)] ,
        \label{eq:appendix-lambda-bias-residual}
\end{equation}
because the constant and linear contributions \(g(\mu^*)\) and \(H(z-\mu^*)\)
integrate identically against any two distributions sharing the same first
moment. The same argument applied componentwise to \((z-\mu^*)g(z)^\top\) and
using the matched second moment gives
\begin{align}
        b_C
        &:=
        \mathbb E_{\widehat\pi_{\mathrm{data}}}\!\left[
            (X-\mu^*)g(X)^\top
        \right]
        -
        \mathbb E_{\widehat p^{\delta,k,\sigma}_{\mu^*,\Sigma^*}}\!\left[
            (Z-\mu^*)g(Z)^\top
        \right]\\
        \nonumber&=
        \mathbb E_{\widehat\pi_{\mathrm{data}}}\!\left[
            (X-\mu^*)r(X)^\top
        \right]
        -
        \mathbb E_{\widehat p^{\delta,k,\sigma}_{\mu^*,\Sigma^*}}\!\left[
            (Z-\mu^*)r(Z)^\top
        \right].
        \label{eq:appendix-C-bias-residual}
\end{align}

The conclusion is that the bias of the estimator is governed by the non-affine
part of \(g\) on the support of the data. The Gaussian-tilt parametrization
\((\lambda,\Lambda)\) of \(\widehat p^{\delta,k,\sigma}_{\mu^*,\Sigma^*}\)
captures only affine modifications of \(g\); any residual structure beyond
this affine part is a misspecification of the moment-matched parametric
family. The estimator's bias is therefore of the same order as the
misspecification that the moment-matched target itself accepts, rather than an
additional independent source of error. In particular, in regimes where \(g\)
is well-approximated by an affine function on the data support---the regime in
which the moment-matched target~\eqref{eq:moment_matched_target} is itself a
faithful description of the data---we have \(b_\lambda\) and \(b_C\) close to
zero. The standard finite-sample error from replacing
\(\widehat\pi_{\mathrm{data}}\) by a finite training set is the usual
\(O(N^{-1/2})\) Monte Carlo term and decreases as \(N\) grows.

\paragraph{Self-evaluation at training points.}
A second issue specific
to~\eqref{eq:appendix-empirical-tilting-parameters} concerns the fact that
\(g\) itself depends on the training set through \(\widehat p^\delta\), and is
then evaluated at training points. When the component bandwidth \(\delta\) is
small relative to the typical inter-point spacing, the softmax weights at a
query \(y=z+\sigma\varepsilon\) for \(z=x_i\) and \(\|\varepsilon\|=O(1)\)
place dominant mass on \(x_i\) itself, biasing \(g(x_i)\) toward the
self-attraction value induced by the \(x_i\)-component of the GMM. Two
observations make this issue mild in the regimes used in our experiments.

First, when \(\sigma\gg\delta\) the smoothing kernel averages \(g\) over a
\(\sigma\)-ball around \(x_i\) that contains many other training points,
washing out the self-attraction effect. The experiments in
Appendices~\ref{app:handwritten_generation_details}
and~\ref{app:celebahq_generation_details} use \(\sigma/\delta\) ratios well
above one, with \((\sigma,\delta)=(1.0,0.05)\) for digit augmentation,
\((\sigma,\delta)\) of order \((0.4,0.03)\) for digit-8 generation, and
\((\sigma,\delta)\) of order \((2.5,0.05)\) for CelebA-HQ.

Second, an unbiased correction is available by leave-one-out evaluation:
define
\[
        g^{(-i)}(z)
        :=
        -\nabla_z\,
        \mathbb E_{\varepsilon\sim\rho_\varepsilon}
        \log\widehat p^\delta_{(-i)}(z+\sigma\varepsilon),
\]
where \(\widehat p^\delta_{(-i)}\) is the GMM built from
\(\{x_j\}_{j\ne i}\), and replace \(g(x_i)\) by \(g^{(-i)}(x_i)\)
in~\eqref{eq:appendix-empirical-tilting-parameters}. This removes the
self-attraction contribution at the cost of one extra GMM evaluation per
training point. In the \(\sigma\gg\delta\) regime relevant to our experiments,
the difference between \(g\) and \(g^{(-i)}\) at a training point is small
because no single component dominates the smoothed score; in our experiments
we used the simpler estimator~\eqref{eq:appendix-empirical-tilting-parameters}
without observing the artifacts that would be expected if self-evaluation
were significant.

\paragraph{Empirical evidence.}
The kinetic-Langevin ablation in Appendix~\ref{app:ablation_studies},
summarized in Figure~\ref{fig:ablation_particle_train}, provides a direct
empirical assessment of estimator quality. Kinetic Langevin samples the
explicit moment-matched target~\eqref{eq:moment_matched_target} using
\((\widehat\lambda,\widehat\Lambda)\)
from~\eqref{eq:appendix-empirical-tilting-parameters}, with no particle-level
moment matching. The KID of the kinetic-Langevin samples is therefore a
direct diagnostic of how well \((\widehat\lambda,\widehat\Lambda)\)
approximate the population parameters: any failure of the estimator
translates directly into sampling from a misspecified target.
Figure~\ref{fig:ablation_particle_train} (right) shows that, on the digit-8
task with fixed particle number \(P=500\), the kinetic-Langevin KID decreases
monotonically from \(0.238\) at \(N=100\) to approximately \(0.14\) at
\(N=1000\). This monotone decrease is consistent with both the
bias~\eqref{eq:appendix-lambda-bias-residual} and the finite-sample variance
vanishing as the training set grows, as predicted by the analysis above. The
gap to MM-SOLD at small \(N\) further illustrates that particle-level moment
matching is strictly more reliable than direct sampling from the estimated
target when training data are sparse.

\section{Additional experimental details and results}
\label{app:exp_details}

\subsection{Handwritten digit classification details}
\label{app:handwritten_digit_details}

\paragraph{Dataset and preprocessing}
We use the handwritten digits dataset \cite{beaulac2022introducing}. The split is class-balanced, with $1{,}000$ training, $58$ validation, and $300$ test images per digit class. Raw $500\times500$ grayscale images are converted to floating point and normalized to $[0,1]$. We then apply geometric normalization: an ink-weighted centroid is computed from the darkest pixels, the digit is PCA-aligned using its ink inertia matrix, and its scale is normalized by matching the root-mean-square ink distance to a target value. This target scale is estimated from the training set with an $18\%$ margin to prevent the digits from extending beyond the image boundaries and reused for validation and test images. Finally, images are bilinearly resized to $64\times64$ and intensity-inverted so that ink has value $1$ on a dark background.

\paragraph{NRAE encoder}
All experiments are performed in the latent space of a pretrained Nuclear Norm-Regularized AutoEncoder (NRAE) \cite{scarvelis2024nuclear}. The encoder first applies a two-dimensional discrete cosine transform (DCT), which represents the image by spatial-frequency coefficients, and keeps the low-frequency $20\times20$ block. This gives a compact $400$-dimensional representation that preserves the coarse digit shape while discarding high-frequency noise. The retained coefficients are then mapped to a $100$-dimensional latent code by a $400\to2048\to100$ MLP, with an ELU activation \cite{clevert2015fast} after the first layer. The decoder maps the latent code back to $400$ DCT coefficients through a $100\to2048\to400$ MLP, reconstructs a coarse $64\times64$ image by inverse DCT, and refines it with a small U-Net using base channel width $32$ and skip connections across resolutions. The NRAE is trained for $100$ epochs with AdamW \cite{loshchilov2017decoupled} using learning rate $10^{-4}$ and batch size $64$. The loss combines a log-cosh reconstruction term with sharpness parameter $\alpha=100$ \cite{chen2018log} and a finite-difference noise-sensitivity regularizer on the two encoder stages, with regularization noise level $\sigma_{\mathrm{NRAE}}=2.0$ and finite-difference perturbation scale $10^{-3}$, following \cite{scarvelis2024nuclear}.

\paragraph{Moment-matched classifier}
For each digit class $c$, we fit one class-conditional moment-matched density of the form in \eqref{eq:moment_matched_target} using the $1{,}000$ training latents from that class. The parameters $\lambda_c$ and $\Lambda_c$ for each class of digit are estimated from the class-$c$ training latents using \eqref{eq:tilting_parameter_identities}. In this experiment, all class potential energies are evaluated in the shared NRAE latent space. To account for the unknown class normalizing constants, we learn scalar biases $b_c$ on the validation set by minimizing multiclass cross-entropy for logits $-E_c(z)-b_c$. The resulting classifier is the minimum ECM rule used in the main text.

\paragraph{MLP baselines}
Both discriminative baselines use the same latent-space MLP classifier. The network maps the $100$-dimensional NRAE latent code through two hidden layers of widths $256$ and $128$, followed by a $10$-way linear output layer. Each hidden layer uses GELU activation \cite{hendrycks2016gaussian} followed by LayerNorm \cite{ba2016layer}. We train with cross-entropy loss using AdamW, batch size $256$, and a warmup-cosine learning-rate schedule: the learning rate warms up from $0$ to $10^{-3}$ during the first $10\%$ of training steps and then decays to $10^{-5}$. Each model is trained for $100$ epochs. For model selection, we grid search the weight decay over $\{10^{-3},10^{-2},5\times10^{-2},10^{-1}\}$ and keep the models with the best validation accuracy. The final test accuracy is reported for the best validation model.

\paragraph{MM-SOLD augmentation details}
For latent augmentation, MM-SOLD is run separately for each digit class. We first compute the class mean and covariance in the NRAE latent space, and apply a partial whitening transform based on the class covariance. In the implementation, the largest $k=20$ covariance eigenvalues are capped for numerical stability before constructing the whitening and inverse-whitening maps. MM-SOLD is then run in this whitened latent space. We generate $9{,}000$ samples per class using step size $10^{-4}$, $500$ Langevin iterations, GMM component standard deviation $\delta=0.05$, smoothing bandwidth $\sigma=1.0$, $M=32$ antithetic Monte Carlo samples, and the LM discretization. The generated samples are inverse-whitened back to the original NRAE latent space and combined with the original latent codes to train the augmented MLP.

\paragraph{Metrics}
We report per-class and overall accuracy on the test set. Since the test set has $300$ samples per class, the overall accuracy is also the class-balanced average accuracy up to rounding.

\subsection{Handwritten digit generation details}
\label{app:handwritten_generation_details}

\paragraph{Task and latent representation}
We use only digit ``8'' from the handwritten digits dataset. The preprocessing and NRAE encoder are the same as in Appendix \ref{app:handwritten_digit_details}. In particular, raw images are geometrically normalized, resized to $64\times64$, intensity-inverted so that ink has value $1$, and then encoded by the fixed NRAE encoder. The experiment uses $1{,}000$ training images and $300$ held-out test images.
All three methods generate in this $100$-dimensional NRAE latent space. Generated latent vectors are decoded by the same fixed NRAE decoder before computing image-space features or visualizing samples. Thus differences in the results are due to the latent-space generative methods.

\paragraph{Whitened latent space}
Both MM-SOLD and $\sigma$-CFDM are run in a common partially whitened latent space. We compute the empirical mean and covariance of the digit-8 training latents and construct whitening and inverse-whitening maps from this covariance. As the augmentation baseline in the classification experiment, we cap the largest $k=20$ covariance eigenvalues before forming these maps. This stabilizes the whitening transform by preventing the largest principal directions from dominating the geometry, while still keeping all $100$ latent coordinates. After sampling, all generated particles are mapped back to the original NRAE latent space before decoding.

\paragraph{MM-SOLD sampler}
MM-SOLD is initialized from the class-conditional GMM in the whitened latent space and then evolved on the moment-constrained manifold using the LM discretization. We use $100$ Langevin iterations, step size $h=5\times10^{-4}$, GMM component standard deviation $\delta=0.03$, and independent noise across particles. At each Langevin step, the smoothed score is estimated with full training latents rather than using the nearest-neighbor estimator considering the relatively small size of this dataset. The score-smoothing bandwidth $\sigma$ and the number $M$ of Monte Carlo perturbations are grid-searched over $\sigma\in\{0.1,0.2,0.3,0.4,0.5,0.6,0.8,1.0\}$ and $M\in\{2,4,6,8,16,32\}$.
For each grid cell, $300$ latent samples are generated and decoded. The main text reports the best result over this grid for each metric.

\paragraph{$\sigma$-CFDM baseline}
The $\sigma$-CFDM baseline is implemented in the same whitened NRAE latent space and uses the same grid over $(\sigma,M)$ as MM-SOLD. Particles are initialized from $\mathcal N(0,I)$ and evolved with the deterministic closed-form diffusion ODE \cite{scarvelis2023closed}. We use $100$ Euler steps with step size $1/100$, evaluating the ODE from $t=0.01$ to $t=0.99$ to avoid the singularity at $t=1$. 

\paragraph{Latent DDPM baseline}
The latent DDPM is trained on the same $1{,}000$ digit-8 NRAE latents. Before training, each latent coordinate is standardized using the training-set mean and standard deviation; coordinates with standard deviation below $10^{-3}$ are left unscaled to avoid amplifying inactive directions. The denoiser is an MLP with DiT-style adaptive LayerNorm conditioning \cite{peebles2023scalable}. A sinusoidal time embedding of dimension $128$ is mapped to a $256$-dimensional conditioning vector. The noisy latent is first projected to width $256$, then passed through $4$ AdaLN residual blocks. Each block predicts scale and shift parameters from the time embedding, applies adaptive LayerNorm, Swish activations, dense layers, and a residual connection. A final AdaLN head maps the hidden state back to a $100$-dimensional noise prediction. We train the DDPM with a cosine noise schedule \cite{nichol2021improved} and $T=1{,}000$ diffusion steps. The loss is the standard noise-prediction objective as in \cite{ho2020denoising}.
The optimizer is AdamW with learning rate $10^{-4}$, weight decay $10^{-3}$, batch size $128$, and a warmup-cosine learning-rate schedule. We train this model for $50{,}000$ epochs. At sampling time, we use deterministic DDIM with $100$ reverse steps. The generated standardized latents are unstandardized before decoding.

\paragraph{Feature extractor for KID and Recall}
KID and Recall are computed in the feature space of a pretrained digit classifier. We use the penultimate $256$-dimensional feature of a CNN classifier trained on the whole handwritten digit dataset. The feature extractor has a convolutional stem with base channel width $32$, four downsampling stages with residual blocks, global average pooling, LayerNorm, and a $256$-dimensional Swish-activated dense layer. The final classification layer is discarded. These features are used only for evaluation.

\paragraph{Evaluation metrics}
KID measures fidelity to the held-out test distribution. Given classifier features $X$ from test images and $Y$ from generated images, we use the unbiased polynomial-kernel MMD estimator with
\begin{equation*}
k(x,y)=\left(\frac{x^\top y}{d}+1\right)^3.
\end{equation*}

Recall measures coverage of the test distribution. We compute the radius of each test feature as the distance to its third nearest neighbor among test features. A test sample is covered if at least one generated sample lies inside this ball. The final Recall is the fraction of covered test samples.

DupRate measures memorization in the NRAE latent space. We first compute the within-training nearest-neighbor distances and set $\tau=\text{5th percentile of within-training nearest-neighbor distances}$.
A generated latent $z_i^{\mathrm{gen}}$ is counted as a duplicate if $\min_j
\left\|
z_i^{\mathrm{gen}}-z_j^{\mathrm{train}}
\right\|
<
\tau$.
The duplication rate is the fraction of generated samples satisfying this condition. Hence low DupRate indicates that the method is not simply reproducing training latents.

Sampling time for MM-SOLD and $\sigma$-CFDM is measured on CPU at $(\sigma,M)=(0.1,2)$, after one JIT warm-up run (JAX) and averaged over five timed runs. DDPM sampling time is measured on a V100 GPU. For DDPM we also record the total training time, while MM-SOLD and $\sigma$-CFDM are training-free and have zero training time.

\paragraph{Additional results}
Figure \ref{fig:digit8_appendix_grids} shows additional samples from $\sigma$-CFDM and MM-SOLD under different choices of $\sigma$ and $M$. The behavior of $\sigma$-CFDM is visibly sensitive to these parameters. When $\sigma$ and $M$ are small, many samples stay close to individual training-like digits. Increasing $\sigma$ produces more irregular and off-manifold shapes, such as broken strokes, distorted loops, or digits that no longer clearly resemble an ``8''. Increasing $M$ has a different effect: the samples become cleaner but also more concentrated around a few similar loop patterns, reflecting the barycentric nature of $\sigma$-CFDM. In contrast, MM-SOLD is more stable across the same settings. It preserves recognizable digit-8 structure while maintaining variation in slant, loop size, stroke width, and local writing style. This is similar to what we have observed in Figure \ref{fig:2d_checkerboard_spirals}.

Figure \ref{fig:digit8_generation_heatmaps} gives the corresponding grid-level comparison. The KID heatmap shows that MM-SOLD improves fidelity over $\sigma$-CFDM across the whole grid, with reductions often above $70\%$. The Recall heatmap shows that MM-SOLD also covers the held-out test distribution better, especially when $M$ is moderate or large, indicating enhanced sample diversity. At the same time, the DupRate heatmap stays close to zero for most settings, indicating that the improvement is not caused by moving samples closer to the training set. 
\begin{figure}[t]
\centering
\setlength{\tabcolsep}{1.5pt}
\renewcommand{\arraystretch}{0.8}
\begin{tabular}{cccc}
\includegraphics[width=0.24\textwidth]{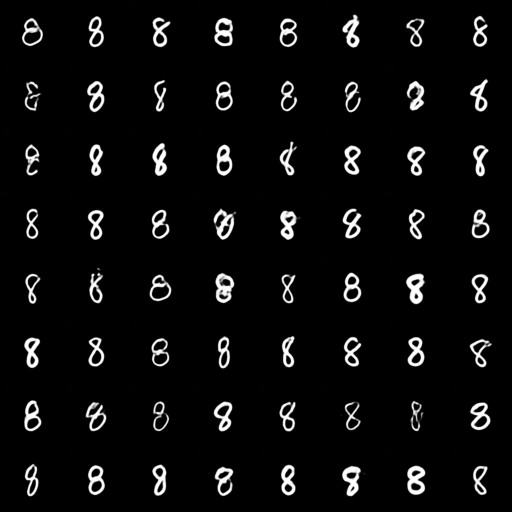} &
\includegraphics[width=0.24\textwidth]{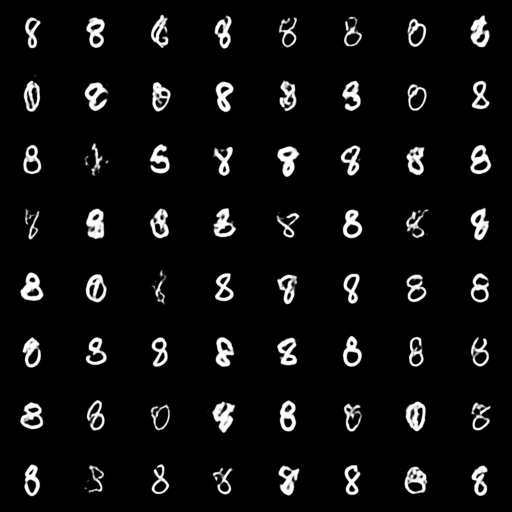} &
\includegraphics[width=0.24\textwidth]{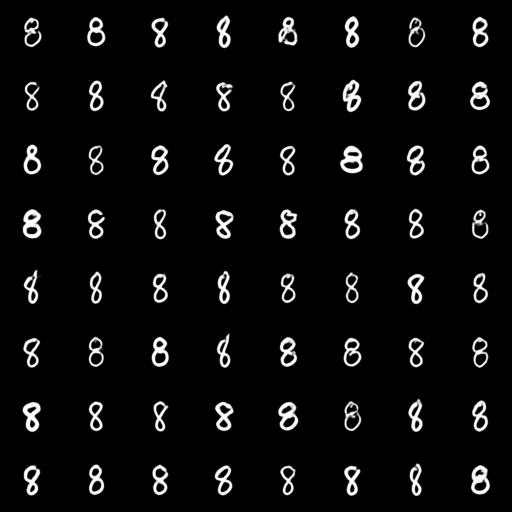} &
\includegraphics[width=0.24\textwidth]{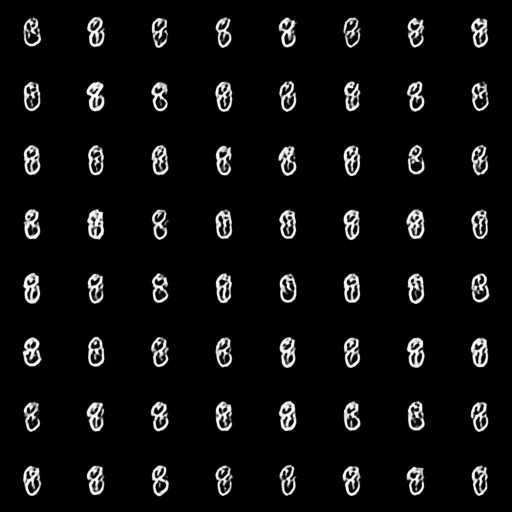} \\

{\scriptsize $M=2,\sigma=0.1$} &
{\scriptsize $M=2,\sigma=0.5$} &
{\scriptsize $M=32,\sigma=0.1$} &
{\scriptsize $M=32,\sigma=0.5$} \\

\multicolumn{4}{c}{\scriptsize $\sigma$-CFDM} \\[3pt]

\includegraphics[width=0.24\textwidth]{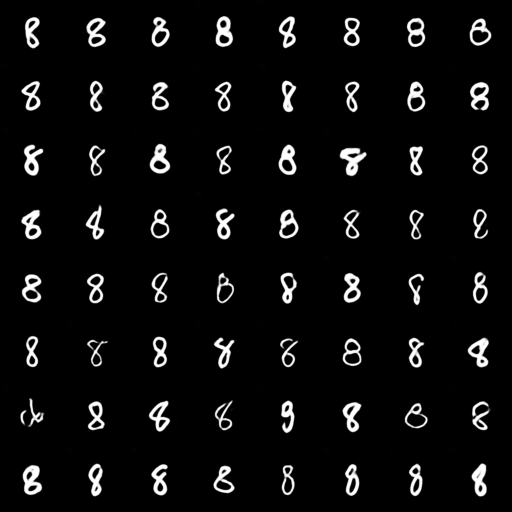} &
\includegraphics[width=0.24\textwidth]{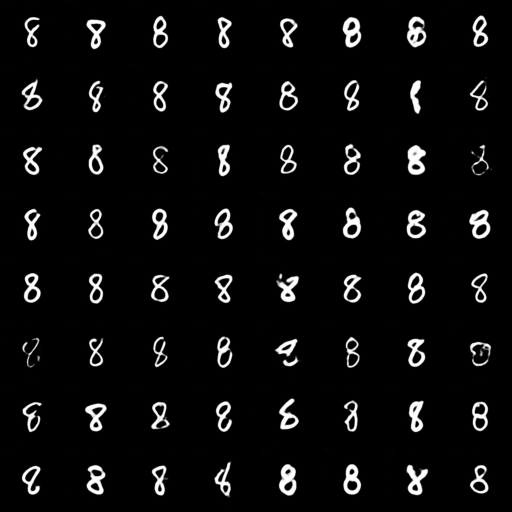} &
\includegraphics[width=0.24\textwidth]{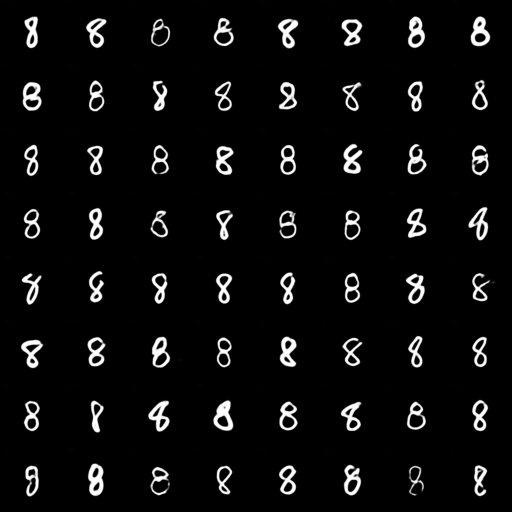} &
\includegraphics[width=0.24\textwidth]{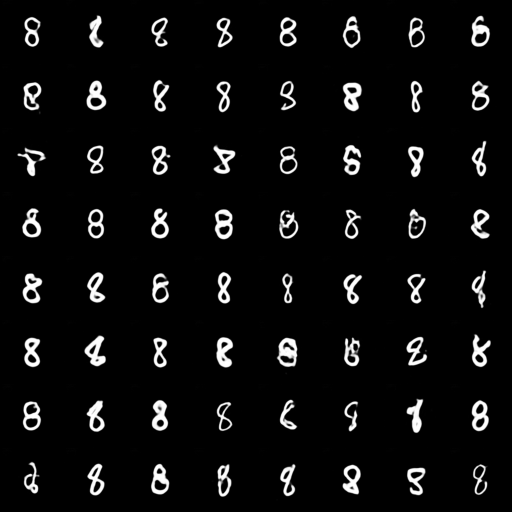} \\

{\scriptsize $M=2,\sigma=0.1$} &
{\scriptsize $M=2,\sigma=0.5$} &
{\scriptsize $M=32,\sigma=0.1$} &
{\scriptsize $M=32,\sigma=0.5$} \\

\multicolumn{4}{c}{\scriptsize MM-SOLD}
\end{tabular}
\caption{Additional sample grids for handwritten digit-8 generation under different smoothing bandwidths $\sigma$ and Monte Carlo sample numbers $M$. Top row: $\sigma$-CFDM; bottom row: MM-SOLD.}
\label{fig:digit8_appendix_grids}
\end{figure}

\begin{figure}[t]
\centering
\setlength{\tabcolsep}{0pt}
\renewcommand{\arraystretch}{0.9}
\makebox[\textwidth][c]{%
\begin{tabular}{@{}c@{\hspace{2pt}}c@{\hspace{2pt}}c@{}}
\includegraphics[width=0.326\textwidth, trim=5 5 53 5, clip]{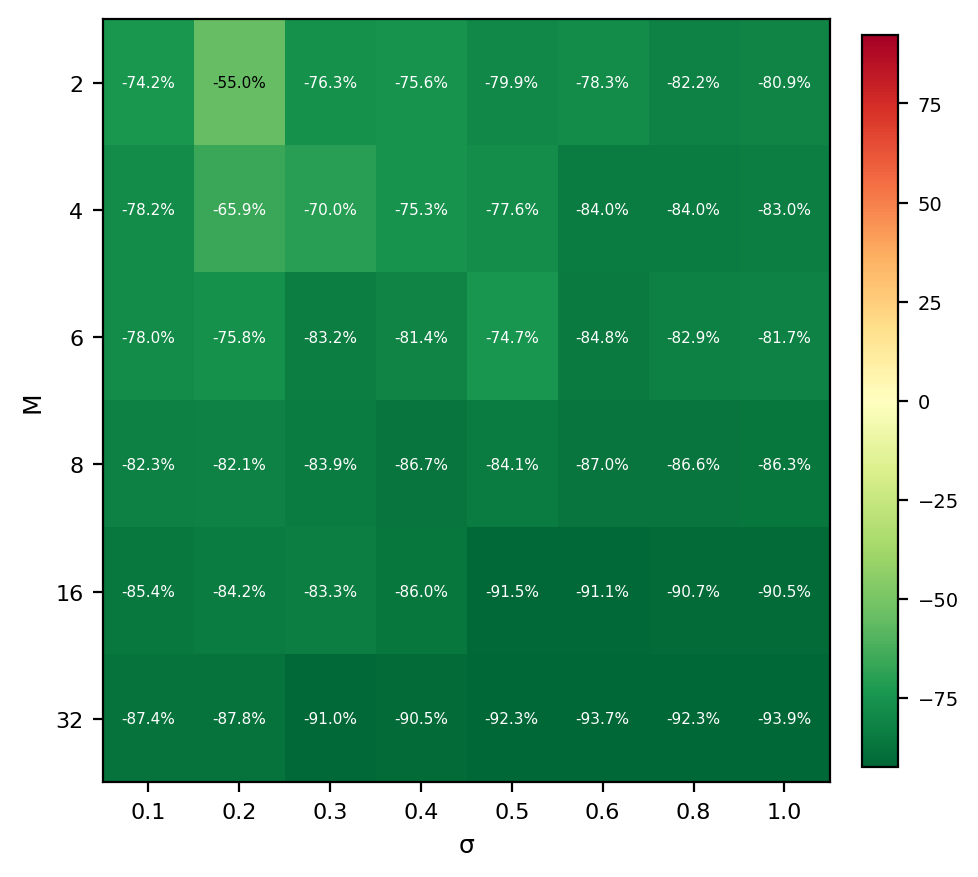} &
\includegraphics[width=0.326\textwidth, trim=5 5 53 5, clip]{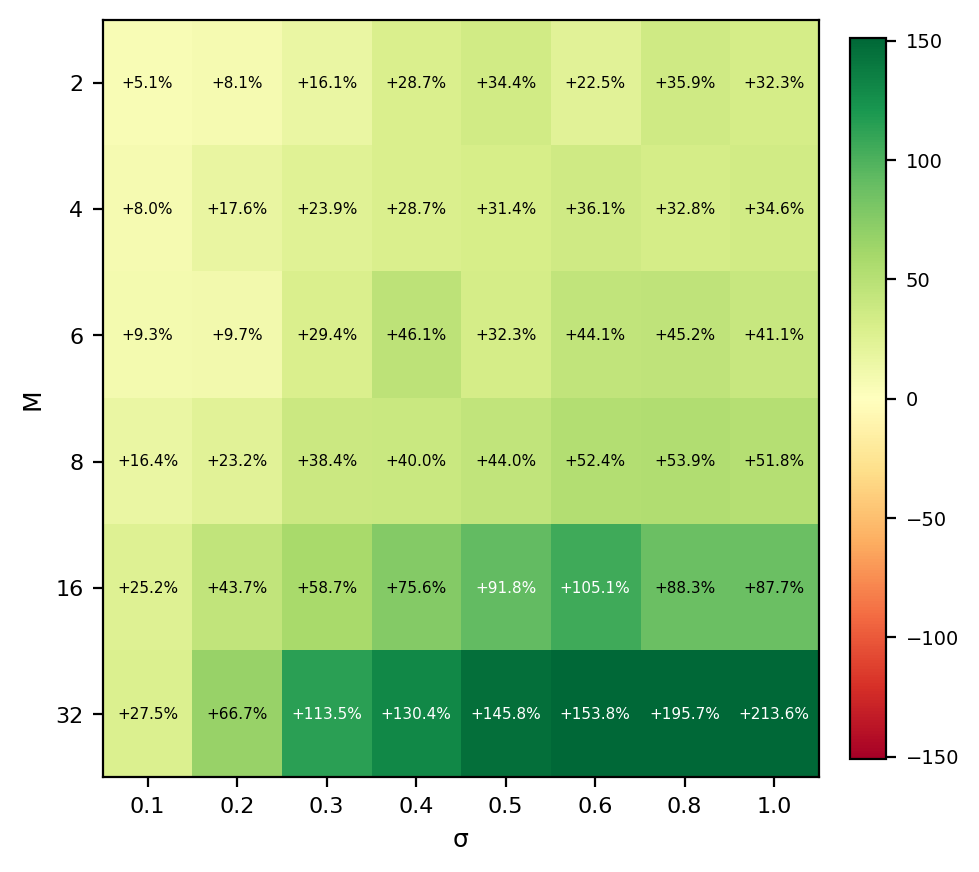} &
\includegraphics[width=0.326\textwidth, trim=5 5 53 5, clip]{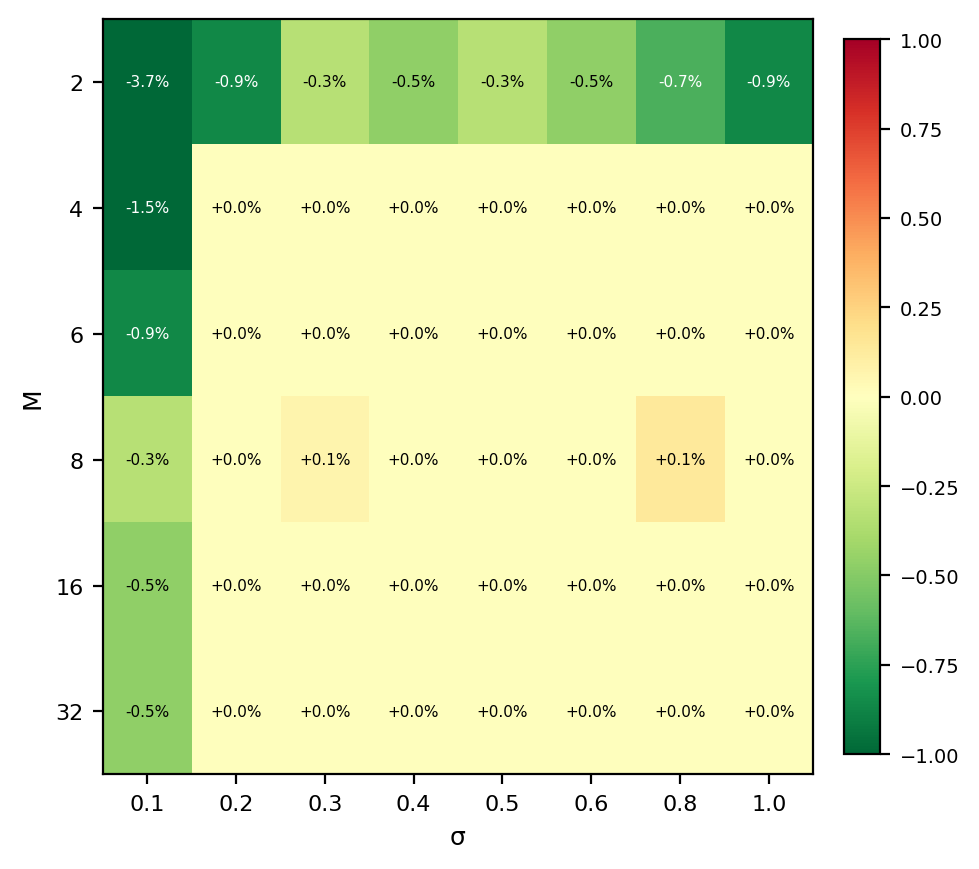} \\
{\scriptsize KID} &
{\scriptsize Recall} &
{\scriptsize DupRate}
\end{tabular}%
}
\caption{Heatmaps of the metric differences between MM-SOLD and $\sigma$-CFDM over smoothing bandwidth $\sigma$ and Monte Carlo sample number $M$. Green indicates MM-SOLD is better, red indicates $\sigma$-CFDM is better, and yellow indicates comparable performance.}
\label{fig:digit8_generation_heatmaps}
\end{figure}

\subsection{CelebA-HQ generation details}
\label{app:celebahq_generation_details}

\paragraph{Dataset and NRAE representation}
We use the CelebA-HQ dataset \cite{karras2017progressive} resized to $256\times256$ RGB images. The experiment uses $27{,}000$ training images and $3{,}000$ held-out test images. Images are encoded by a fixed NRAE into $700$-dimensional latent vectors. Compared with the handwritten-digit NRAE in Appendix \ref{app:handwritten_digit_details}, the architecture is scaled to RGB images: the encoder applies a per-channel DCT and keeps the low-frequency $80\times80$ block from each channel, giving $80\times80\times3=19{,}200$ coefficients, which are mapped by a $19{,}200\to10{,}000\to700$ MLP. The decoder maps the latent code back to the DCT coefficient space by a $700\to10{,}000\to19{,}200$ MLP, reconstructs a coarse RGB image by inverse DCT, and refines it with a $256$-resolution U-Net with base channel width $32$. The NRAE is trained with the same log-cosh reconstruction loss and finite-difference nuclear-norm regularization as before, using AdamW with learning rate $10^{-4}$, batch size $32$, and $300$ epochs.

\paragraph{Latent spaces}
MM-SOLD and $\sigma$-CFDM are run in the original $700$-dimensional NRAE latent space after partial whitening. We compute the empirical mean and covariance of the $27{,}000$ training latents and cap the largest $k=500$ covariance eigenvalues before constructing the whitening and inverse-whitening maps. Generated particles are inverse-whitened before decoding.

\paragraph{Nearest-neighbor score estimation}
Unlike the handwritten experiment, evaluating the smoothed score against all $27{,}000$ training samples for every particle, Monte Carlo perturbation, and iteration in $700$-dimensional space is computationally and memory costly. We therefore use the nearest-neighbor score estimator in Appendix \ref{app:nn_score}. For each query point, we keep the $K=50$ nearest training latents and sample $L=50$ additional points from the remaining training set. The same estimator is used for MM-SOLD and $\sigma$-CFDM.

\paragraph{Samplers}
MM-SOLD uses the LM discretization in the partially whitened latent space with $100$ Langevin steps, step size $h=2\times10^{-3}$, GMM component standard deviation $\delta=0.05$, and independent particle noise at each step. The smoothing bandwidth and Monte Carlo sample number are searched over $\sigma\in\{1.0,1.5,2.0,2.5,3.0,4.0,5.0,7.0\}$ and $M\in\{2,4,8,16,32,64\}$.
For each grid cell, we generate $ P=3{,}000$ samples.
The $\sigma$-CFDM baseline is run in the same partially whitened latent space and uses the same $(\sigma,M)$ grid. It is evolved with $100$ Euler steps, and its score is estimated by the same $K=50,L=50$ nearest-neighbor estimator. Particles are processed in batches of size $500$ to control memory usage.
The latent DDPM takes the same type of DiT-style MLP with hidden width $512$, $8$ AdaLN residual blocks, and sinusoidal time embeddings of dimension $128$. We still use a cosine noise schedule with $T=1{,}000$ diffusion steps and the standard noise-prediction loss. The optimizer is AdamW with learning rate $10^{-4}$, weight decay $10^{-3}$, batch size $256$, and warmup-cosine decay. We train this model for $100{,}000$ epochs. Sampling uses deterministic DDIM with $100$ reverse steps.

\paragraph{Metrics and timing}
FID and KID are computed using $2{,}048$-dimensional Inception features extracted from the $3{,}000$ test images and generated images. We also report FID here because CelebA-HQ is a larger RGB dataset and is better matched to standard Inception features; for the smaller grayscale digit experiment, KID is more stable. Recall is computed in the same Inception feature space using the $3$-nearest-neighbor coverage criterion. DupRate is computed in the original $700$-dimensional NRAE latent space, using the $1$st percentile of within-training nearest-neighbor distances as the duplicate threshold.
For each MM-SOLD and $\sigma$-CFDM grid cell, metrics are computed on generated samples five times and reported as mean $\pm$ standard deviation. Sampling time for MM-SOLD and $\sigma$-CFDM is measured on CPU after a JAX warm-up run. DDPM sampling and training are measured on a single H100 GPU. MM-SOLD and $\sigma$-CFDM are training-free, so their training time is zero.

\paragraph{Additional results}
Figure~\ref{fig:celebahq_appendix_grids} shows additional CelebA-HQ samples under different choices of $\sigma$ and $M$. The qualitative behavior is consistent with the handwritten experiment but more visible on faces. $\sigma$-CFDM captures the common facial layout, but often washes out identity-specific details such as hair texture, face shape, and local skin or background structure. Its samples also become more sensitive as $\sigma$ and $M$ vary. MM-SOLD is more stable across these settings and better preserves facial variation while maintaining coherent global structure.

Figure~\ref{fig:celebahq_generation_heatmaps} gives the full grid-level comparison. The FID, KID, and Recall heatmaps show that MM-SOLD improves over $\sigma$-CFDM across most searched settings, with especially clear gains in Recall. The DupRate heatmap remains close to zero, indicating that the improvement is not caused by moving generated samples closer to the training set. These results show that the advantage of moment matching persists in a larger RGB image dataset.

\begin{figure}[t]
\centering
\setlength{\tabcolsep}{1.5pt}
\renewcommand{\arraystretch}{0.8}
\begin{tabular}{cccc}
\includegraphics[width=0.24\textwidth]{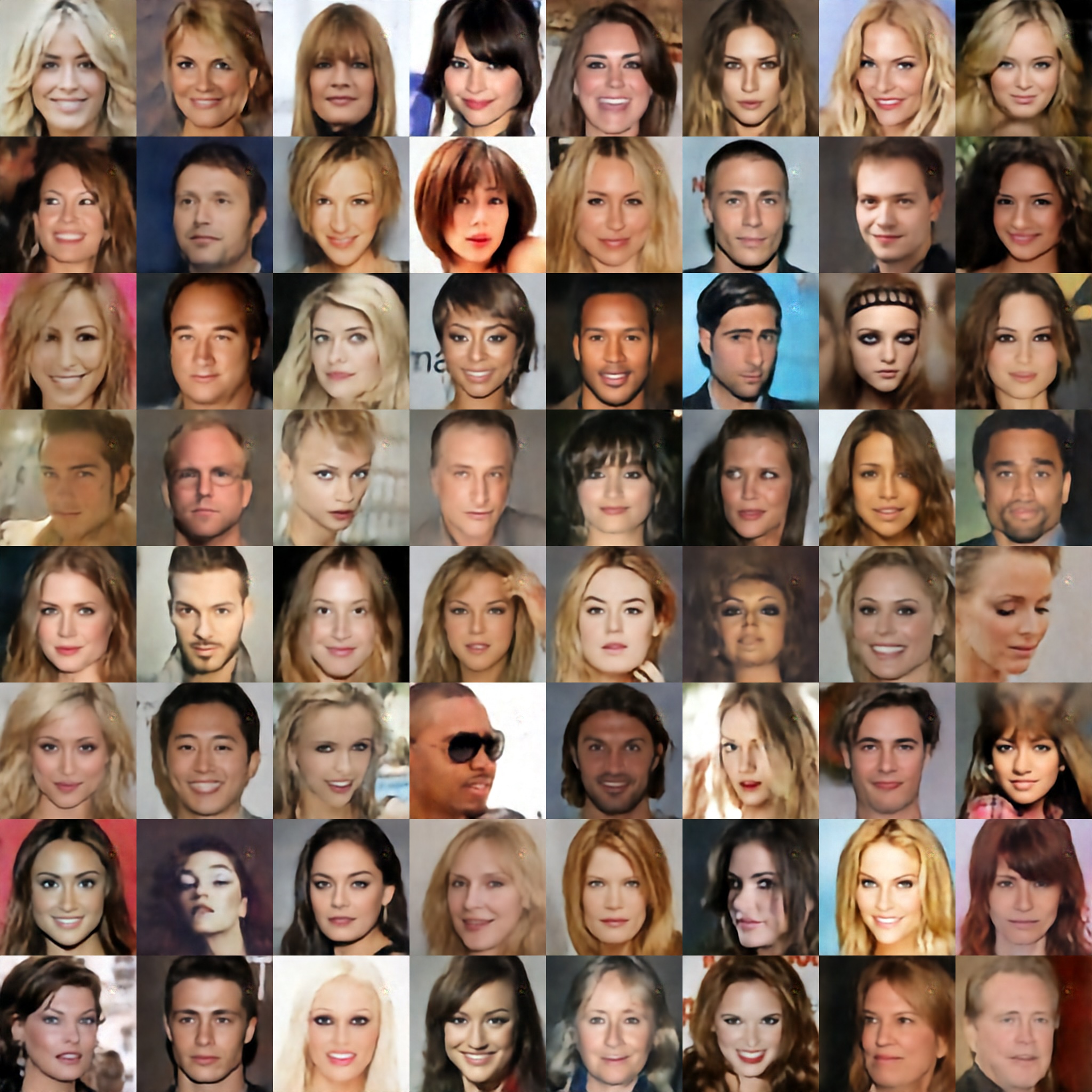} &
\includegraphics[width=0.24\textwidth]{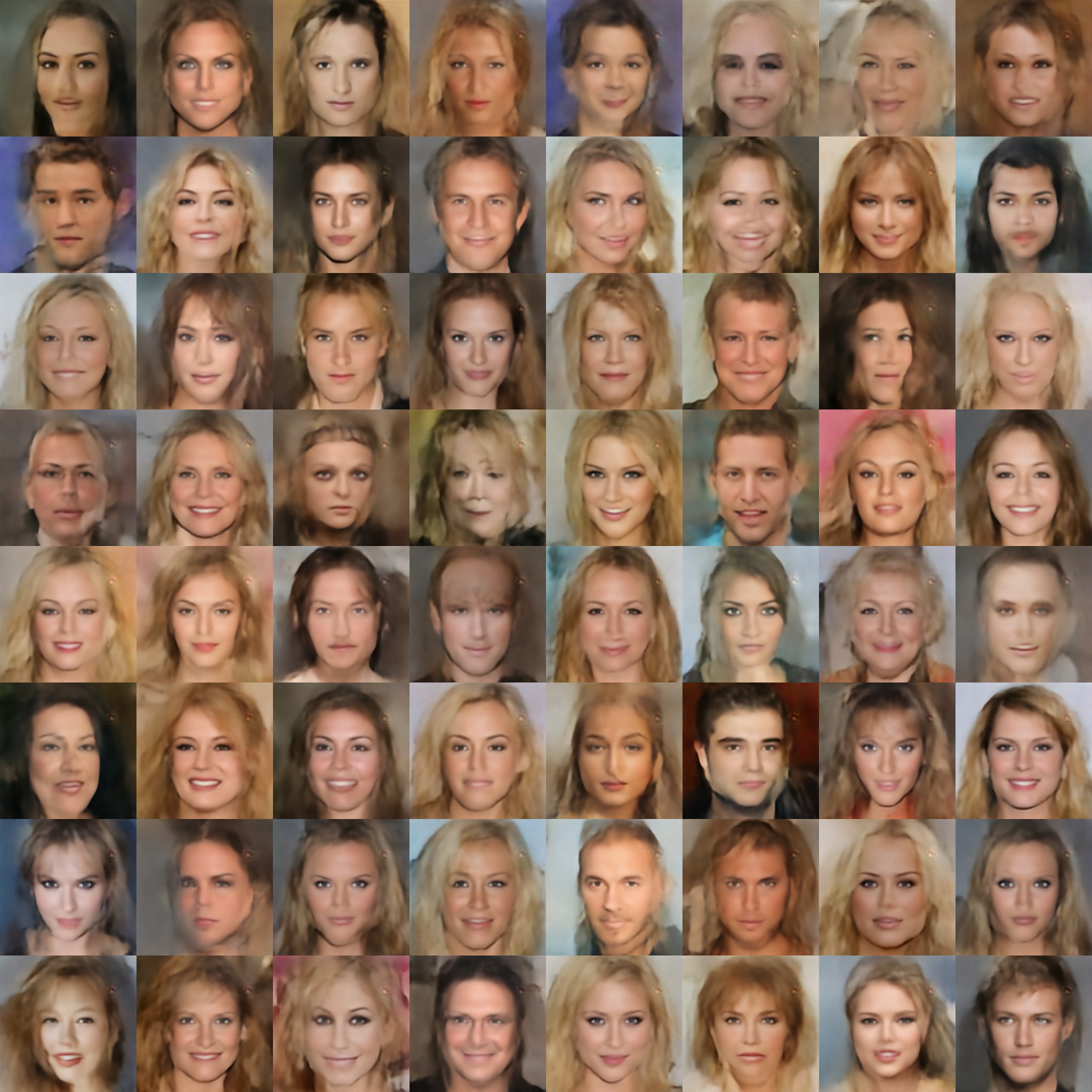} &
\includegraphics[width=0.24\textwidth]{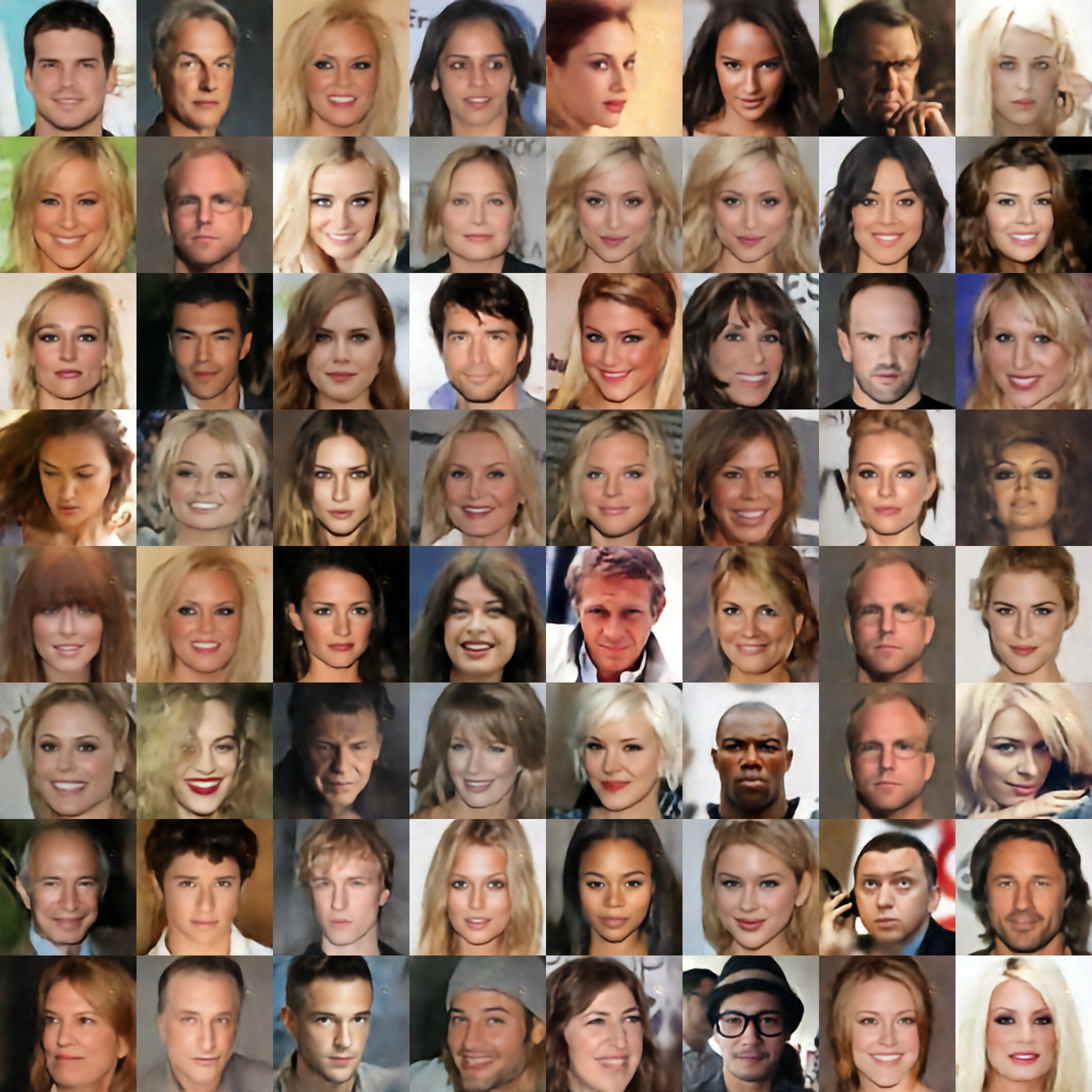} &
\includegraphics[width=0.24\textwidth]{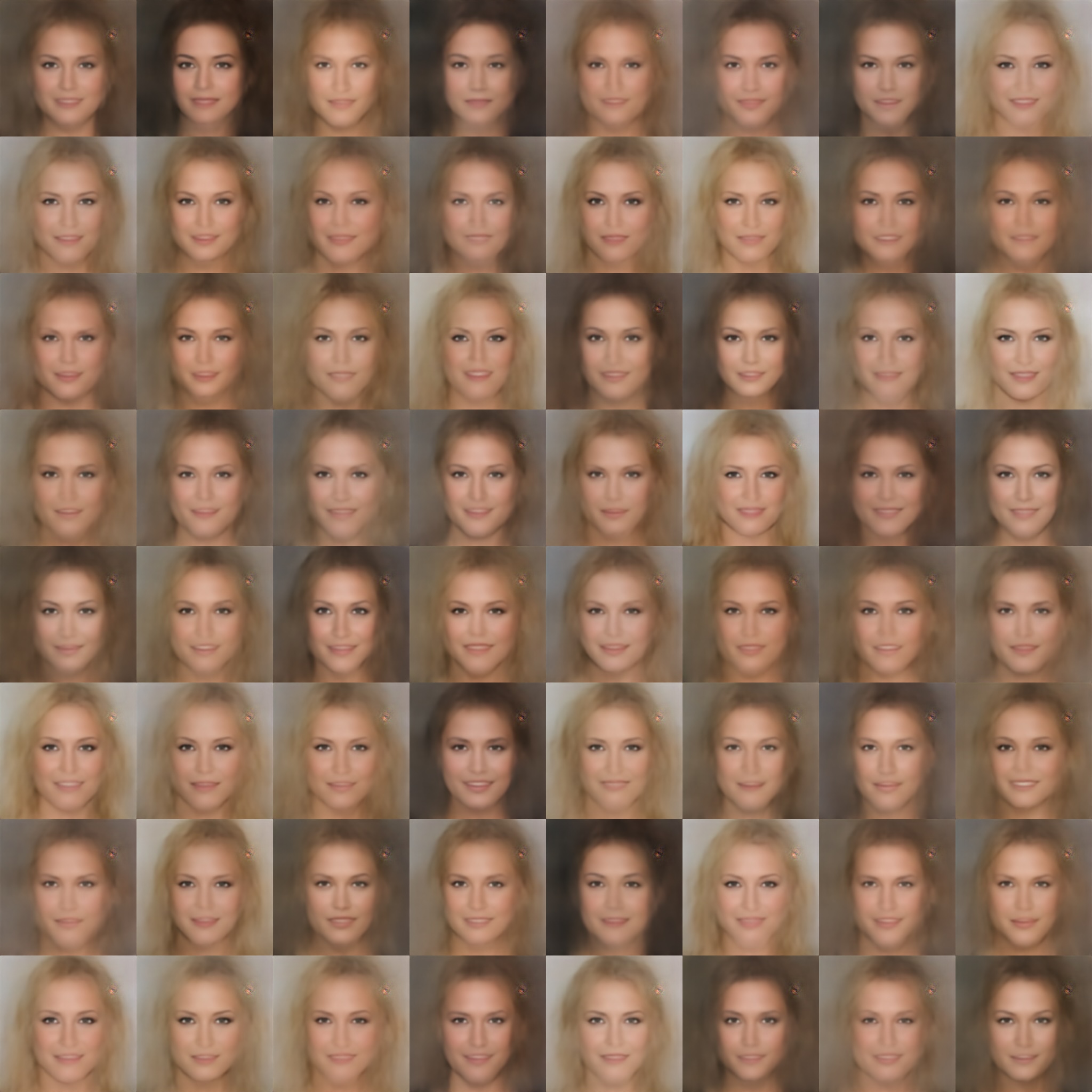} \\
{\scriptsize $M=2,\sigma=1.0$} &
{\scriptsize $M=2,\sigma=3.0$} &
{\scriptsize $M=32,\sigma=1.0$} &
{\scriptsize $M=32,\sigma=3.0$} \\
\multicolumn{4}{c}{\scriptsize $\sigma$-CFDM} \\[3pt]
\includegraphics[width=0.24\textwidth]{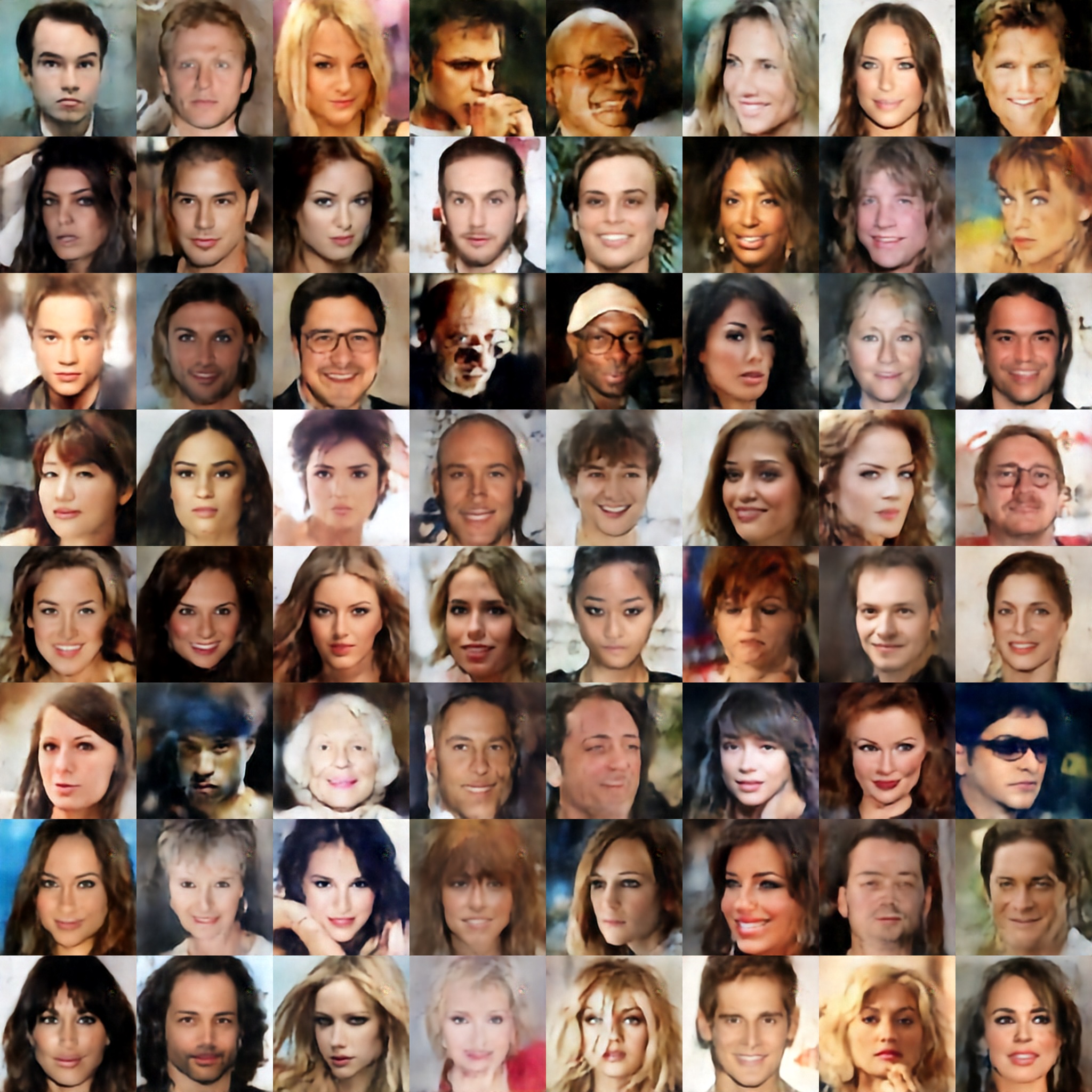} &
\includegraphics[width=0.24\textwidth]{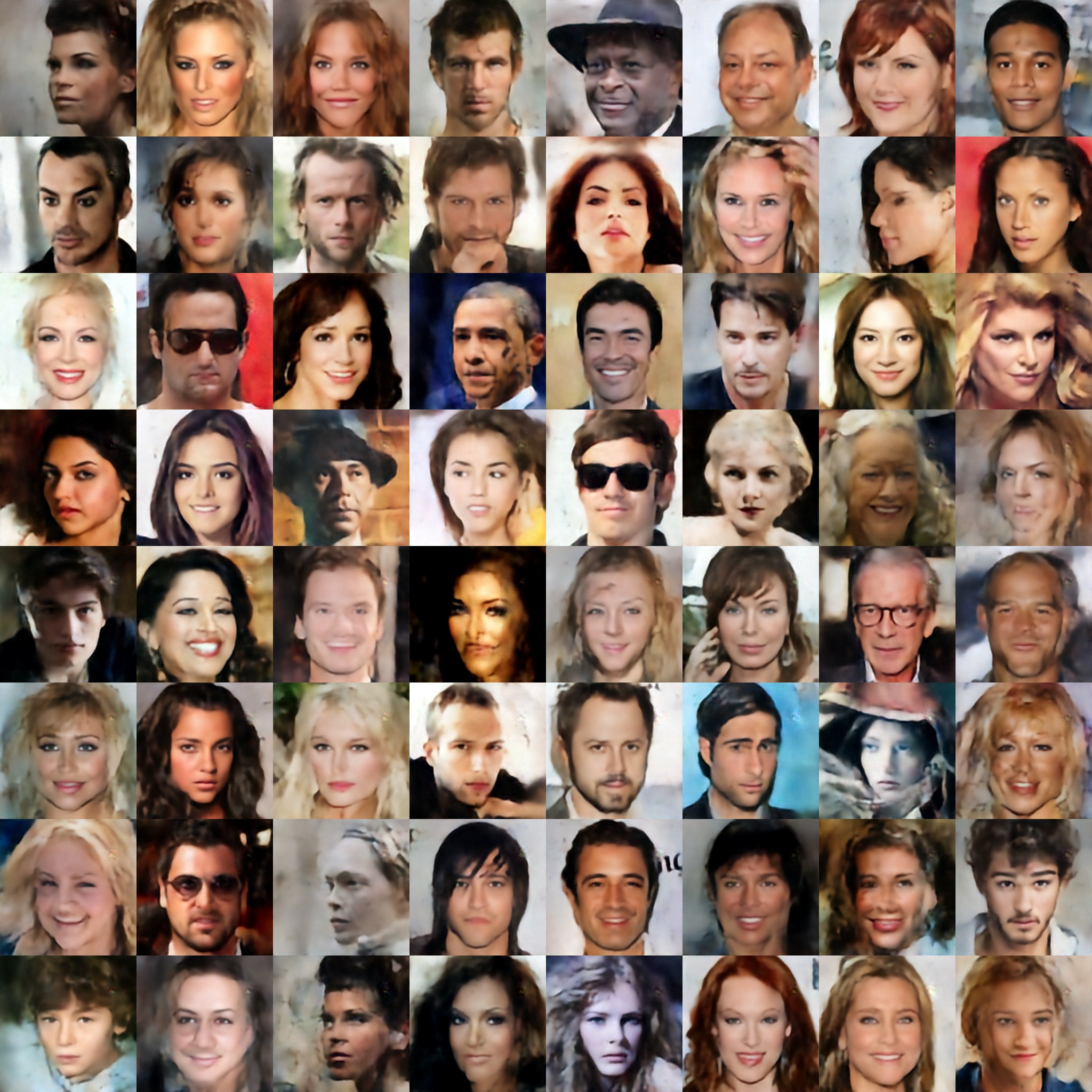} &
\includegraphics[width=0.24\textwidth]{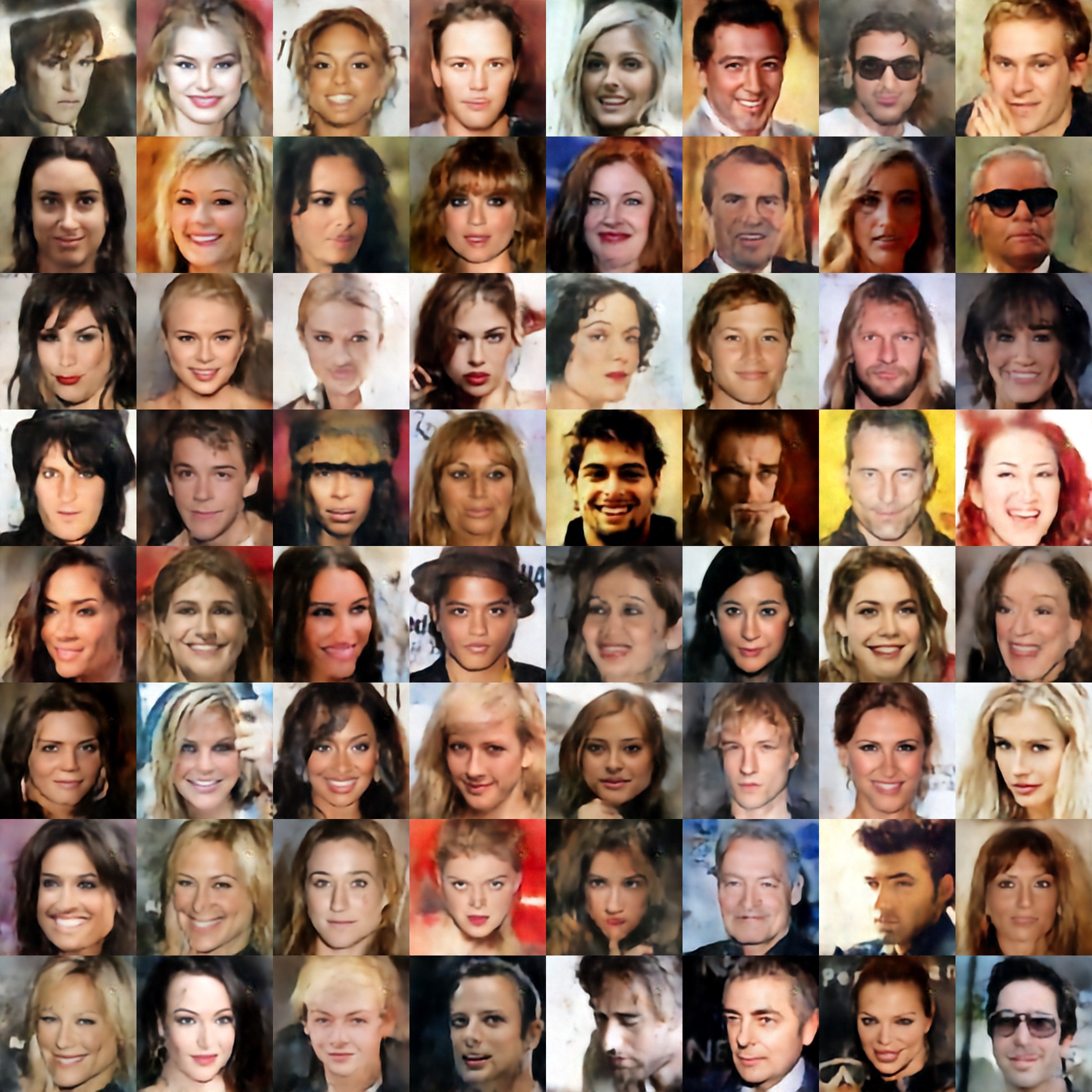} &
\includegraphics[width=0.24\textwidth]{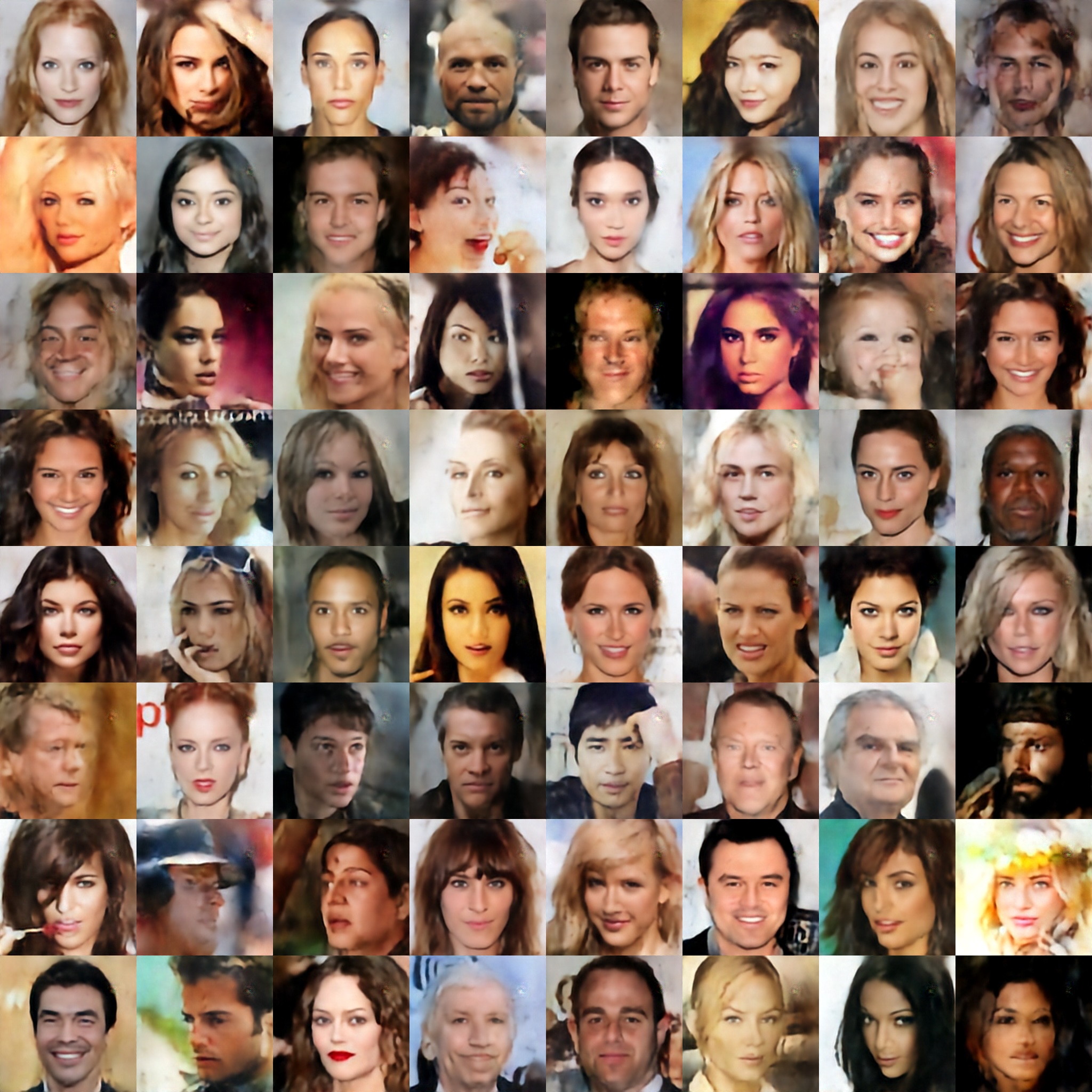} \\
{\scriptsize $M=2,\sigma=1.0$} &
{\scriptsize $M=2,\sigma=3.0$} &
{\scriptsize $M=32,\sigma=1.0$} &
{\scriptsize $M=32,\sigma=3.0$} \\
\multicolumn{4}{c}{\scriptsize MM-SOLD}
\end{tabular}
\caption{Additional sample grids for CelebA-HQ generation under different smoothing bandwidths $\sigma$ and Monte Carlo sample numbers $M$. Top row: $\sigma$-CFDM; bottom row: MM-SOLD.}
\label{fig:celebahq_appendix_grids}
\end{figure}

\begin{figure}[t]
\centering
\setlength{\tabcolsep}{0pt}
\renewcommand{\arraystretch}{0.9}
\makebox[\textwidth][c]{%
\begin{tabular}{@{}c@{\hspace{2pt}}c@{\hspace{2pt}}c@{\hspace{2pt}}c@{}}
\includegraphics[width=0.245\textwidth, trim=7 7 55 7, clip]{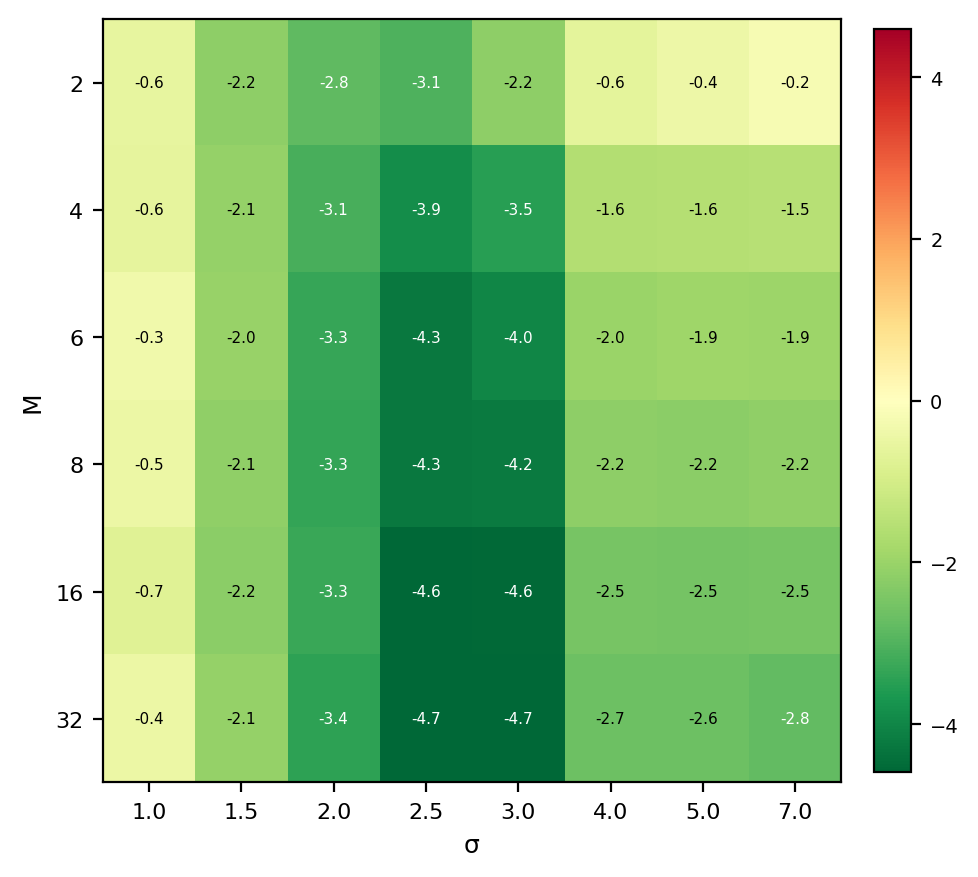} &
\includegraphics[width=0.245\textwidth, trim=7 7 55 7, clip]{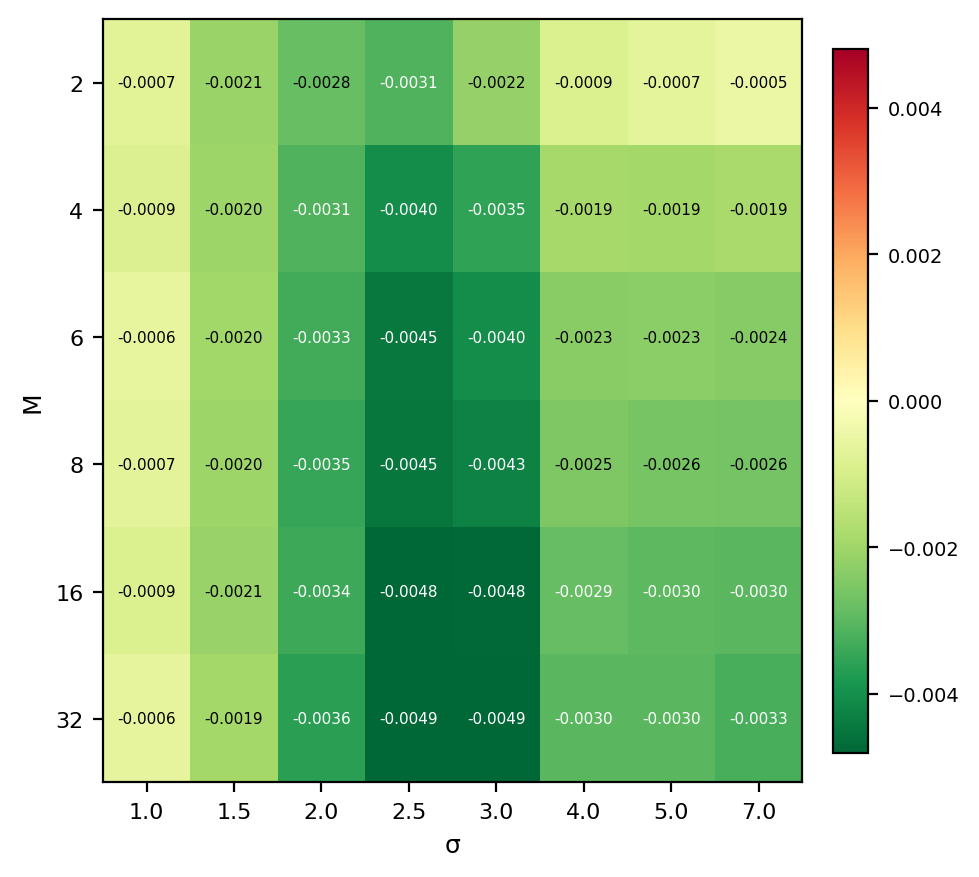} &
\includegraphics[width=0.245\textwidth, trim=7 7 55 7, clip]{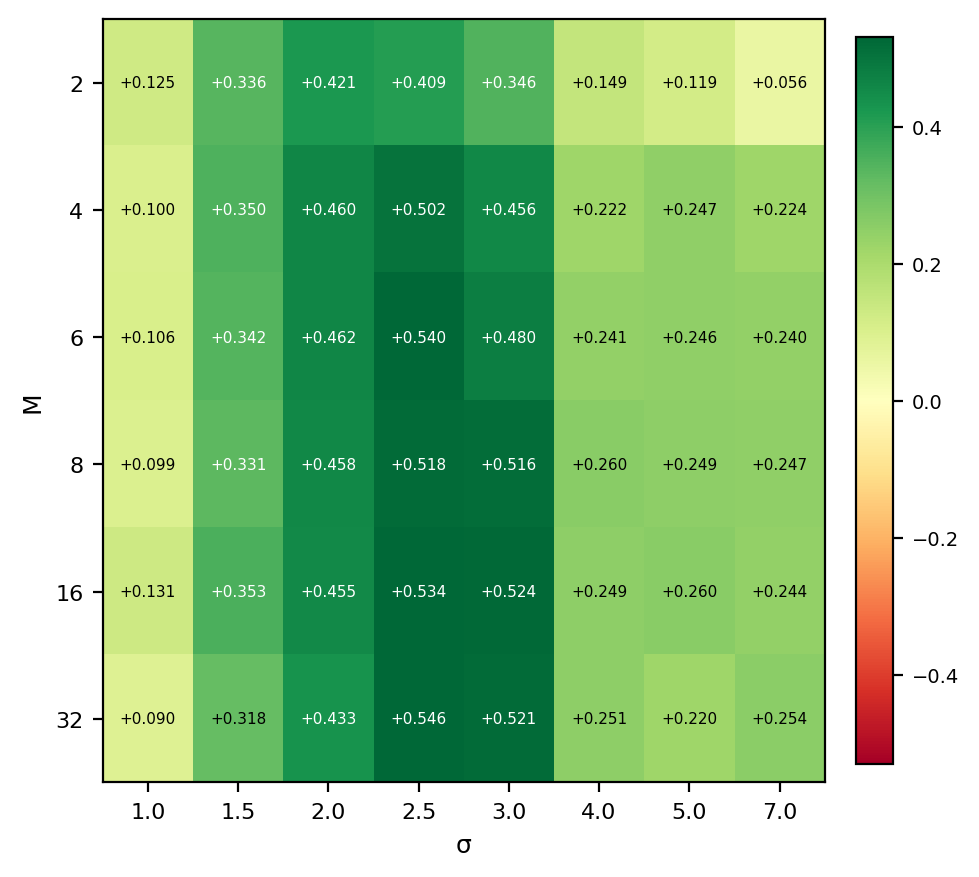} &
\includegraphics[width=0.245\textwidth, trim=7 7 55 7, clip]{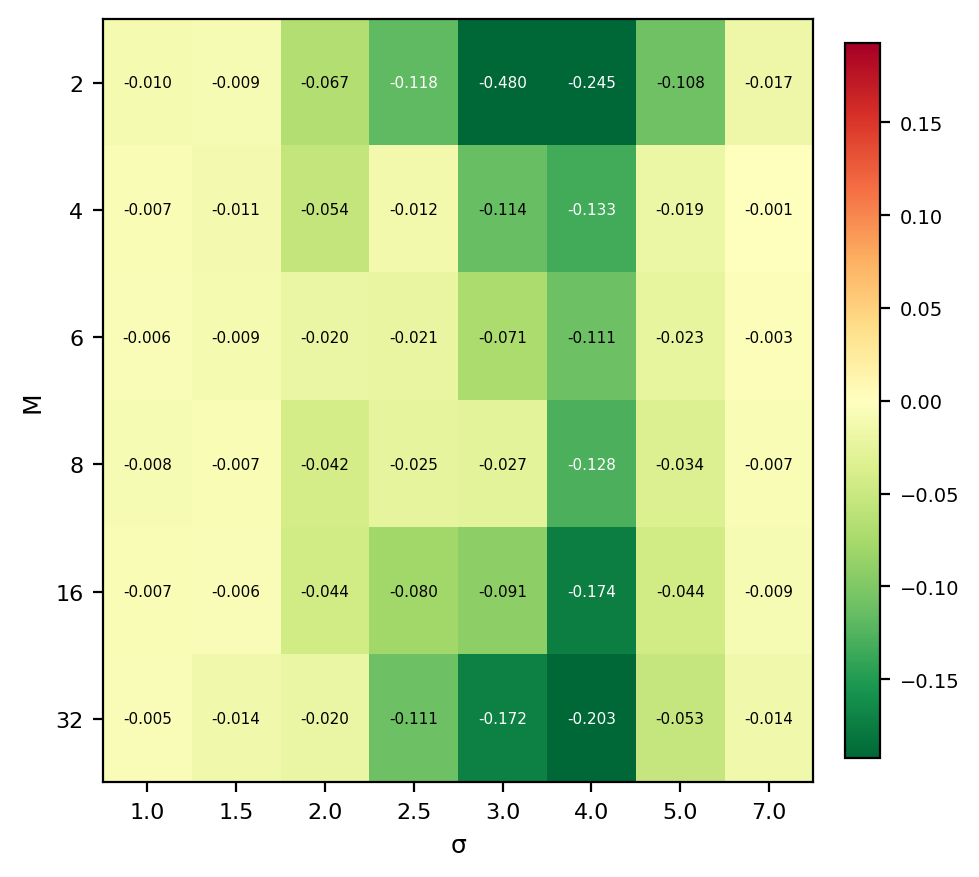} \\
{\scriptsize FID} &
{\scriptsize KID} &
{\scriptsize Recall} &
{\scriptsize DupRate}
\end{tabular}%
}
\caption{Heatmaps of the metric differences between MM-SOLD and $\sigma$-CFDM on CelebA-HQ over smoothing bandwidth $\sigma$ and Monte Carlo sample number $M$. Green indicates MM-SOLD is better, red indicates $\sigma$-CFDM is better, and yellow indicates comparable performance.}
\label{fig:celebahq_generation_heatmaps}
\end{figure}

\subsection{Ablation studies}
\label{app:ablation_studies}

\paragraph{Effect of step size and number of steps}
We study the sensitivity of MM-SOLD to the Langevin step size $h$ and number of steps $T$ using the 2D checkerboard distribution. We use the same setting as in Section \ref{sec:eval_on_2D}: $500$ training samples and $P=5{,}000$ particles, thereby generating $5{,}000$ samples. We fix $M=8$, smoothing bandwidth $\sigma=0.2$, and GMM component standard deviation $\delta=0.1$, and vary
$T\in\{1,5,10,25,50,100,200,500\}$ and
$h\in\{10^{-5},10^{-4},5\!\times\!10^{-4},10^{-3},2\!\times\!10^{-3},5\!\times\!10^{-3},8\!\times\!10^{-3}\}$.
We evaluate sliced Wasserstein-2 distance (SW2) \cite{bonneel2015sliced} with $512$ random projection directions both to reference samples from the target distribution and to the training set.

Figure \ref{fig:ablation_stepsize_steps} shows that MM-SOLD is robust over a broad range of $h$ and $T$. Since the particles are initialized from the training-data GMM, the sampler mixes quickly and achieves similar SW2-to-target values across most configurations, varying by less than $20\%$ (from $0.087$ to $0.107$). The performance degrades only when the step size or the number of steps is too small: particles do not move far from their initial GMM centers, leading to low SW2-to-train values. The stable range is consistent with the local Langevin step-size scale $h=O(\delta^2)$; here $\delta=0.1$, so $\delta^2=10^{-2}$, and all tested step sizes remain at or below this scale. A local Hessian calculation explaining this stability scale is given in Appendix \ref{app:proof_stepsize_bound}.

\begin{figure}[t]
\centering
\begin{minipage}{0.49\textwidth}
\centering
\includegraphics[width=\textwidth]{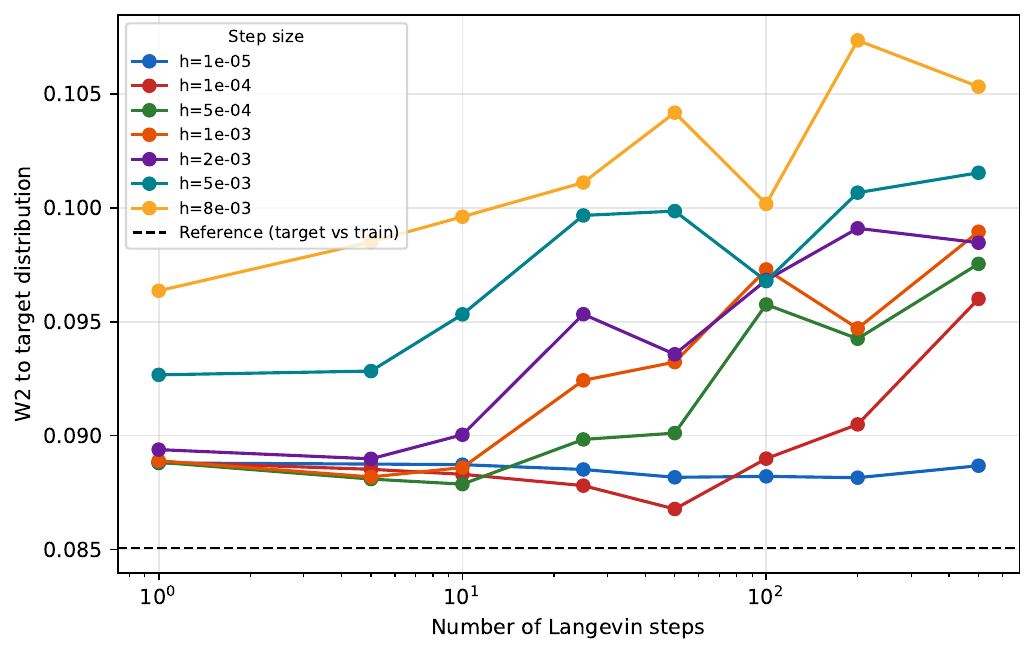}
\end{minipage}
\hfill
\begin{minipage}{0.49\textwidth}
\centering
\includegraphics[width=\textwidth]{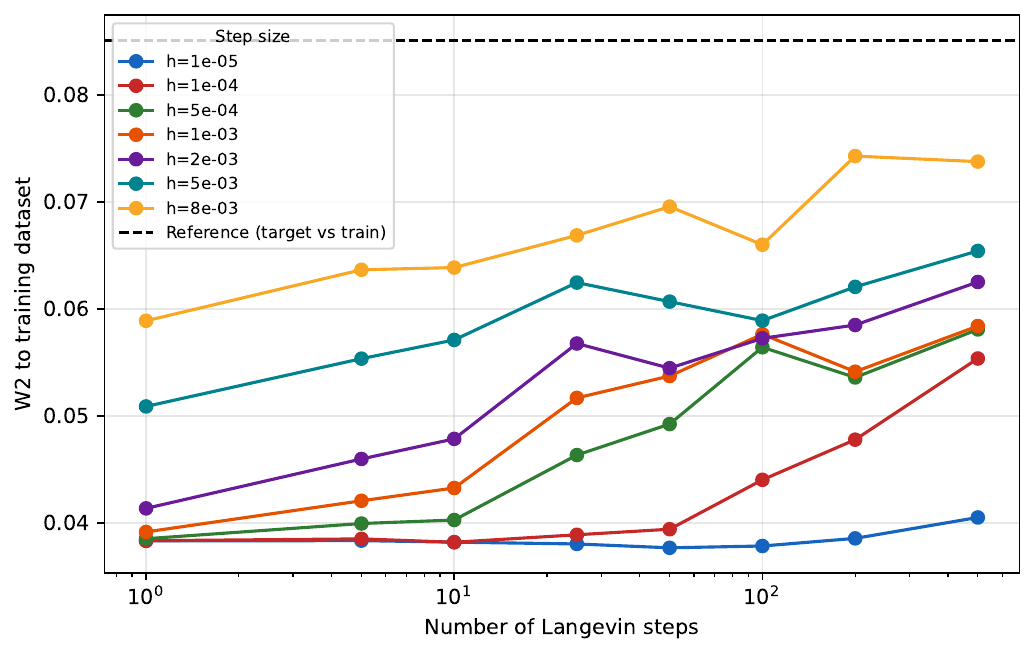}
\end{minipage}
\caption{Effect of Langevin step size $h$ and number of steps $T$ on MM-SOLD for the 2D checkerboard distribution. Left: SW2 to the target distribution. Right: SW2 to the training set. The dashed line is the finite-sample reference distance between the target reference samples and the training set.}
\label{fig:ablation_stepsize_steps}
\end{figure}

\paragraph{Effect of particle count and training-set size} 
We study how the number of particles and the training-set size affect the generation quality of MM-SOLD. We use the same handwritten digit-8 latent generation setting as in Section \ref{exp:handwritten_digit_generation}: samples are generated in the $100$-dimensional NRAE latent space, decoded by the fixed NRAE decoder, and evaluated by KID in the pretrained digit-classifier feature space. For the particle-count study, we fix $N=1{,}000$ and vary $P\in\{101,200,300,500,800,1{,}000,1{,}500,2{,}000\}$. For the training-set-size study, we fix $P=500$ and vary $N\in\{50,100,200,500,700,1{,}000\}$. As a reference, we compare with an independent kinetic Langevin sampler using a BAOAB integrator \cite{leimkuhler2013robust} for the explicit moment-matched target \eqref{eq:moment_matched_target}. This baseline estimates $(\lambda,\Lambda)$ from the training latents using \eqref{eq:tilting_parameter_identities}. Positions are initialized from the same training-data GMM as MM-SOLD, while momenta are initialized from $\mathcal N(0,I)$.

Figure \ref{fig:ablation_particle_train} shows the results. With $N=1{,}000$ fixed, MM-SOLD improves as the number of particles grows and overtakes kinetic Langevin around a few hundred particles. This matches the role of the particle system: the empirical moment constraint becomes a better approximation to the limiting moment-matched target as $P$ increases. By contrast, adding more independent kinetic Langevin chains does not improve the estimated target itself. When $P=500$ is fixed and $N$ varies, MM-SOLD is more reliable in the small-data regime. For example, at $N=100$, MM-SOLD obtains KID $0.150$, while kinetic Langevin gives KID $0.238$. This gap reflects the difficulty of estimating the linear and quadratic tilting parameters from sparse data. As $N$ increases, this estimation error decreases and the gap narrows.
\begin{figure}[t]
\centering
\begin{minipage}{0.49\textwidth}
\centering
\includegraphics[width=\textwidth]{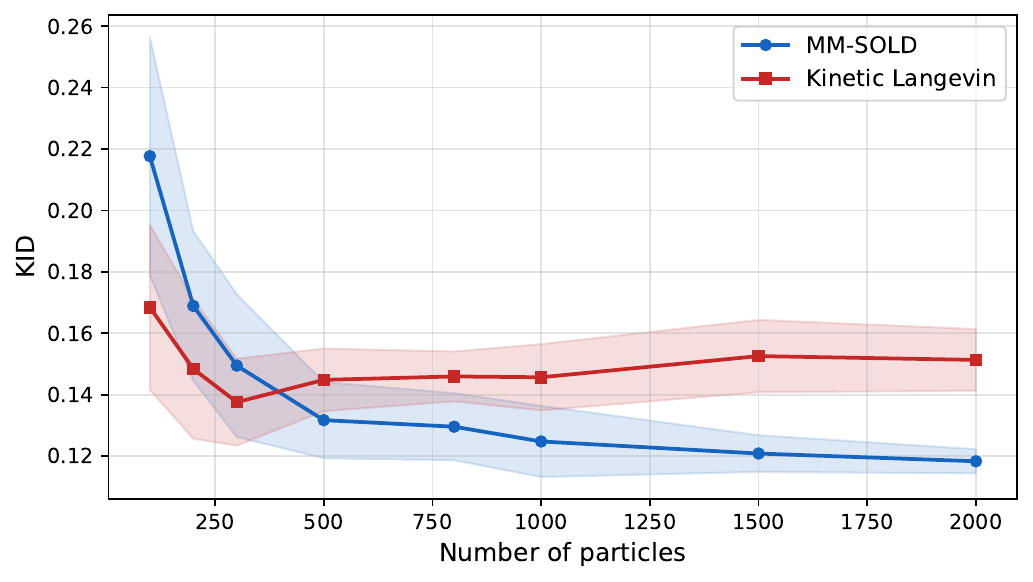}
\end{minipage}
\hfill
\begin{minipage}{0.49\textwidth}
\centering
\includegraphics[width=\textwidth]{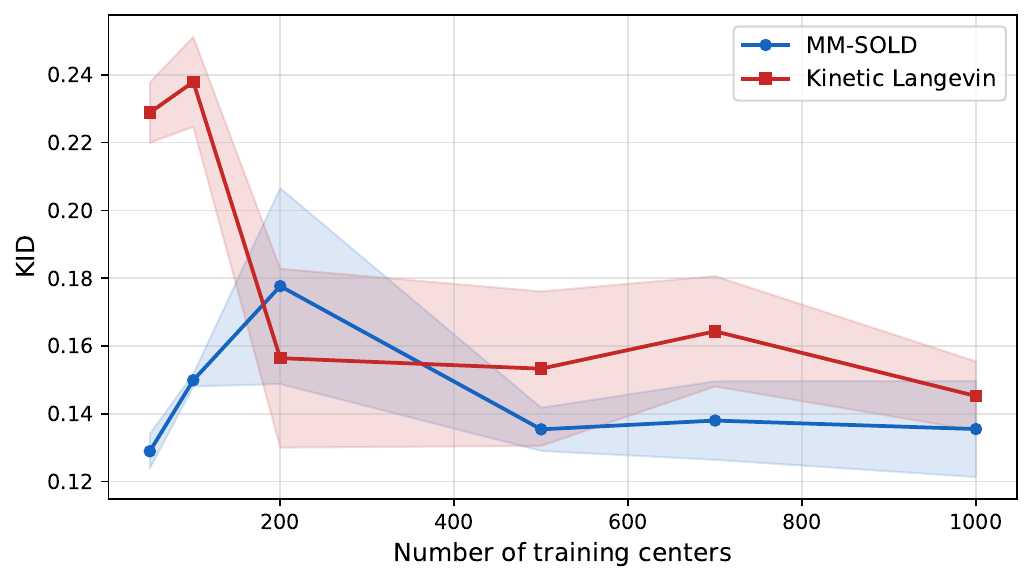}
\end{minipage}
\caption{Effect of particle count and training-set size on digit-8 generation. Left: KID versus the number of generated particles with $N=1{,}000$. Right: KID versus the number of training samples with fixed particle number. Kinetic Langevin denotes independent BAOAB sampling from the estimated moment-matched limiting target.}
\label{fig:ablation_particle_train}
\end{figure}


\end{document}